\documentclass{article}

\PassOptionsToPackage{round,authoryear}{natbib}

\usepackage[preprint]{neurips_2026}

\usepackage[utf8]{inputenc} 
\usepackage[T1]{fontenc}    
\usepackage{hyperref}       
\hypersetup{hypertexnames=false} 
\usepackage{url}            
\usepackage{booktabs}       
\usepackage{multirow}
\usepackage{graphicx}
\usepackage{amsfonts}       
\usepackage{nicefrac}       
\usepackage{microtype}      
\usepackage[table]{xcolor} 
\usepackage{algorithm}
\usepackage{algpseudocode}
\usepackage{amsmath}
\usepackage{amsthm, amssymb}
\usepackage{todonotes}
\usepackage{changes}

\usepackage{wrapfig}

\newtheorem{theorem}{Theorem}[section]
\newtheorem{lemma}[theorem]{Lemma}
\newtheorem{proposition}{Proposition}

\title{Accelerating Zeroth-Order Spectral Optimization with Partial Orthogonalization from Power Iteration}

\author{%
  Jiahe Chen, Ziye Ma \\
  Department of Computer Science\\
  City University of Hong Kong\\
  \texttt{cjiahe2-c@my.cityu.edu.hk, ziyema@cityu.edu.hk} \\
}

\begin{document}

\maketitle

\begin{abstract}
  Zeroth-order (ZO) optimization has become increasingly popular and important in fine-tuning large language models (LLMs), especially on edge devices due to its ability to adjust the model to local data without the need for memory-intensive back-propagation. Recent works try to reduce ZO variance through low-dimensional subspace search, but subspace restriction alone leaves key optimization geometry under-exploited, motivating additional acceleration. In this work, we focus on the hidden layer training problem in which spectral optimizers like Muon outperform AdamW due to its ability to exploit weak spectral directions by orthogonalization. However, we have discovered that unlike in the first-order setting, full orthogonalization works poorly in the ZO setting since the gradient estimates are highly noisy and unreliable. To address this issue, we propose applying partial spectral orthogonalization to accelerate ZO optimization. To do so, we replace the iconic Newton-Schulz procedure in Muon with the faster, more concentrated power-iteration method so that it only amplifies dominant spectral directions. Furthermore, to improve the efficiency and generalization of the algorithm, we adopted a streaming variant of power-iteration that requires low variance in gradients, which was achieved through constraining our search inside a subspace obtained through the projection of momentum, echoing recent advances. Experiments on LLM fine-tuning show that our method can achieve from 1.5x to 4x the convergence speed of ZO-Muon, the current SOTA algorithm, across SuperGlue datasets in the OPT-13B model. Across different models, we also reach competitive final accuracies with less time in most cases compared with strong ZO baselines such as MeZO, LOZO and ZO-Muon. Code is
available at https://github.com/MOFA-LAB/ZO-MOPI.git.
\end{abstract}

\section{Introduction}
\label{sec:intro}
Large language models (LLMs) have achieved strong performance across a wide range of tasks. Fine-tuning is a standard way to adapt pretrained LLMs to downstream applications~\citep{gururangan2020don,sanh2021multitask}. Most existing fine-tuning pipelines rely on first-order (FO) optimization methods such as Adam~\citep{kingma2014adam} or its variants, but given that modern models are so huge, the dynamic memory required~\citep{malladi2023fine,zhang2024revisiting} to perform even small batch back-propagation is out of reach for most edge devices like autonomous vehicles and smartphones, and are usually reserved for data centers. To circumvent this issue, more practitioners are evaluating the possibility to perform zeroth-order (ZO) optimization instead, since it only relies on forward passes, enabling memory-constrained devices to fine-tune their models locally to new data. Despite recent progress, ZO optimizers still remain to be significantly slower than their FO counterparts, demanding more attention and in-depth studies.

ZO methods that have been studied over the past few decades were mainly studied for black-box optimization problems where explicit gradient information is unavailable or inaccessible~\citep{liu2020primer,chen2017zoo,ilyas2018black}. Recently, \cite{malladi2023fine} showed that ZO methods can also be effective for LLM fine-tuning through their famous MeZO algorithm. However, ZO methods estimate gradients through Monte Carlo perturbations rather than direct backpropagation, and the variance of these estimates grows with the number of parameters~\citep{nesterov2017random,duchi2015optimal}. In the LLM regime, where models often contain billions of parameters, vanilla ZO gradient estimates can therefore become highly noisy without additional variance reduction. This challenge is especially pronounced when applying spectral optimizers such as Muon~\citep{jordan2024muon} to hidden-layer weights. Unlike Adam-style methods, Muon does not rely on second-moment statistics; instead, it uses orthogonalization to exploit the eigenspaces of matrix-valued gradients. As a result, its effectiveness depends critically on the quality of the momentum and gradients used in the algorithm. Figure~\ref{fig:motivation_singular_values} illustrates this issue: even at the same point, the spectrum of the ZO-estimated gradient differs substantially from that of the true gradient. This mismatch helps explain why direct applications of Muon in the ZO setting, such as~\citep{petrov2025leveraging}, yield limited gains. In particular, Muon's Newton–Schulz orthogonalization equalizes energy across spectral directions; when applied to noisy ZO estimates, it can inadvertently amplify inaccurate directions, turning estimation error into harmful update components.
\begin{figure}[t]
    \centering
    \begin{tabular}{cc}
        \includegraphics[width=0.4\linewidth]{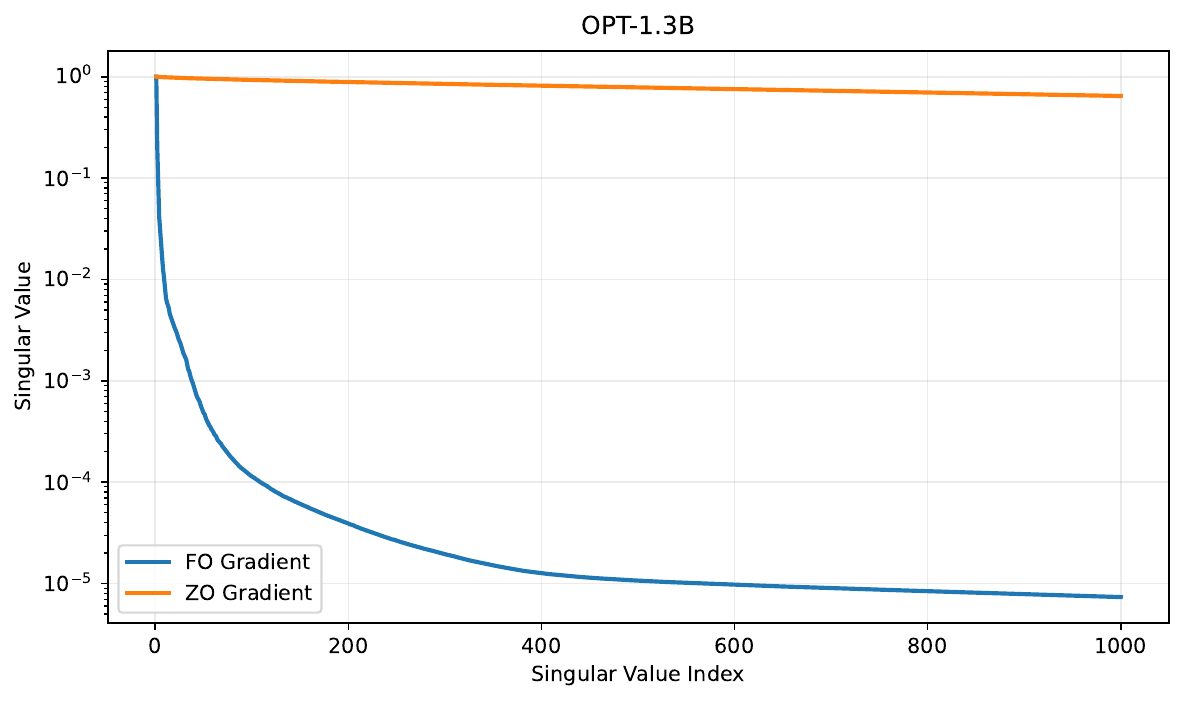} &
        \includegraphics[width=0.4\linewidth]{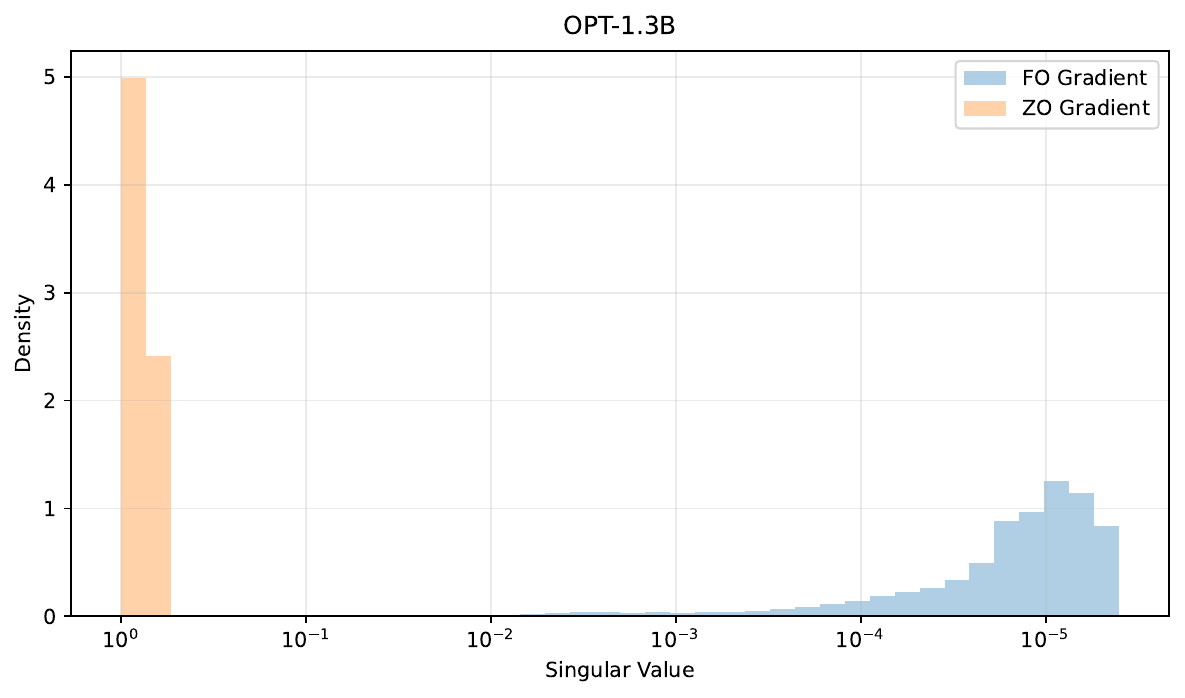} \\
        \small (a) Singular-value spectrum &
        \small (b) Singular-value density
    \end{tabular}
    \caption{Comparison of FO and ZO gradient spectra on the \texttt{fc2} weight of the first decoder block in OPT-1.3B, computed on the same mini-batch at the same parameter point.  FO gradients are concentrated in dominant directions, while ZO estimates contain a heavier and flatter spectral tail.}
    \label{fig:motivation_singular_values}
\end{figure}
These observations motivate a more selective spectral alignment for ZO optimization. Instead of orthogonalizing the entire active spectrum like Newton--Schulz in traditional Muon, we aim to preserve the reliable spectral components while avoiding amplification of noise-dominated directions. This is also supported by recent observations on spectral anisotropy in LLM fine-tuning~\citep{huang2026spectra}: different singular directions in low-rank gradients have different robustness to noise. Therefore we adopt partial orthogonalization, which only orthogonalizes a small number of dominant singular directions while suppressing most of the remaining noisy directions.

We implement partial orthogonalization using a technique known as Streaming Power Iteration (SPI)~\citep{huang2026spectra,kexuefm-11654}, a classic online algorithm that aims to efficiently extract singular directions of a matrix. To cater to our needs, we further adopted a version of SPI that only computes the top-$k$ singular spaces, while discarding the other nosier, less reliable singular directions that are prevalent in ZO optimization. Furthermore, to improve the efficiency and generalization of the algorithm, we constrain our ZO sampling only to a subspace and maintain a projected momentum within the subspace. The subspace estimator reduces the effective search dimension and provides gradients with lower variance for spectral tracking. The projected momentum extracts the most important directional information in the past gradient estimates with negligible memory overhead, and provides the stability needed by SPI, since it makes the dominant gradient directions evolve more smoothly across iterations. As an early illustration of our method’s potential, Figure~\ref{fig:intro_gemma} shows ZO fine-tuning results for Gemma2-2B on several SuperGLUE tasks~\citep{wang2019superglue,team2024gemma}. Partial orthogonalization consistently accelerates training with negligible additional overhead, suggesting that spectral structure can be better exploited even under noisy ZO estimates. We will provide the full methodology and a more comprehensive empirical evaluation in later sections. 

\begin{figure}[t]
    \centering
    \setlength{\tabcolsep}{1pt}
    \begin{tabular}{@{}ccc@{}}
        \includegraphics[height=0.19\linewidth]{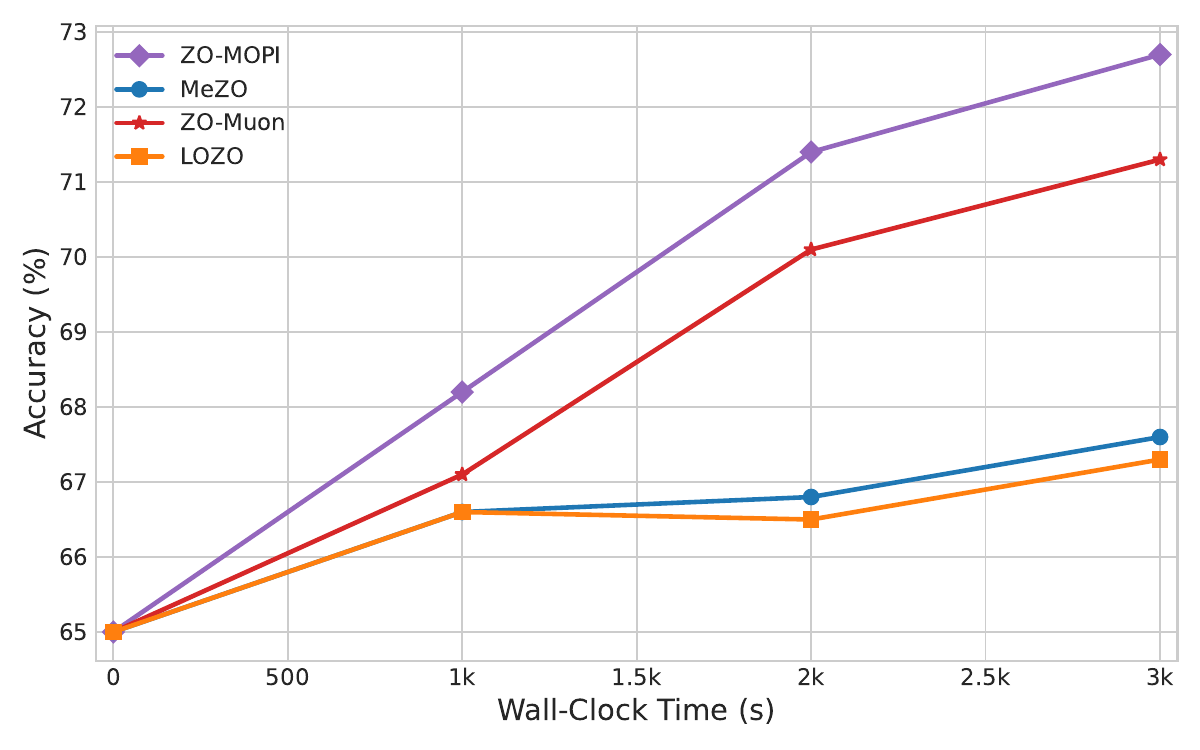} &
        \includegraphics[height=0.19\linewidth]{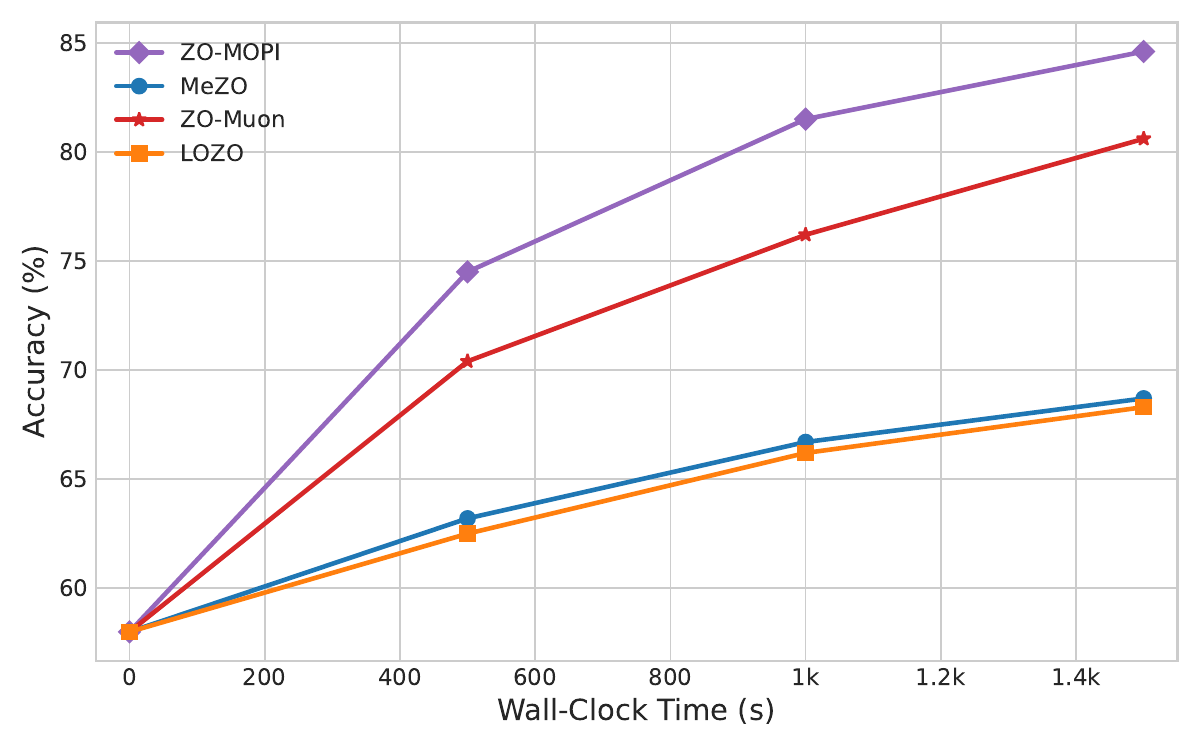} &
        \makebox[0.37\linewidth][c]{\includegraphics[height=0.21\linewidth]{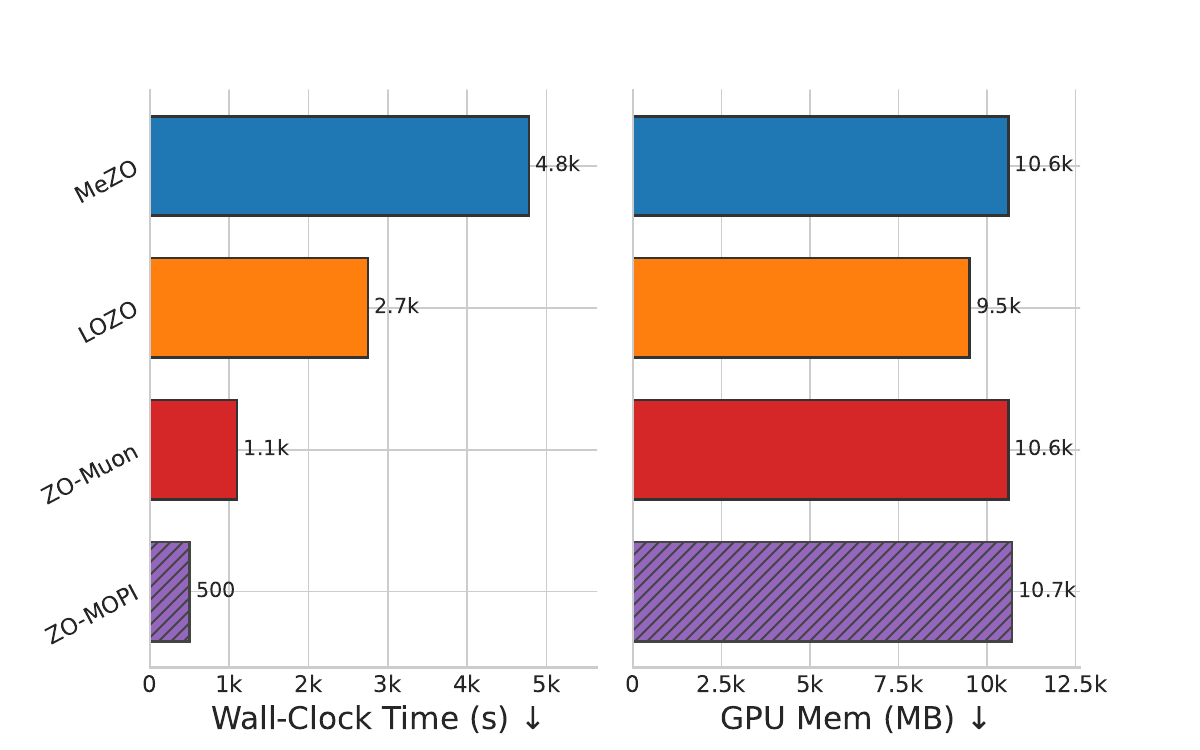}} \\
        \multicolumn{2}{c}{\makebox[0.61\linewidth][c]{\footnotesize (a) LLM fine-tuning: Gemma2-2B on BoolQ/SQuAD}} &
        \makebox[0.37\linewidth][c]{\footnotesize (b) Efficiency comparison: Gemma2-2B on SST-2}
    \end{tabular}
    \caption{Gemma2-2B fine-tuning comparison. Panel (a) reports test accuracy versus wall-clock time on BoolQ and SQuAD. Panel (b) compares wall-clock time and peak memory on SST-2, achieving 92\% test accuracy. Our method reaches strong accuracy faster while preserving the memory efficiency of ZO fine-tuning.}
    \label{fig:intro_gemma}
\end{figure}

\section{Related Work}
\label{sec:related}

\paragraph{Zeroth-order optimization in deep model training.}
Zeroth-order (ZO) optimization estimates gradients via finite differences of function evaluations, without requiring explicit FO gradient computations~\citep{ghadimi2013stochastic,duchi2015optimal,nesterov2017random,liu2020primer}. Early ZO studies in deep learning mainly focused on black-box optimization such as adversarial attack and defense in small-scale black-box models, where gradients are unavailable or expensive to compute~\citep{chen2017zoo,ilyas2018black,cheng2019improving,tu2019autozoom}. More recently, \citep{malladi2023fine} applied ZO-SGD~\citep{ghadimi2013stochastic} in LLM fine-tuning, which proves that ZO methods can also be effective in training of large models in constrained environments. Since then, the application of ZO methods has increasingly focused on memory-efficient training in LLMs~\citep{zhang2024revisiting,liu2024sparse,chen2024enhancing,zhao2024second,yu2025zeroth}.

\paragraph{Variance reduction methods in ZO optimization.}
A central challenge in ZO optimization is the high variance of gradient estimates, which becomes especially severe for large-scale models. Prior work has reduced this variance through control-variate methods, adaptive momentum, second-order preconditioning, and low-dimensional subspace estimation~\citep{gautam2024variance,zhao2024second,chen2024enhancing,shu2025refining,yu2025zeroth,lang2026powering}. Among them, most methods will cause large additional memory or runtime overhead, limiting their application in resource-constrained settings. Subspace ZO methods are particularly attractive for resource-constrained training because they exploit the low-rank structure commonly observed in LLM gradients. LOZO~\citep{chen2024enhancing} derives the ZO gradient using a low-rank perturbation matrix, ensuring that the approximated gradient retains a low-rank structure. SubZero~\citep{yu2025zeroth} generates a low-rank perturbation matrix by combining two column-orthogonal matrices with a Gaussian random matrix, which is then used for gradient estimation. By restricting random perturbations to a low-dimensional subspace, they reduce the effective search dimension and thus lower the variance of ZO gradient estimates with little additional memory overhead. Both LOZO and SubZero only apply subspace ZO estimation within the traditional SGD framework. Beyond this, subspace ZO can be combined with more sophisticated optimizers, as demonstrated by ZO-Muon~\citep{lang2026powering}, which integrates it with Muon's gradient orthogonalization~\citep{jordan2024muon} for improving the trade-off between accuracy and query efficiency.

\paragraph{Spectral optimization methods.}
Traditional optimizers such as Adam~\citep{kingma2014adam} are vector-based and operate on neural network weights in an element-wise manner, which may under-utilize matrix structure. To address this limitation, a growing line of work explores structure-aware optimization strategies that leverage matrix geometry and spectral information. Eigenspectrum-aware preconditioned optimizers such as Shampoo~\citep{gupta2018shampoo} and SOAP~\citep{vyas2024soap} rescale gradients using matrix inverse-root preconditioners or updates in the preconditioner's eigenbasis, leading to improved convergence in large-scale training. More recently, Muon~\citep{jordan2024muon} has gained popularity as a lightweight alternative, using Newton--Schulz (NS) iterations to perform gradient orthogonalization with minimal overhead, achieving strong performance in FO training~\citep{liu2025muon}. However, directly adapting Muon to ZO optimization has shown limited gains over strong ZO baselines in LLM fine-tuning~\citep{petrov2025leveraging}.


\section{Preliminaries}
\label{sec:preliminaries}

\paragraph{Notations}
We use bold uppercase letters to denote matrix-valued parameters and perturbations. For a matrix $\mathbf{X}\in\mathbb{R}^{m\times n}$, $\mathbf{X}^\top$ denotes its transpose, $\|\mathbf{X}\|_F$ denotes its Frobenius norm, and $\operatorname{rank}(\mathbf{X})$ denotes its rank. Let the compact singular value decomposition (SVD) of $\mathbf{X}$ be $\mathbf{X}=\mathbf{U}\mathbf{\Sigma}\mathbf{V}^\top$, where $\mathbf{U}\in\mathbb{R}^{m\times r}$, $\mathbf{V}\in\mathbb{R}^{n\times r}$, $r=\operatorname{rank}(\mathbf{X})$, and $\mathbf{\Sigma}=\operatorname{diag}(\sigma_1,\ldots,\sigma_r)$ with $\sigma_1\geq\cdots\geq\sigma_r>0$. We use $\mathbf{U}_{[:,1:k]}$ and $\mathbf{V}_{[:,1:k]}$ to denote the top-$k$ dominant singular vectors, with $k\leq r$.

\paragraph{Zeroth-order (ZO) optimization}
We consider the generic optimization problem $ \min_{\mathbf{X} \in \mathbb{R}} f(\mathbf{X})$, where $f(\cdot)$ is the training objective function and $\mathbf{X}=\{X_\ell\}_{\ell=1}^L$ denotes the trainable parameters. In deep learning, we express hidden-layer matrix parameters as $X_\ell \in \mathbb{R}^{m_\ell \times n_\ell}$, where $\ell$ denotes the number of layers. ZO methods typically adopt randomized gradient estimation (RGE), where the perturbation matrix is sampled either from a Gaussian distribution or uniformly from the sphere \citep{duchi2015optimal,nesterov2017random}. For a matrix weight parameter $\mathbf{X} \in \mathbb{R}^{m \times n}$, the generic form of RGE estimator is
\begin{equation}
\hat{\nabla}_{\mathbf X} f(\mathbf{X};\xi)
:=
\frac{1}{N}\sum_{i=1}^{N}\frac{F(\mathbf{X}+\mu \mathbf{Z}^{i};\xi)-F(\mathbf{X}-\mu \mathbf{Z}^{i};\xi)}{2\mu}\mathbf{Z}^{i},
\label{eq:rge_estimator}
\end{equation}
where $\mathbf{Z}_{i} \in \mathbb{R}^{m \times n}$ is the $i$-th random perturbation matrix, $\mu > 0$ is the smoothing parameter, and $N$ is the number of queries. Since $\mathbf{Z}_{i}$ is sampled in the ambient space, the ZO gradients are nearly full-rank, which is poorly matched to the low-rank structure of FO gradients often observed in LLM fine-tuning \citep{chen2024enhancing,huang2026spectra}. Furthermore, the full-space perturbation causes high variance in ZO gradient estimator, which scales as $\mathcal{O}(mn/N)$~\citep{nesterov2017random,liu2020primer}, causing slow convergence in ZO methods.

\paragraph{Subspace ZO estimation}
To solve the problem, a growing body of work has focused on subspace ZO methods. In ZO gradient estimator, they replace full-rank perturbations with low-rank matrix perturbations~\citep{chen2024enhancing,yu2025zeroth,lang2026powering}. The key to such modification is to explicitly decompose our random perturbation $\mathbf{Z}_{i}$ as $\mathbf{Z}_{i} = \mathbf{A}\mathbf{B}^{(i)}$, where both $\mathbf{A} \in \mathbb{R}^{m \times r}$ and $\mathbf{B}^{(i)} \in \mathbb{R}^{r \times n}$ are low-rank matrices at most rank $r$. These matrices could be obtained from the Gaussian sampling~\citep{chen2024enhancing} or singular space decomposition~\citep{yu2025zeroth,lang2026powering}, with $r \ll \min\{m,n\}$. Therefore the overall perturbation $\mathbf{A}\mathbf{B}^{(i)}$ is low-rank, which ensures that the variance of ZO estimation now scales with $r$ instead of $m$ or $n$~\citep{lang2026powering}.

\paragraph{Muon}
Muon~\citep{jordan2024muon} is a FO optimizer designed for matrix-valued neural network parameters. For a matrix parameter $\mathbf{X}_t$ and its stochastic gradient $\mathbf{G}_t$, Muon forms a momentum update and then orthogonalizes it before applying the parameter update:
$
    \mathbf{M}_t = \beta \mathbf{M}_{t-1} + (1-\beta)\mathbf{G}_t,
    \
    \mathbf{X}_{t+1} = \mathbf{X}_t - \eta_t \operatorname{msign}(\mathbf{M}_t).
$
Here $\operatorname{msign}(\cdot)$ denotes the matrix sign, or equivalently the polar factor of a matrix. Specifically, if $\mathbf{M}_t=\mathbf{U}_t\mathbf{\Sigma}_t\mathbf{V}_t^\top$ is the compact SVD of $\mathbf{M}_t$, then
\begin{equation}
    \operatorname{msign}(\mathbf{M}_t) := \mathbf{U}_t \mathbf{V}_t^\top.
\label{eq:msign_def}
\end{equation}
Thus, Muon preserves the singular directions of the momentum while replacing its nonzero singular values by one. In practice, Muon approximates $\operatorname{msign}(\mathbf{M}_t)$ using a small number of Newton--Schulz (NS) iterations, avoiding explicit SVD computation.


\section{Method}
\label{sec:method}


In this section, we present our complete algorithm, ZO-MOPI, which combines ZO subspace estimation, projected momentum, and Streaming Power Iteration (SPI). Our version of SPI is inspired by the cached power-iteration SVD in \cite{huang2026spectra}, and many variants of this classic idea have been applied as common orthogonalization methods for efficient FO spectral optimization \citep{liu2025cosmos,ahn2025dion,huang2026spectra,gong2026aronewlensmatrix,kexuefm-11654}. In particular, \cite{kexuefm-11654} coined the term "streaming power iteration", so we also adopt the same naming here to ensure consistency. However, in the ZO setting, we introduce two additional practical components that make SPI effective: projected momentum, which stabilizes the tracked spectral subspace over time, and lazy subspace sampling, which preserves the temporal continuity needed by the warm-started iteration. The detailed procedures for ZO-MOPI and SPI are described in Algorithm~\ref{alg:main_optimizer} and Algorithm~\ref{alg:streaming_pi}, respectively.

Before we discuss the technical details, we first hope to highlight the motivation behind our approach. Figure~\ref{fig:motivation_singular_values} highlights why ZO optimization needs a different kind of spectral acceleration. In LLM fine-tuning, FO gradients often exhibit a compact spectrum, with most energy concentrated in a few dominant singular directions. However, ZO estimates spread substantial energy into the spectral tail due to Monte Carlo noise. Thus, the issue is not merely that ZO gradients are noisy, but that their noise is spectrally structured in a way that can mislead FO-style spectral optimizers.


To address this issue, we implement a specific kind of SPI that focuses only on the top-$k$ dominant singular directions. Following the msign notation from Section~\ref{sec:preliminaries}, we define:
\begin{equation}
    \operatorname{msign}_k(\mathbf{G}) = \mathbf{U}_{[:,1:k]}\mathbf{V}_{[:,1:k]}^{\top},
\end{equation}
where $\mathbf{U}_{[:,1:k]}$ and $\mathbf{V}_{[:,1:k]}$ denote the top-$k$ left and right singular vectors. Therefore, the parameter $k$ becomes a hyper-parameter that can be tuned based on how reliable the gradient estimate is (ablations in Appendix~\ref{app:hyper_study}). Combined with subspace ZO estimation and projected momentum, SPI operates on smoother, lower-variance signals, enabling stable spectral acceleration without amplifying ZO noise.


\paragraph{How the ZO estimator is constructed.}
To preliminarily reduce variance and filter out non-informative noise in ZO estimates, we integrate SPI with a subspace ZO estimator. Recall the formulation of ZO estimator in Equation~\ref{eq:rge_estimator} and we modify it as follows:
\begin{equation}
    \hat{\mathbf{G}}_t = \frac{1}{N}\sum_{i=1}^{N}\frac{F(\mathbf{X}_t + \mu \mathbf{A}\mathbf{B}_t^{(i)};\xi) - F(\mathbf{X}_t - \mu \mathbf{A}\mathbf{B}_t^{(i)};\xi)}{2\mu}\mathbf{B}_t^{(i)}, 
\label{eq:spi_update_multi}
\end{equation}
where $\mathbf{A} \in \mathbb{R}^{m \times r}$ denote the current subspace matrix resampled every $\nu$ iterations and  $\mathbf{B}_t^{(i)} \in \mathbb{R}^{r \times n}$ is the perturbation matrix sampled independently at each iteration from a Gaussian distribution. Let $\hat{\mathbf{G}}_t \in \mathbb{R}^{r \times n}$ denote the corresponding subspace gradient estimator produced in the space of $\mathbf{B}_t^{(i)}$. In our method, for memory efficiency, we do not accumulate momentum in the full $\mathbb{R}^{m \times n}$ space, and instead use the $\mathbb{R}^{r \times n}$ space by only averaging over $\hat{\mathbf{G}}_t$:
\begin{equation}
    \mathbf{M}_t = \beta \mathbf{M}_{t-1} + (1-\beta)\hat{\mathbf{G}}_t,
\label{eq:subspace_momentum}
\end{equation}
where ${\mathbf{M}}_t \in \mathbb{R}^{r \times n}$ is the low-rank momentum maintained in the subspace, adding only $\mathcal{O}(rn)$ memory overhead. Finally, we apply SPI to extract only the top-$k$ dominant spectral directions from the momentum:
\begin{equation}
    \mathbf{O} = \mathrm{msign}_k(\mathbf{A}\mathbf{M}) = \mathbf{A}\,\mathrm{msign}_k(\mathbf{M}),
\label{eq:spi_practical}
\end{equation}
where we omit the iteration index $t$ for notational simplicity. The second equality follows from Proposition~\ref{prop:lossless_projection} in Appendix~\ref{app:proof_prop1}, enabling us to apply $\mathrm{msign}_k()$ directly in the reduced $\mathbb{R}^{r \times n}$ space. 

\paragraph{Streaming Power Iteration (SPI).}
The core mechanism of tracking dominant spectral subspaces cheaply via streaming or warm-started power iteration is a well-established and standard numerical technique in machine learning. In statistical representation learning, foundational Online PCA methods such as~\cite{Oja1982SimplifiedNM} utilize stochastic updates to estimate the principal eigenspaces of streaming data covariance.
More recently, this streaming SVD technique has been adapted for deep learning as gradient compression techniques~\citep{vogels2019powersgd} and efficient orthogonalization techniques~\citep{huang2026spectra,ahn2025dion,kexuefm-11654}. Fundamentally, all these methods belong to the same class of warm-started techniques, which rely on the assumption that the matrices being tracked change slowly over time.

Concretely, as shown in Algorithm~\ref{alg:streaming_pi}, we cache a rank-$k$ right singular subspace $\mathbf{V}_{t-1} \in \mathbb{R}^{n \times k}$ and use it to initialize the $t^{\text{th}}$ iteration. In each iteration, first, to obtain the updated $\mathbf{V}_{t}$, we project the covariance matrix $ \mathbf{M}_t^\top \mathbf{M}_t $ onto the previous right singular subspace $\mathbf{V}_{t-1}$ and ensure orthogonality of $\mathbf{V}_{t}$ via QR decomposition (lines 1-2 in Algorithm~\ref{alg:streaming_pi}):
\begin{equation}
    \mathbf{V}_t = \mathrm{QR}(\mathbf{M}_t^\top (\mathbf{M}_t \mathbf{V}_{t-1})) \in \mathbb{R}^{n \times k}.
\end{equation}
Then we compute the updated and normalized left singular subspace $\mathbf{U}_{t}$ by projecting the momentum onto the new right singular subspace (line 3):
\begin{equation}
    \mathbf{U}_t = \mathrm{NormalizeColumns}(\mathbf{M}_t \mathbf{V}_t) \in \mathbb{R}^{r \times k}.
\end{equation}
Finally, we construct the rank-$k$ partial orthogonalization:
\begin{equation}
    \mathrm{msign}_k(\mathbf{M}_t) := \mathbf{U}_t \mathbf{V}_t^\top,
\label{eq:spi_msign}
\end{equation}
which can be viewed as a partial orthogonalization of $\mathbf{M}_t$. Unlike NS iteration, which approximately equalizes the entire active spectrum of the gradient, SPI only preserves the top-$k$ dominant singular directions, which avoids amplifying noisy directions in ZO estimates. Additionally, this also makes the orthogonalization step cheaper. For a matrix of size $m \times n$, Though NS iteration is more efficient than SVD, it still requires matrix multiplications on the order of $\mathcal{O}(mn\min(m,n))$. And SPI only tracks a rank-$k$ factorization and costs $\mathcal{O}(mnk+nk^2)$ with $k \ll \min(m,n)$.

It might be concerning to some that we are warm-starting the iteration with the cached $\mathbf{V}_{t-1}$ from the previous step, and even when the singular spaces of $\mathbf{M}_t$ are different for each $t$. However, rather than posing an issue, this practice improves efficiency and generalization at the same time, since the dominant subspace evolves gradually under momentum smoothing. This temporal continuity means the previous right singular subspace $\mathbf{V}_{t-1}$ remains highly informative for $\mathbf{M}_t$, making it a reliable warm start. We also provide a theoretical characterization of the error in subspace tracking of $\mathbf{V}$, showcasing its stability (proof in Appendix~\ref{app:proof_convergence}, inspired by the analytical framework in \cite{kexuefm-11710}): 


\begin{lemma}
\label{lem:spi_approximation_error}

We define the tracking error $\mathcal{T}_t
:= \mathbf{V}_{\bot,t}^\top\mathbf{V}_t
\left(\mathbf{V}_{\star,t}^\top\mathbf{V}_t\right)^{-1}$
which is the tangent of the principal angles between the top-$k$ dominant subspace $\mathbf{V}_t$ computed by SPI and the true dominant subspace $\mathbf{V}_{\star,t}$. $\mathbf{V}_{\bot,t}$ denotes the orthogonal complement of  $\mathbf{V}_{\star,t}$ at iteration $t$,  representing the subspace spanned by the remaining $n-k$ singular vectors.

Under the smooth drift assumption, the bound of the tracking error $\mathcal{T}_t$ satisfies:
$
\|\mathcal{T}_t\|_2 \lesssim \gamma \|\mathcal{T}_{t-1}\|_2 + \gamma \delta,
\label{eq:spi_tracking_error_main}
$
where the spectral gap $\gamma := \sup_t (\sigma_{k+1, t} / \sigma_{k, t})^2 < 1$, and the true dominant subspace drifts smoothly across iterations such that $\|\mathbf{V}_{\bot,t}^\top \mathbf{V}_{\star,t-1}\|_2 \le \delta$ for a small drift constant $\delta > 0$.

\end{lemma}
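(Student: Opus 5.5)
The idea is to treat one SPI step as one step of subspace iteration on the Gram matrix $\mathbf{C}_t:=\mathbf{M}_t^\top\mathbf{M}_t$. Since QR does not change the column span and the tangent $\mathcal{T}_t$ depends only on $\operatorname{span}(\mathbf{V}_t)$, we may drop the QR and work with $\mathbf{W}_t:=\mathbf{C}_t\mathbf{V}_{t-1}$. Write the eigendecomposition
\[
\mathbf{C}_t=\mathbf{V}_{\star,t}\mathbf{\Lambda}_{1,t}\mathbf{V}_{\star,t}^\top+\mathbf{V}_{\bot,t}\mathbf{\Lambda}_{2,t}\mathbf{V}_{\bot,t}^\top,
\]
with $\mathbf{\Lambda}_{1,t}=\operatorname{diag}(\sigma_{1,t}^2,\dots,\sigma_{k,t}^2)$ and $\|\mathbf{\Lambda}_{2,t}\|_2=\sigma_{k+1,t}^2$. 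Then
\[
\mathbf{V}_{\bot,t}^\top\mathbf{W}_t=\mathbf{\Lambda}_{2,t}\mathbf{V}_{\bot,t}^\top\mathbf{V}_{t-1},\qquad
\mathbf{V}_{\star,t}^\top\mathbf{W}_t=\mathbf{\Lambda}_{1,t}\mathbf{V}_{\star,t}^\top\mathbf{V}_{t-1}.
\]
It follows that
\[
\mathcal{T}_t=\mathbf{\Lambda}_{2,t}\,\widetilde{\mathcal{T}}_{t-1}\,\mathbf{\Lambda}_{1,t}^{-1},\qquad
\widetilde{\mathcal{T}}_{t-1}:=\mathbf{V}_{\bot,t}^\top\mathbf{V}_{t-1}\bigl(\mathbf{V}_{\star,t}^\top\mathbf{V}_{t-1}\bigr)^{-1}.
\]
Here $\widetilde{\mathcal{T}}_{t-1}$ is the tangent of the old iterate measured against the new true subspace. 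Taking norms gives $\|\mathcal{T}_t\|_2\le(\sigma_{k+1,t}/\sigma_{k,t})^2\|\widetilde{\mathcal{T}}_{t-1}\|_2\le\gamma\|\widetilde{\mathcal{T}}_{t-1}\|_2$. This is the standard one-step contraction of power iteration.

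The remaining step is a perturbation bound that handles the drift:
\[
\|\widetilde{\mathcal{T}}_{t-1}\|_2\lesssim\|\mathcal{T}_{t-1}\|_2+\delta .
\]
To get it, decompose $\mathbf{V}_{t-1}$ in the old basis as $\mathbf{V}_{t-1}=(\mathbf{V}_{\star,t-1}+\mathbf{V}_{\bot,t-1}\mathcal{T}_{t-1})\mathbf{S}$, where $\mathbf{S}=\mathbf{V}_{\star,t-1}^\top\mathbf{V}_{t-1}$ is invertible. The factor $\mathbf{S}$ cancels in the tangent, so
\[
\widetilde{\mathcal{T}}_{t-1}=\bigl(\mathbf{V}_{\bot,t}^\top\mathbf{V}_{\star,t-1}+\mathbf{V}_{\bot,t}^\top\mathbf{V}_{\bot,t-1}\mathcal{T}_{t-1}\bigr)\bigl(\mathbf{V}_{\star,t}^\top\mathbf{V}_{\star,t-1}+\mathbf{V}_{\star,t}^\top\mathbf{V}_{\bot,t-1}\mathcal{T}_{t-1}\bigr)^{-1}.
\]
Bound the numerator and denominator separately.

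\textbf{Numerator.} The first term has norm at most $\delta$ by the drift assumption. The second has norm at most $\|\mathcal{T}_{t-1}\|_2$, since $\mathbf{V}_{\bot,t}^\top\mathbf{V}_{\bot,t-1}$ is a contraction.

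\textbf{Denominator.} By the CS decomposition, $\|\mathbf{V}_{\star,t}^\top\mathbf{V}_{\bot,t-1}\|_2=\|\mathbf{V}_{\bot,t}^\top\mathbf{V}_{\star,t-1}\|_2\le\delta$. Also $\sigma_{\min}(\mathbf{V}_{\star,t}^\top\mathbf{V}_{\star,t-1})\ge\sqrt{1-\delta^2}$. Hence the smallest singular value of the denominator is at least $\sqrt{1-\delta^2}-\delta\|\mathcal{T}_{t-1}\|_2$.

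Combining the two bounds,
\[
\|\widetilde{\mathcal{T}}_{t-1}\|_2\le\frac{\delta+\|\mathcal{T}_{t-1}\|_2}{\sqrt{1-\delta^2}-\delta\|\mathcal{T}_{t-1}\|_2},
\]
and together with the contraction this gives the claimed recursion.

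The main obstacle is the denominator: controlling $\delta\|\mathcal{T}_{t-1}\|_2$, which requires the previous tracking error to be bounded, say $\delta\|\mathcal{T}_{t-1}\|_2\le\tfrac12\sqrt{1-\delta^2}$. I would handle it by induction. Assume $\|\mathcal{T}_0\|_2$ is bounded and $\delta$ is small. The recursion then keeps $\|\mathcal{T}_t\|_2\lesssim\|\mathcal{T}_0\|_2+\gamma\delta/(1-\gamma)$ uniformly in $t$, so the denominator stays bounded below by a constant. That constant is absorbed into the "$\lesssim$", together with the $1/\sqrt{1-\delta^2}$ factor, which is $1+O(\delta^2)$. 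Invertibility of $\mathbf{V}_{\star,t}^\top\mathbf{V}_t$, needed for $\mathcal{T}_t$ to be defined, follows from the same lower bound, because $\mathbf{\Lambda}_{1,t}$ is invertible ($\sigma_{k,t}>0$).
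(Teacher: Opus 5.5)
Your proposal is correct and is essentially the paper's proof. The paper likewise eliminates the QR factor $\mathbf{R}_t$ and expands $\mathbf{V}_{t-1}=\mathbf{V}_{\star,t-1}\mathbf{C}_{t-1}+\mathbf{V}_{\bot,t-1}\mathbf{D}_{t-1}$ to get the exact recurrence $\mathcal{T}_t=\boldsymbol{\Lambda}_{2,t}(\boldsymbol{\Delta}_{21}^{(t)}+\boldsymbol{\Delta}_{22}^{(t)}\mathcal{T}_{t-1})(\boldsymbol{\Delta}_{11}^{(t)}+\boldsymbol{\Delta}_{12}^{(t)}\mathcal{T}_{t-1})^{-1}\boldsymbol{\Lambda}_{1,t}^{-1}$ (your $\boldsymbol{\Lambda}_{2,t}\widetilde{\mathcal{T}}_{t-1}\boldsymbol{\Lambda}_{1,t}^{-1}$), then bounds numerator and denominator exactly as you do before treating the denominator as $\approx 1$.

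Your CS-decomposition argument for $\|\mathbf{V}_{\star,t}^\top\mathbf{V}_{\bot,t-1}\|_2\le\delta$ and your invertibility remark fill in points the paper only asserts. One caution on the induction: a lower bound of $\tfrac12\sqrt{1-\delta^2}$ on the denominator only gives a factor $2\gamma/\sqrt{1-\delta^2}$, which need not be below $1$. For the uniform bound to close you therefore need $\delta$ small relative to $1-\gamma$, so that $\gamma/(\sqrt{1-\delta^2}-\delta\|\mathcal{T}_{t-1}\|_2)<1$.
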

\paragraph{Why projected momentum is needed beyond variance reduction.}
In ZO-MOPI, we use the subspace momentum instead of full-space momentum which avoids additional memory overhead. This requires additional operation for momentum refresh when the subspace is resampled (every $\nu$ iterations via lazy sampling)~\citep{chen2024enhancing}. Since the low-rank momentum $\mathbf{M}_t$ is accumulated relative to the basis defined by the current subspace $\mathbf{A}^{\mathrm{old}}$, we must project it to the new basis spanned by the refreshed subspace $\mathbf{A}^{\mathrm{new}}$ (lines 4-5 in Algorithm~\ref{alg:main_optimizer}). 
\begin{wrapfigure}{r}{0.4\textwidth}
  \begin{center}
    \includegraphics[width=0.9\linewidth]{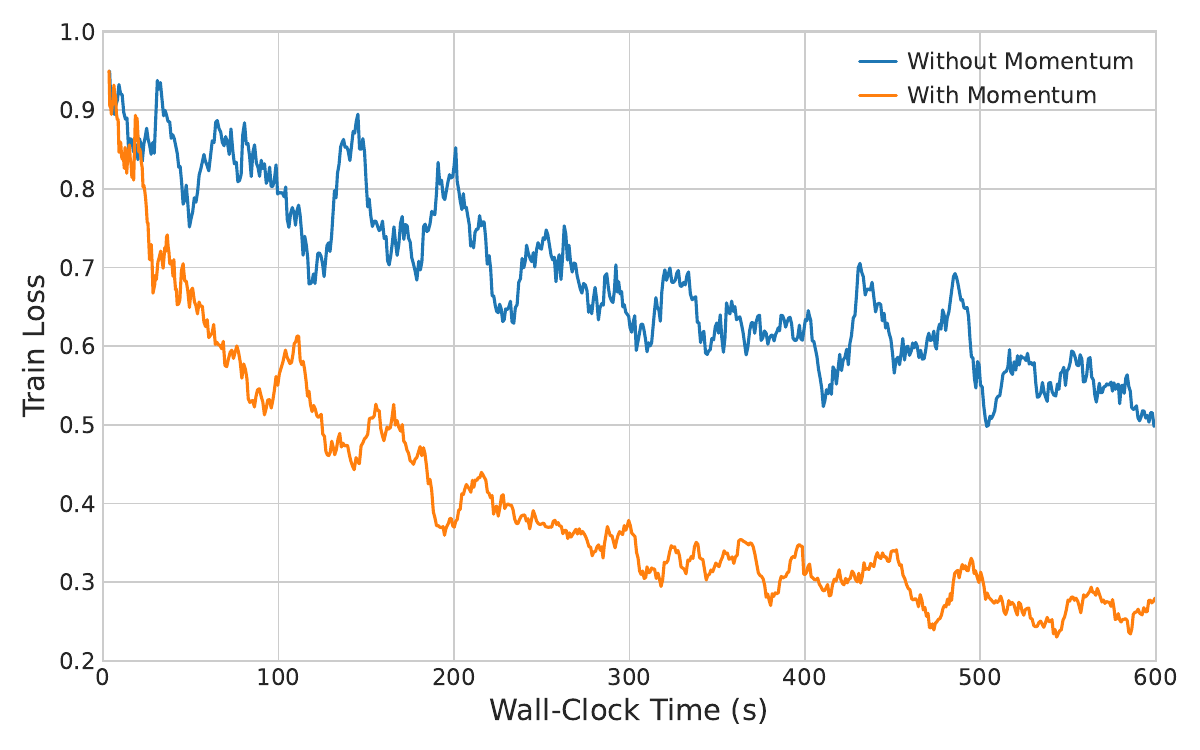}
  \end{center}
  \caption{ZO-MOPI's training loss with and without momentum.}
  \label{fig:hyper_momentum}
\end{wrapfigure}
The basis transformation is given by:
\begin{equation}
    \mathbf{M}_{t-1}^{\mathrm{new}} \leftarrow \frac{1}{m}(\mathbf{A}^{\mathrm{new}})^\top \mathbf{A}^{\mathrm{old}} \mathbf{M}_{t-1}^{\mathrm{old}},
\label{eq:projection_momentum}
\end{equation}
where $\mathbf{M}_{t-1}^{\mathrm{old}}, \mathbf{M}_{t-1}^{\mathrm{new}} \in \mathbb{R}^{r \times n}$ are the momentum representations in the old and new basis respectively. Momentum plays a very important role in our algorithm beyond reducing variance, since the effectiveness of SPI is contingent on it. This contrasts with the ablation studies in~\cite{lang2026powering}, as the authors found that momentum could not yield notable performance gains in their settings. However, since we have used the streaming version of power iteration, temporal continuity becomes a necessity, which can only be efficiently achieved via accumulating momentum~\citep{huang2026spectra,kexuefm-11710}. Without momentum, a single noisy ZO estimate $\hat{\mathbf{G}}_t$ can cause the dominant subspace to fluctuate drastically across iterations, which makes power iteration unstable and less effective.  Figure~\ref{fig:hyper_momentum} numerically demonstrates what will happen if we switch off momentum, justifying our algorithmic choice.


\paragraph{Lazy sampling strategy is equally crucial for SPI.}
\begin{wrapfigure}{l}{0.4\textwidth}
  \begin{center}
    \includegraphics[width=0.85\linewidth]{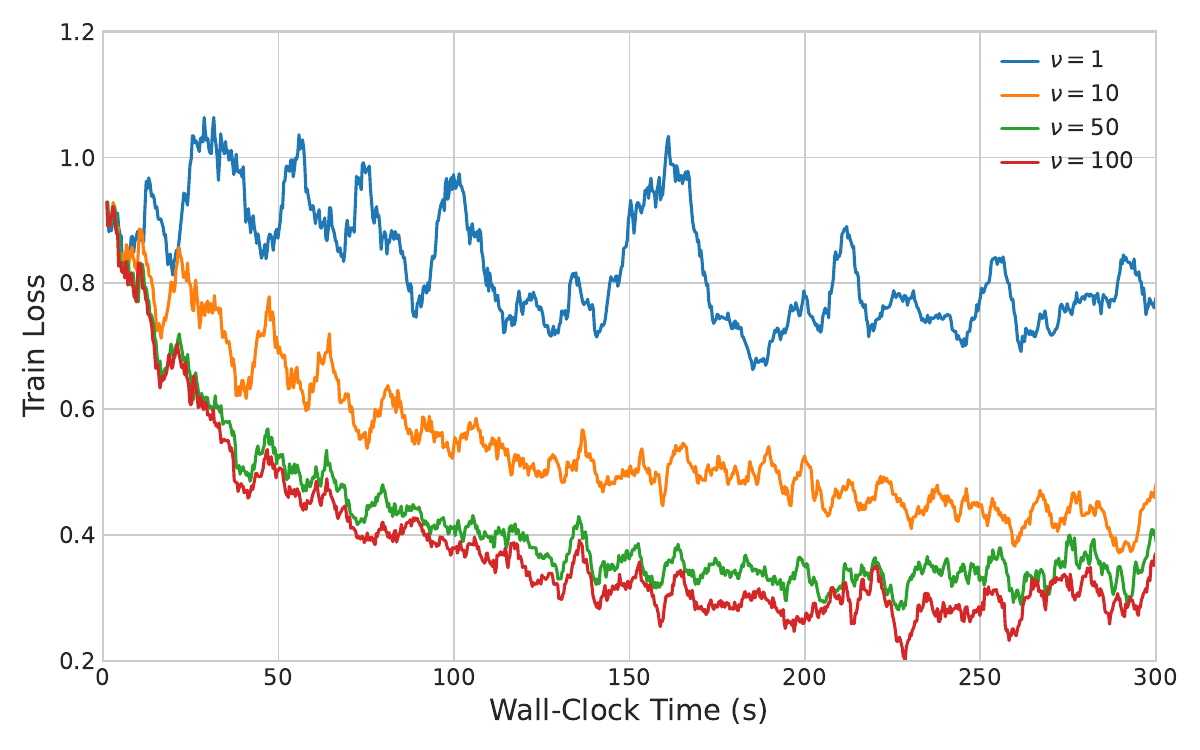}
  \end{center}
  \caption{Lazy sampling interval choices}
  \label{fig:hyper_nu}
\end{wrapfigure}
In our implementation, we fix the subspace $\mathbf{A}$ and resample it every $\nu$ iteration. We need periodic updates in $\mathbf{A}$ to encourage exploration, but shifting it too frequently may hinder the efficacy of SPI since it requires strong continuity. If the subspace changes too frequently, the spectral directions can vary substantially across steps, making the cached singular subspace $\mathbf{V}_{t-1}$ not useful and even misleading in estimating the updated $\mathbf{V}_{t}$.  As shown in Figure~\ref{fig:hyper_nu}, a small update interval $\nu$ can negatively impact the convergence and the training performance.

\begin{algorithm}[htbp]
\caption{ZO-MOPI: Zeroth-Order Momentum with Partial Orthogonalization}
\label{alg:main_optimizer}
\begin{algorithmic}[1]
\Require $\mathbf{X}_{t-1}, \mathbf{M}_{t-1}, \mathbf{A}, \mathbf{V}_{t-1}, \eta, \beta, \mu, r, k, \nu, N$
\Ensure $\mathbf{X}_t, \mathbf{M}_t, \mathbf{A}, \mathbf{V}_t$

\If{$t \bmod \nu = 0$}
\State $\mathbf{A}^{\mathrm{old}} \leftarrow \mathbf{A}$
\State $\mathbf{A}_{\mathrm{rand}} \sim \mathcal{N}(0, \mathbf{I}) \in \mathbb{R}^{m \times r}$
\State $\mathbf{A} \leftarrow \mathrm{QR}(\mathbf{A}_{\mathrm{rand}})$
\State $\mathbf{M}_{t-1} \leftarrow \frac{1}{m}\mathbf{A}^\top \mathbf{A}^{\mathrm{old}}\mathbf{M}_{t-1}$ \Comment{Project momentum onto the refreshed subspace}
\EndIf
\State $\hat{\mathbf{G}}_t \leftarrow \mathbf{0} \in \mathbb{R}^{r \times n}$
\For{$i = 1,\ldots,N$}
\State $\mathbf{B}_t^{(i)} \sim \mathcal{N}(0, \mathbf{I}) \in \mathbb{R}^{r \times n}$
\State $\hat{\mathbf{G}}_t \leftarrow \hat{\mathbf{G}}_t + \frac{\mathcal{L}(\mathbf{X}_{t-1} + \mu \mathbf{A}\mathbf{B}_t^{(i)}) - \mathcal{L}(\mathbf{X}_{t-1} - \mu \mathbf{A}\mathbf{B}_t^{(i)})}{2\mu N}\mathbf{B}_t^{(i)}$
\EndFor
\State $\mathbf{M}_t \leftarrow \beta \mathbf{M}_{t-1} + (1 - \beta)\hat{\mathbf{G}}_t$A
\State $\mathbf{O}_t, \mathbf{V}_t \leftarrow \text{StreamingPowerIteration}(\mathbf{M}_t, \mathbf{V}_{t-1}, k)$
\State $\mathbf{X}_t \leftarrow \mathbf{X}_{t-1} - \eta \mathbf{A}\mathbf{O}_t$
\State \Return $\mathbf{X}_t, \mathbf{M}_t, \mathbf{A}, \mathbf{V}_t$
\end{algorithmic}
\end{algorithm}

\begin{algorithm}[htbp]
\caption{Streaming Power Iteration (SPI)}
\label{alg:streaming_pi}
\begin{algorithmic}[1]
\Require $\mathbf{M}_t \in \mathbb{R}^{r \times n}$, $\mathbf{V}_{t-1} \in \mathbb{R}^{n \times k}$, $k$
\Ensure $\mathbf{O}_t\in \mathbb{R}^{r \times n}$, $\mathbf{V}_t \in \mathbb{R}^{n \times k}$

\State $\mathbf{Q} \leftarrow \mathbf{M}_{t}^\top (\mathbf{M}_t \mathbf{V}_{t-1}) \in \mathbb{R}^{n \times k}$ \Comment{Warm-started power iteration step}
\State $\mathbf{V}_t \leftarrow \mathrm{QR}(\mathbf{Q})$  
\State $\mathbf{U}_t \leftarrow \mathrm{NormalizeColumns}(\mathbf{M}_t \mathbf{V}_t) \in \mathbb{R}^{r \times k}$ \Comment{Column normalization}
\State $\mathbf{O}_t \leftarrow \mathbf{U}_{t} \mathbf{V}_t^\top$
\State \Return $\mathbf{O}_t, \mathbf{V}_t$
\end{algorithmic}
\end{algorithm}

\subsection{A Note on Convergence}
\label{sec:theory}
We establish theoretical convergence for the ZO-MOPI algorithm, showing that our new approach will not diverge easily. We adopt some standard assumptions in stochastic optimization to facilitate the proof. The details can be found in Appendix~\ref{app:proof_convergence}






\begin{theorem}
\label{thm:main_convergence}
Suppose the objective $f$ satisfies $L$-smoothness such that $\|\nabla f_i(\mathbf{X})-\nabla f_i(\mathbf{Y})\|_F \le L\|\mathbf{X}-\mathbf{Y}\|_F$ and the stochastic gradient variance is bounded by $\frac{1}{n}\sum_{i=1}^n
\|\nabla f_i(\mathbf{X})-\nabla f(\mathbf{X})\|_F^2 \le \sigma^2$. Further assume that the partial matrix sign operator is locally $L_m$-Lipschitz continuous, i.e., $\|msign_k(\mathbf{X}) - msign_k(\mathbf{Y})\|_F \le L_m \|\mathbf{X} - \mathbf{Y}\|_F$, and the momentum tracks the projected smoothed gradient $\mathbf{G}_t^\mu$ with mean squared error $\mathbb{E}[\|\mathbf{M}_t - \mathbf{G}_t^\mu\|_F^2] \le \sigma_M^2$. Then, under a learning rate $\eta = \Theta(\sqrt{(\Delta_0 + L\mu^2)/(LkT)})$, the sequence $\{\mathbf{X}_t\}$ generated by ZO-MOPI satisfies:
\begin{equation}
\frac{1}{T}\sum_{t=0}^{T-1}\mathbb{E}[\|\nabla f(\mathbf{X}_t)\|_F^2] \le \mathcal{O}\left( \frac{1}{\alpha}\sqrt{\frac{(\Delta_0 + L\mu^2)Lk}{T}} + \frac{L_m^2 \sigma_M^2}{\alpha^2} + \mu^2 L^2 d_{sub}^2 \right)
\end{equation}
where $\Delta_0 = f(\mathbf{X}_0) - f^*$ is the initial sub-optimality gap, $\alpha = \min_t \alpha_t \in (0, 1]$ is the spectral energy concentration constant, and $d_{sub} = rn$ is the effective dimension of the subspace perturbation.
\end{theorem}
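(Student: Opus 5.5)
The approach is the standard descent-lemma argument for smoothed objectives. Let $f_\mu$ denote the Gaussian smoothing of $f$ along the subspace perturbations $\mathbf{A}\mathbf{B}$. The properties needed are standard. First, $f_\mu$ is $L$-smooth. Second, $0\le f_\mu - f \le \mathcal{O}(L\mu^2)$, which will contribute the $L\mu^2$ term next to $\Delta_0$. Third, $\|\nabla f_\mu - \nabla f\|_F \le \mathcal{O}(\mu L d_{sub})$ with $d_{sub}=rn$, which will contribute the additive $\mu^2L^2d_{sub}^2$ term after squaring.

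With these in hand, define the projected smoothed gradient $\mathbf{G}_t^\mu = \mathbf{A}^\top\nabla f_\mu(\mathbf{X}_t)$, up to the normalization of $\mathbf{A}$. Write the update as $\mathbf{X}_{t+1}=\mathbf{X}_t-\eta\mathbf{A}\,\mathrm{msign}_k(\mathbf{M}_t)$. Using Proposition~\ref{prop:lossless_projection}, the direction equals $\mathrm{msign}_k(\mathbf{A}\mathbf{M}_t)$, so its Frobenius norm squared is exactly $k$. The descent lemma applied to $f_\mu$ then gives
\[
f_\mu(\mathbf{X}_{t+1}) \le f_\mu(\mathbf{X}_t) - \eta\langle \mathbf{G}_t^\mu, \mathrm{msign}_k(\mathbf{M}_t)\rangle + \tfrac{L\eta^2 k}{2}.
\]

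The core step is lower-bounding the inner product. Split it as
\[
\langle \mathbf{G}_t^\mu, \mathrm{msign}_k(\mathbf{G}_t^\mu)\rangle + \langle \mathbf{G}_t^\mu, \mathrm{msign}_k(\mathbf{M}_t)-\mathrm{msign}_k(\mathbf{G}_t^\mu)\rangle .
\]
The first term equals $\sum_{i\le k}\sigma_i(\mathbf{G}_t^\mu)$, the top-$k$ nuclear norm. I would invoke the spectral energy concentration constant in the form $\sum_{i\le k}\sigma_i(\mathbf{G}_t^\mu)\ge \alpha_t\|\nabla f_\mu(\mathbf{X}_t)\|_F$; this is the assumed meaning of $\alpha_t$ in the appendix. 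The second term is handled by Cauchy--Schwarz together with the $L_m$-Lipschitz assumption, which bounds it by $L_m\|\mathbf{G}_t^\mu\|_F\|\mathbf{M}_t-\mathbf{G}_t^\mu\|_F$. Young's inequality with weight $\alpha/2$ turns this into
\[
\tfrac{\alpha}{2}\|\nabla f_\mu\|_F^2/(\cdot) + \tfrac{L_m^2}{2\alpha}\|\mathbf{M}_t-\mathbf{G}_t^\mu\|_F^2 .
\]

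The main obstacle is the mismatch between the linear quantity $\|\nabla f_\mu\|_F$ produced by the nuclear-norm bound and the squared norm appearing in the target. I would first try to assume, or normalize, the gradient to be bounded so that $\|\nabla f_\mu\|_F\ge \|\nabla f_\mu\|_F^2/G_{\max}$, absorbing $G_{\max}$ into the constants. Alternatively, I would define $\alpha_t$ directly via $\langle \mathbf{G}^\mu_t,\mathrm{msign}_k(\mathbf{G}^\mu_t)\rangle\ge\alpha_t\|\nabla f_\mu\|_F^2$, which matches the $1/\alpha$ and $1/\alpha^2$ scalings in the statement. After this, take expectations and use $\mathbb{E}\|\mathbf{M}_t-\mathbf{G}_t^\mu\|_F^2\le\sigma_M^2$. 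The per-step inequality becomes
\[
\tfrac{\eta\alpha}{2}\mathbb{E}\|\nabla f_\mu(\mathbf{X}_t)\|_F^2 \le \mathbb{E}[f_\mu(\mathbf{X}_t)-f_\mu(\mathbf{X}_{t+1})] + \tfrac{\eta L_m^2\sigma_M^2}{2\alpha} + \tfrac{L\eta^2k}{2}.
\]

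Finally, telescope over $T$ steps and bound $f_\mu(\mathbf{X}_0)-\inf f_\mu\le\Delta_0+\mathcal{O}(L\mu^2)$. Dividing by $\eta\alpha T/2$ gives
\[
\frac{\Delta_0+L\mu^2}{\eta\alpha T}+\frac{L\eta k}{\alpha}+\frac{L_m^2\sigma_M^2}{\alpha^2}.
\]
Choosing $\eta=\Theta(\sqrt{(\Delta_0+L\mu^2)/(LkT)})$ balances the first two terms to $\frac1\alpha\sqrt{(\Delta_0+L\mu^2)Lk/T}$. To return from $\nabla f_\mu$ to $\nabla f$, use $\|\nabla f\|_F^2\le 2\|\nabla f_\mu\|_F^2+2\mu^2L^2d_{sub}^2$, which yields the bound claimed in Theorem~\ref{thm:main_convergence}. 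The variance bound $\sigma^2$ enters only implicitly, through the assumed momentum tracking error $\sigma_M^2$; Lemma~\ref{lem:spi_approximation_error} could be used to justify that the SPI output approximates the exact $\mathrm{msign}_k$, but for the theorem as stated I treat the output as exact $\mathrm{msign}_k(\mathbf{M}_t)$.
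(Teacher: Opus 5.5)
Your proposal is correct and follows the paper's proof essentially step for step: the descent lemma on $f_\mu$ with $\|\mathbf{A}\mathbf{O}_t\|_F^2\le k$, the split of the inner product at $\mathrm{msign}_k(\mathbf{G}_t^\mu)$, Young's inequality combined with the $L_m$-Lipschitz assumption to produce the $L_m^2\sigma_M^2/\alpha^2$ term, telescoping with the stated choice of $\eta$, and the translation from $\nabla f_\mu$ back to $\nabla f$ via the $d_{sub}$ smoothing bias. The paper resolves your linear-versus-squared mismatch exactly by your second option, defining $\alpha_t := \sum_{i\le k}\sigma_{i,t}/\|\nabla f_\mu(\mathbf{X}_t)\|_F^2$, so no bounded-gradient assumption is needed; for nonconvex $f$ you should also use the two-sided bound $|f_\mu - f|\le\mathcal{O}(L\mu^2)$, as the paper does, rather than $0\le f_\mu-f$.
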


\section{Experiments}
\label{sec:experiments}
\begin{figure}[t]
    \centering
    \setlength{\tabcolsep}{2pt} 
    \begin{tabular}{@{}cc@{}}
        
        \includegraphics[width=0.4\linewidth]{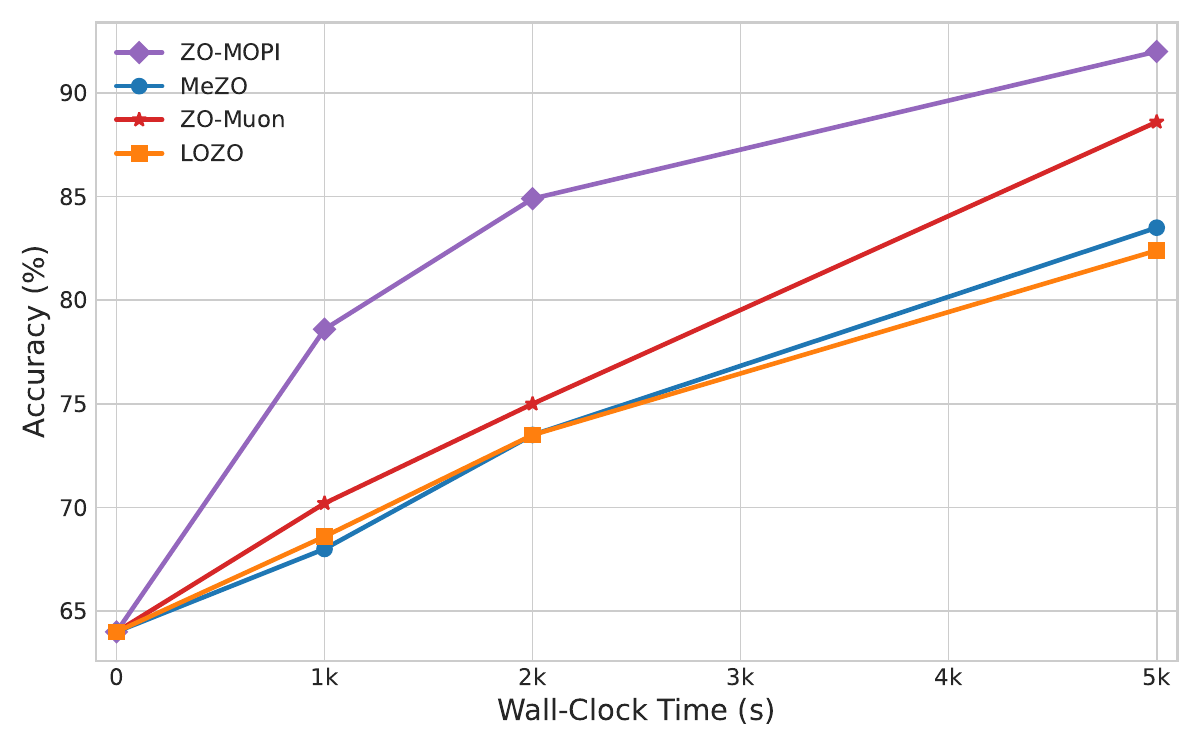} &
        \includegraphics[width=0.4\linewidth]{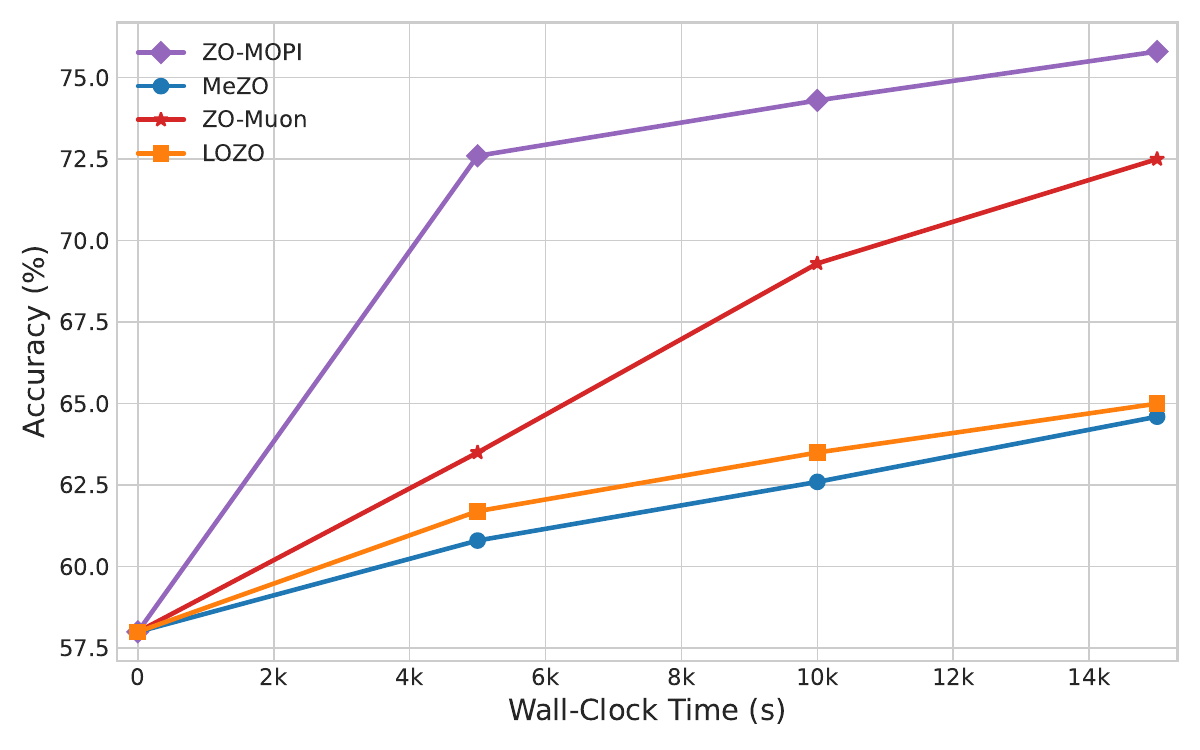} \\
        \small (a) SST-2 & \small (b) RTE \\[6pt] 

        \includegraphics[width=0.4\linewidth]{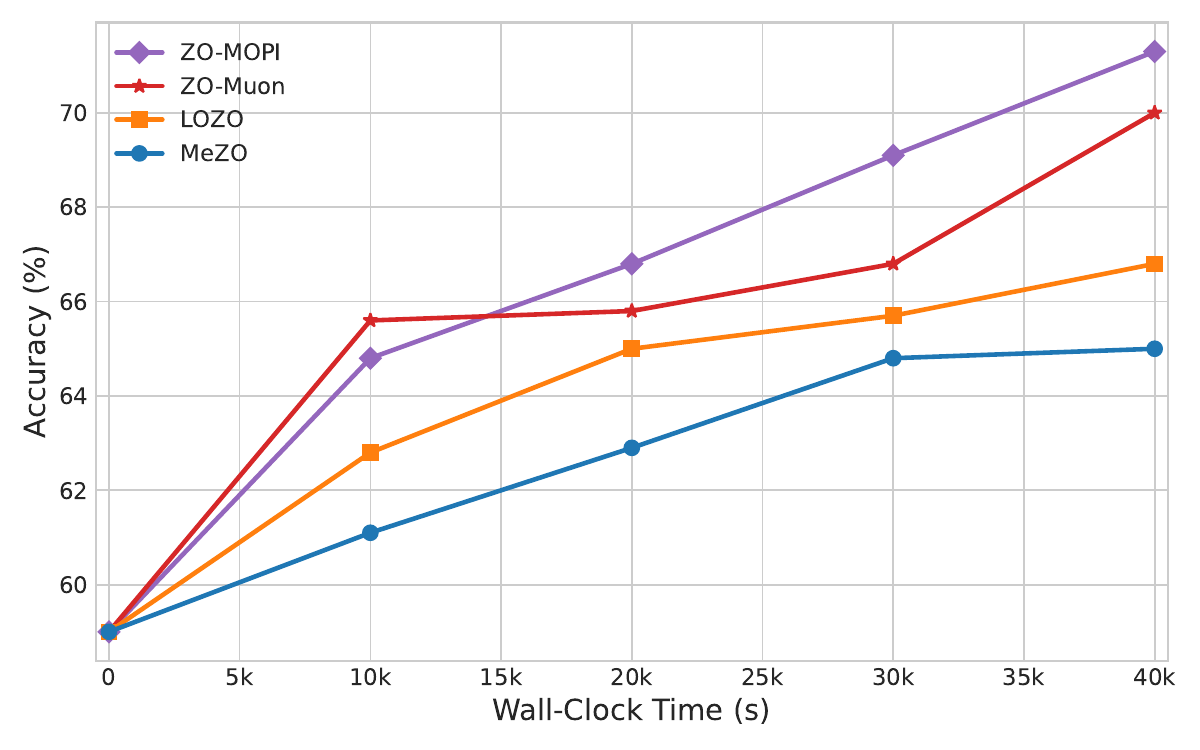} &
        \includegraphics[width=0.4\linewidth]{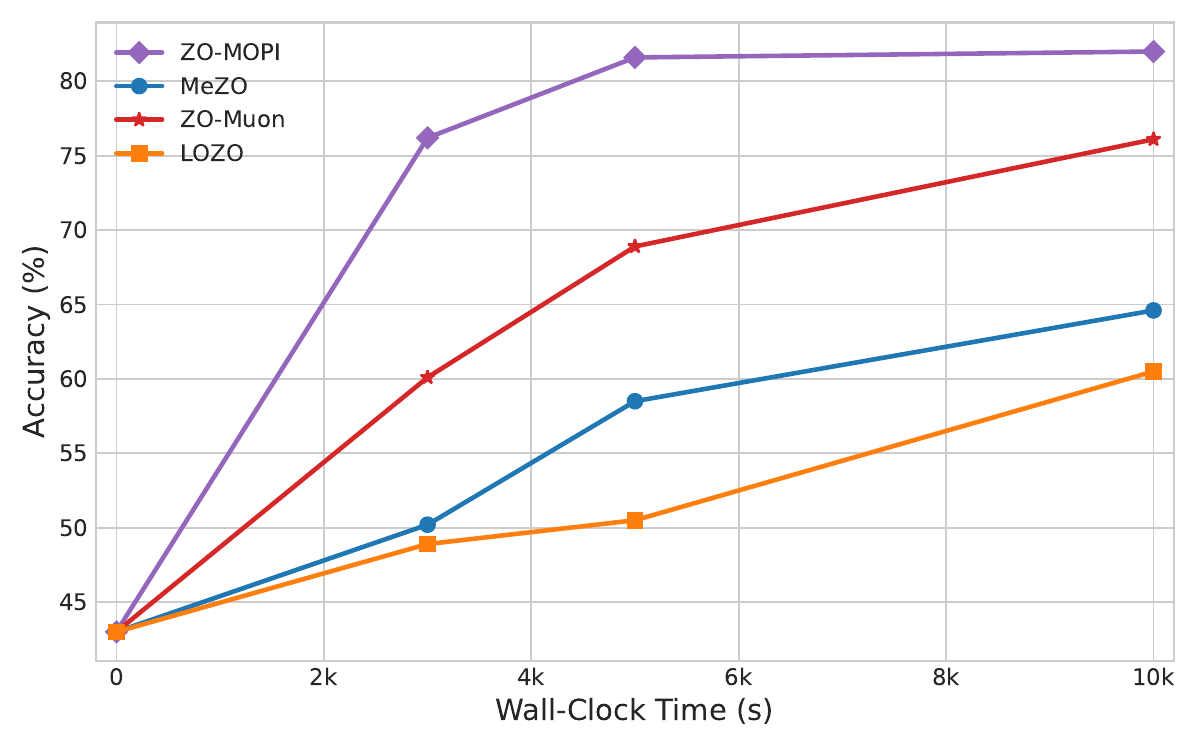} \\
        \small (c) BoolQ & \small (d) SQuAD

    \end{tabular}
    \caption{OPT-13B fine-tuning efficiency (Accuracy vs. Wall-clock time) across four SuperGLUE tasks. Each panel shows the accuracy improvement over time for the respective task.}
    \label{fig:opt13b_acc_vs_time}
\end{figure}
\begin{figure}[t]
    \centering
    \setlength{\tabcolsep}{2pt} 
    \begin{tabular}{@{}cc@{}}
        
        \includegraphics[width=0.4\linewidth]{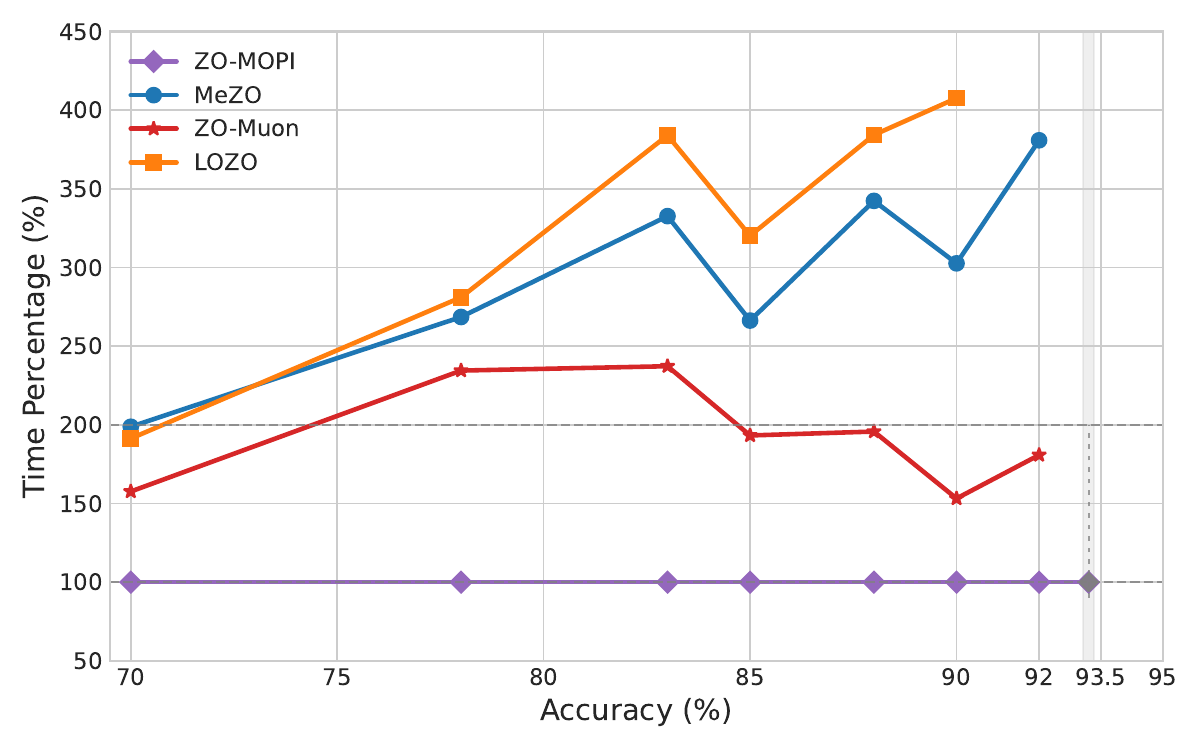} &
        \includegraphics[width=0.4\linewidth]{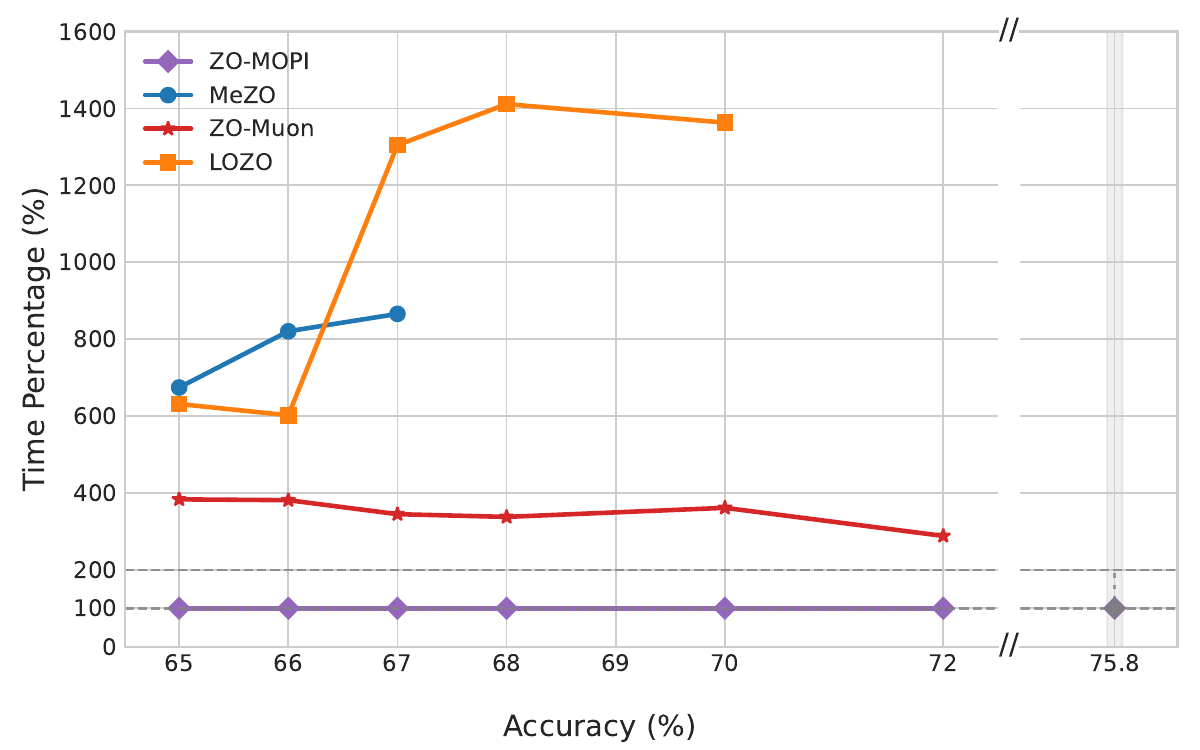} \\
        \small (a) SST-2 & \small (b) RTE \\[6pt] 

        \includegraphics[width=0.4\linewidth]{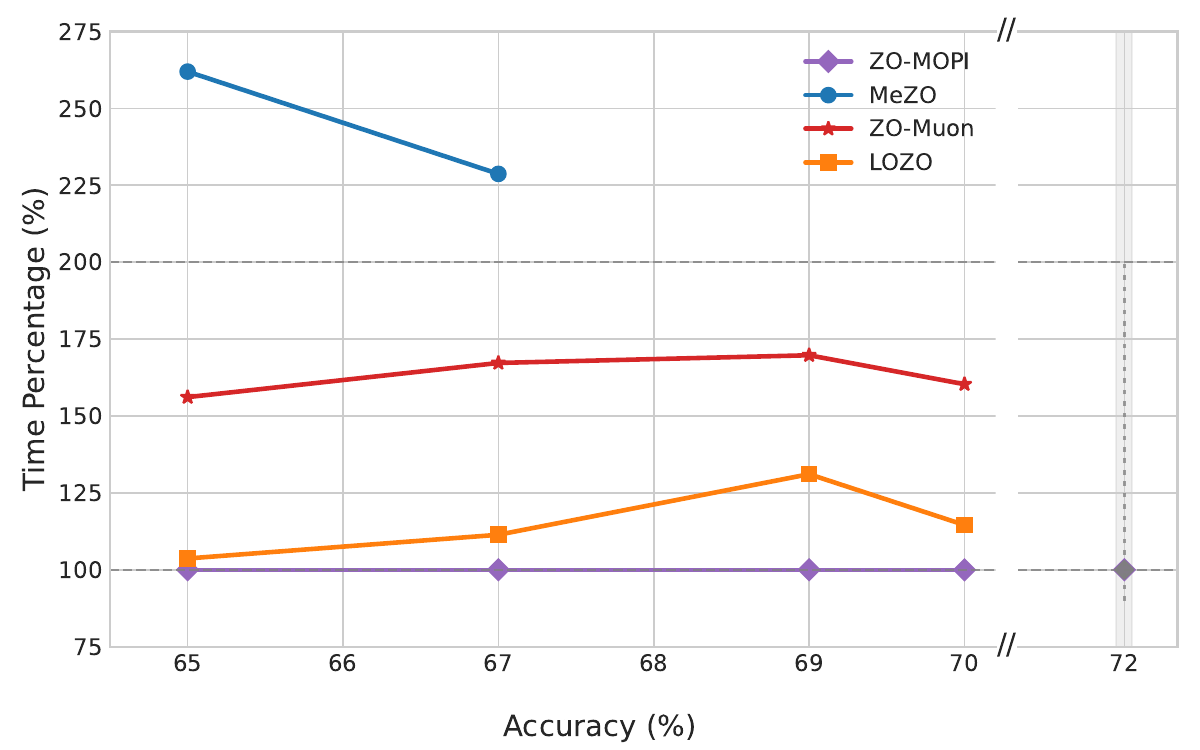} &
        \includegraphics[width=0.4\linewidth]{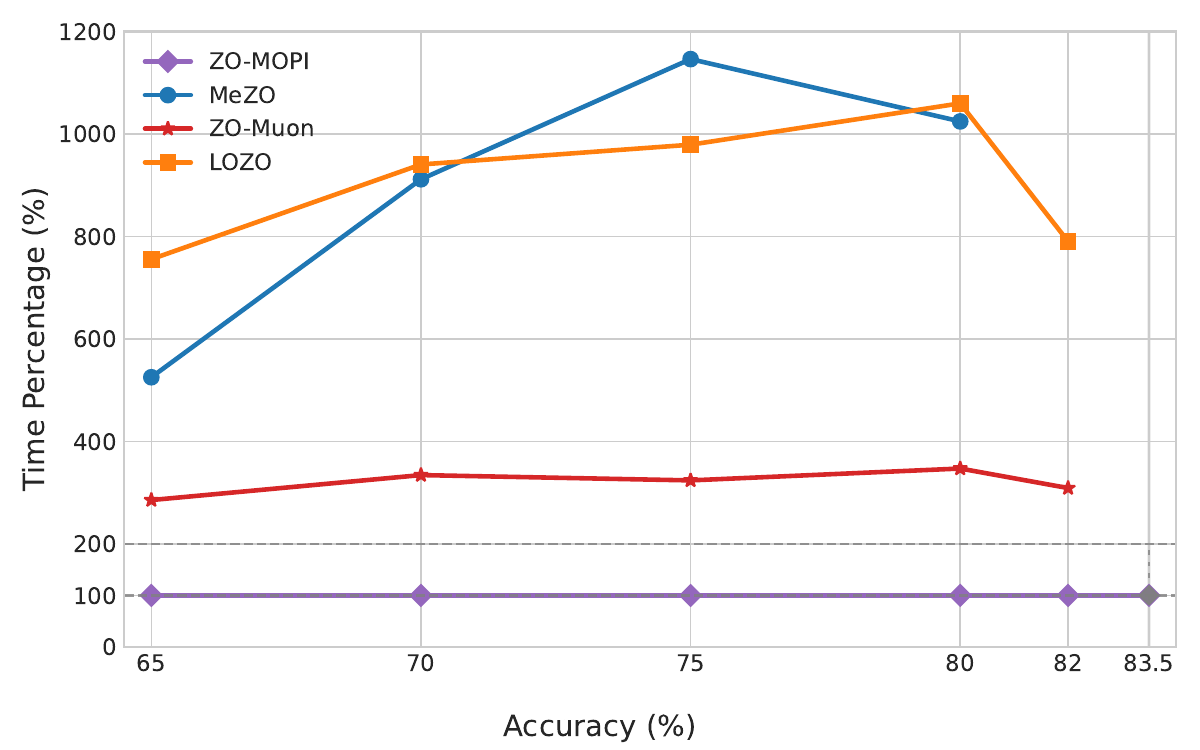} \\
        \small (c) BoolQ & \small (d) SQuAD

    \end{tabular}
    \caption{OPT-13B fine-tuning efficiency (Relative time-to-same-accuracy) across four SuperGLUE tasks. Each panel shows the time required relative to ZO-MOPI (normalized to $100\%$). Larger percentages indicate that a baseline requires more wall-clock time to reach the same accuracy.}
    \label{fig:opt13b_time_percentage}
\end{figure}
\begin{table*}[t]
    \centering
    \small
    \setlength{\tabcolsep}{6pt}
    \caption{Comparison of the wall-clock time (in hours) required to reach same accuracy thresholds on fine-tuning LLMs (OPT-13B and LLaMA3-8B) on SuperGLUE. The best ZO result for each task is highlighted in \textbf{bold}. Target thresholds are set to at least 85\% of the final performance achieved by Adam's baseline. Entries marked as ''N/A'' indicate that the method did not reach the target accuracy within the allotted training budget. }
    \label{tab:grouped_template_results}
    \begin{tabular}{lcccccccc}
        \toprule
        & \multicolumn{4}{c}{\textbf{OPT-13B}} & \multicolumn{4}{c}{\textbf{LLaMA3-8B}} \\
        \cmidrule(lr){2-5} \cmidrule(lr){6-9}
        \textbf{Method} & \textbf{SST-2} & \textbf{RTE} & \textbf{BoolQ} & \textbf{SQuAD} & \textbf{SST-2} & \textbf{RTE} & \textbf{BoolQ} & \textbf{SQuAD} \\
        \midrule
        MeZO & 3.57 & N/A & N/A & 9.01 & 3.57 & N/A & 8.56 & 6.85 \\
        LOZO & 4.62 & N/A & 15.53 & 6.86 & 2.72 & N/A & 10.45& 4.05  \\
        ZO-Muon & 1.74 & 3.28 & 14.59 & 2.35 & 1.64 & \textbf{4.05} & 4.08 & 2.85 \\
        \textbf{Our Method} & \textbf{1.13} & \textbf{1.29} & \textbf{10.34} & \textbf{0.83} & \textbf{1.05} & 5.06 & \textbf{3.54} & \textbf{2.68} \\
        \bottomrule
    \end{tabular}
\end{table*}
\begin{table*}[t]
    \centering
    \small
    \setlength{\tabcolsep}{3pt}
    \caption{We show the accuracy of ZO baselines for LLM (OPT-13B and LLaMA3-8B) fine-tuning on SuperGLUE. The best ZO fine-tuning result for each task is highlighted in \textbf{bold}. Values in parentheses indicate the relative percentage difference (\%) compared to our method.}
    \label{tab:main_llm_results}
    \begin{tabular}{@{}lcccccccc@{}}
        \toprule
        & \multicolumn{4}{c}{\textbf{OPT-13B}} & \multicolumn{4}{c}{\textbf{LLaMA3-8B}} \\
        \cmidrule(lr){2-5} \cmidrule(lr){6-9}
        \textbf{Method} & \textbf{SST-2} & \textbf{RTE} & \textbf{BoolQ} & \textbf{SQuAD} & \textbf{SST-2} & \textbf{RTE} & \textbf{BoolQ} & \textbf{SQuAD} \\
        \midrule
        Adam & 95.3 {\scriptsize (+2.3)} & 80.9 {\scriptsize (+6.7)} & 83.5 {\scriptsize (+16.0)} & 89.5 {\scriptsize (+7.2)} & 96.0 {\scriptsize (+3.3)} & 92.0 {\scriptsize (+17.9)} & 86.6 {\scriptsize (+5.0)} & 90.4 {\scriptsize (+2.1)} \\
        LoRA & 94.8 {\scriptsize (+1.7)} & 78.3 {\scriptsize (+3.3)} & 80.2 {\scriptsize (+11.4)} & 88.0 {\scriptsize (+5.4)} & 95.0 {\scriptsize (+2.3)} & 80.9 {\scriptsize (+3.7)} & 86.4 {\scriptsize (+4.7)} & 89.4 {\scriptsize (+1.0)} \\
        \midrule
        MeZO & 91.4 {\scriptsize (-1.9)} & 66.1 {\scriptsize (-12.8)} & 67.3 {\scriptsize (-6.5)} & 81.8 {\scriptsize (-2.0)} & 92.7 {\scriptsize (-0.2)} & 66.8 {\scriptsize (-14.4)} & 76.7 {\scriptsize (-7.0)} & 86.7 {\scriptsize (-2.0)} \\
        LOZO & 91.6 {\scriptsize (-1.7)} & 70.4 {\scriptsize (-7.1)} & 70.0 {\scriptsize (-2.8)} & \textbf{84.9} {\scriptsize (+1.7)} & 92.5 {\scriptsize (-0.4)} & 66.8 {\scriptsize (-14.4)} & 79.4 {\scriptsize (-3.8)} & \textbf{89.0} {\scriptsize (+0.6)} \\
        ZO-Muon & 92.5 {\scriptsize (-0.8)} & 72.9 {\scriptsize (-3.8)} & 71.3 {\scriptsize (-1.0)} & 84.2 {\scriptsize (+0.8)} & \textbf{93.7} {\scriptsize (+0.9)} & \textbf{78.0} {\scriptsize (0.0)} & \textbf{82.9} {\scriptsize (+0.5)} & 88.2 {\scriptsize (-0.3)} \\
        \textbf{ZO-MOPI} & \textbf{93.2} & \textbf{75.8} & \textbf{72.0} & 83.5 & 92.9 & \textbf{78.0} & 82.5 & 88.5 \\
        \bottomrule
    \end{tabular}
\end{table*}
\subsection{Experiment setups}
\label{subsec:experiment_setups}

We evaluate the performance of ZO optimization on LLM fine-tuning tasks. For LLMs, we primarily consider  OPT-13B~\citep{zhang2022opt} and LLaMA3-8B~\citep{grattafiori2024llama} on the SuperGLUE benchmark~\citep{wang2019superglue}. We compare the proposed ZO-MOPI with the following stateful ZO optimization baselines, which were primarily developed for large-model fine-tuning. These include MeZO~\citep{malladi2023fine},   LOZO~\citep{chen2024enhancing} and ZO-Muon~\citep{lang2026powering}. In addition to ZO approaches, we also include FO baselines, Adam~\citep{kingma2014adam} and LoRA (with Adam)~\citep{hu2022lora}.

\textbf{Implementation details.}  For our method, we fix the subspace rank to $r=64$ and set the spectral rank in SPI to $k=32$ throughout all LLM fine-tuning experiments. And we search over $N \in \{4, 8, 16\}$. To keep the comparison fair under matched query budgets, we scale the number of training steps inversely with $N$ (e.g. 8k steps for $N = 4$ and 4k steps for $N = 8$). The subspace $\mathbf{A}$ is updated lazily via QR decomposition rather than being resampled at every iteration. In particular, for our method $\mathbf{A}$ is refreshed every 500 iterations. We set the training batch size to 16 for all experiments. Additional experiment setups, including settings of baselines, are detailed in Appendix~\ref{app:details}.

\subsection{Experiment results}
\label{subsec:experiment_results}
        



\textbf{Our method delivers a stronger convergence and efficiency profile than prior ZO baselines.} Figure~\ref{fig:opt13b_acc_vs_time} and~\ref{fig:opt13b_time_percentage} show that ZO-MOPI reaches high-accuracy regions earlier than prior ZO baselines. The gain is especially clear against ZO-Muon: on RTE and SQuAD, ZO-Muon often needs around $3\times$ the wall-clock time of ZO-MOPI to reach the same accuracy, while on SST-2 it still requires about $1.5\times$--$2.4\times$ more time. Table~\ref{tab:grouped_template_results} shows that ZO-MOPI reaches the competitive accuracy threshold (at least 85\% of the Adam's baseline) with the shortest wall-clock time across most tasks. These results suggest that SPI-style partial orthogonalization provides a more efficient spectral update than NS-style full-spectrum orthogonalization in noisy ZO fine-tuning.

\textbf{Our method remains competitive with the strongest ZO baselines in final task performance while prioritizing speed.} As shown in Table~\ref{tab:main_llm_results}, our method attains the best ZO accuracy on some tasks like fine-tuning OPT-13B for SST-2 and RTE, improving over ZO-Muon from 92.5\% to 93.2\% on SST-2 and from 72.9\% to 75.8\% on RTE. ZO-MOPI outperforms LOZO in most tasks, which shows the integration of SPI with subspace ZO estimation can
improve the ZO optimization.


\section{Conclusion}
\label{sec:conclusion}
In this paper, we propose partial spectral orthogonalization to accelerate ZO optimization for LLMs by selectively extracting dominant singular directions while avoiding noise amplification. This mechanism is supported by subspace gradient estimation and projected momentum, which reduce estimation variance and stabilize spectral tracking. The experimental results demonstrate that our method significantly outperforms strong ZO baselines in efficiency and accuracy across most evaluated tasks while preserving memory advantages. Future work will extend these spectral techniques from parameter-efficient fine-tuning to general large-scale training regimes.

\nocite{team2024gemma,grattafiori2024llama,zhang2022opt,wang2019superglue,socher2013recursive,dagan2005pascal,de2019commitmentbank,clark2019boolq,pilehvar2019wic,rajpurkar2016squad,malladi2023fine,chen2024enhancing,yu2025zeroth,zhao2024second}

\bibliographystyle{plainnat}
\bibliography{neurips_2026}

\medskip

\clearpage
\appendix





\section{Analysis of Subspace Power Iteration}
\label{app:proof_prop1}
In Section 4, for computational efficiency, we operate the Streaming Power Iteration (SPI) in the reduced subspace rather than the full parameter space. In this section, we establish that this projection-based SPI is equivalent to its full-space counterpart. Our analysis is inspired by the proof of Proposition 1 in \cite{lang2026powering}. Let $\mathbf{A} \in \mathbb{R}^{m \times r}$ be a subspace matrix with orthonormal columns (i.e., $\mathbf{A}^\top \mathbf{A} = \mathbf{I}$). We define the full-space gradient at iteration $t$ as $\mathbf{G} = \mathbf{A}\mathbf{M} \in \mathbb{R}^{m \times n}$ and let its SVD be:
\begin{equation}
    \mathbf{G}
    =
    \mathbf{U}
    \operatorname{diag}(\boldsymbol{\sigma})
    \mathbf{V}^\top,
\end{equation}
and let $\mathbf{U}_{[:,1:k]}$ and $\mathbf{V}_{[:,1:k]}$ denote the top-$k$ left and right singular vectors. The power iteration operator is
\begin{equation}
    \operatorname{msign}_k(\mathbf{G})
    :=
    \mathbf{U}_{[:,1:k]}
    \mathbf{V}_{[:,1:k]}^\top.
\label{eq:appendix_msign_def}
\end{equation}

\begin{proposition}
\label{prop:lossless_projection}
If the subspace matrix is chosen as $\mathbf{A}\in\mathbb{R}^{m\times r}$, obtained from the SVD of $\mathbf{M}$, then the projection for rank-$k$ gradient orthogonalization is lossless. That is,
\begin{equation}
    \mathbf{A}\,\operatorname{msign}_k(\mathbf{A}^\top \mathbf{G})
    =
    \operatorname{msign}_k(\mathbf{G}).
\label{eq:appendix_prop1}
\end{equation}
\end{proposition}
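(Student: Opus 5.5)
The plan is to reduce the identity to how the SVD behaves under an isometric embedding. The operative hypothesis is that $\mathbf{A}$ has orthonormal columns, $\mathbf{A}^\top\mathbf{A}=\mathbf{I}_r$, and that $\mathbf{G}=\mathbf{A}\mathbf{M}$, so that $\operatorname{col}(\mathbf{G})\subseteq\operatorname{col}(\mathbf{A})$. The phrase ``obtained from the SVD of $\mathbf{M}$'' is used only to guarantee this column-space containment; any orthonormal $\mathbf{A}$ with $\mathbf{G}=\mathbf{A}\mathbf{M}$ will do.

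\textbf{Step 1.} I first note that $\mathbf{A}^\top\mathbf{G}=\mathbf{A}^\top\mathbf{A}\mathbf{M}=\mathbf{M}$. The claim therefore reads $\mathbf{A}\operatorname{msign}_k(\mathbf{M})=\operatorname{msign}_k(\mathbf{A}\mathbf{M})$, which is exactly the second equality in \eqref{eq:spi_practical}.

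\textbf{Step 2.} Next I take a compact SVD $\mathbf{M}=\mathbf{U}_M\operatorname{diag}(\boldsymbol{\sigma})\mathbf{V}_M^\top$ with $\mathbf{U}_M\in\mathbb{R}^{r\times s}$. Substituting gives
\begin{equation*}
\mathbf{G}=(\mathbf{A}\mathbf{U}_M)\operatorname{diag}(\boldsymbol{\sigma})\mathbf{V}_M^\top .
\end{equation*}
Since $(\mathbf{A}\mathbf{U}_M)^\top(\mathbf{A}\mathbf{U}_M)=\mathbf{U}_M^\top\mathbf{A}^\top\mathbf{A}\mathbf{U}_M=\mathbf{I}$, the matrix $\mathbf{A}\mathbf{U}_M$ has orthonormal columns. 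This factorization is therefore a valid compact SVD of $\mathbf{G}$, with the same singular values and the same right singular vectors as $\mathbf{M}$, and with left singular vectors $\mathbf{A}\mathbf{U}_M$.

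\textbf{Step 3.} Reading off the top-$k$ parts, $\mathbf{U}_{[:,1:k]}=\mathbf{A}(\mathbf{U}_M)_{[:,1:k]}$ and $\mathbf{V}_{[:,1:k]}=(\mathbf{V}_M)_{[:,1:k]}$. Hence
\begin{equation*}
\operatorname{msign}_k(\mathbf{G})=\mathbf{A}(\mathbf{U}_M)_{[:,1:k]}(\mathbf{V}_M)_{[:,1:k]}^\top=\mathbf{A}\operatorname{msign}_k(\mathbf{M}),
\end{equation*}
which proves \eqref{eq:appendix_prop1}.

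\textbf{Main obstacle: well-definedness of $\operatorname{msign}_k$.} The singular vectors are not unique under sign flips, and they are not unique when singular values repeat, in particular if $\sigma_k=\sigma_{k+1}$. I will handle this in two parts.

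\emph{Sign flips and rotations within a block.} The product $\mathbf{U}_{[:,1:k]}\mathbf{V}_{[:,1:k]}^\top$ is invariant under simultaneous sign changes of a left/right singular pair. It is also invariant under a common orthogonal rotation inside a block of equal singular values lying entirely within the top $k$. I will verify this invariance with a short check of the form $\mathbf{U}\mathbf{Q}(\mathbf{V}\mathbf{Q})^\top=\mathbf{U}\mathbf{V}^\top$.

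\emph{A tie at the cutoff.} When $\sigma_k=\sigma_{k+1}$, I will either assume $\sigma_k>\sigma_{k+1}$ (the same gap condition used in Lemma~\ref{lem:spi_approximation_error}) or interpret $\operatorname{msign}_k$ as defined relative to a chosen SVD. In the second case the correspondence in Step 2 is a bijection between SVDs of $\mathbf{M}$ and SVDs of $\mathbf{G}$, because every left singular vector of $\mathbf{G}$ with $\sigma>0$ lies in $\operatorname{col}(\mathbf{A})$. Matching choices on both sides then give equality.

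\textbf{Optional remark.} I may also show that a single warm-started SPI step commutes with $\mathbf{A}$ in the same way. This follows from $\mathbf{G}^\top\mathbf{G}=\mathbf{M}^\top\mathbf{M}$ and $\|\mathbf{A}\mathbf{x}\|=\|\mathbf{x}\|$, so column normalization also commutes with $\mathbf{A}$. The practical Algorithm~\ref{alg:streaming_pi} is therefore lossless as well.
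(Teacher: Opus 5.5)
Your proof is correct and is essentially the paper's argument. Both transport an SVD across the isometry $\mathbf{A}$: the paper pushes the SVD of $\mathbf{G}$ down to $\mathbf{A}^\top\mathbf{G}$, while you lift the SVD of $\mathbf{M}=\mathbf{A}^\top\mathbf{G}$ up to $\mathbf{G}$, and in doing so you make explicit the containment $\operatorname{col}(\mathbf{G})\subseteq\operatorname{col}(\mathbf{A})$ and the SVD non-uniqueness issue, both of which the paper's proof uses or ignores silently (it needs the containment to claim that $\mathbf{A}^\top\mathbf{U}_{[:,1:k]}$ is orthonormal and that $\mathbf{A}\mathbf{A}^\top\mathbf{U}_{[:,1:k]}=\mathbf{U}_{[:,1:k]}$).
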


\begin{proof}
The projected matrix satisfies
\begin{equation}
     \mathbf{A}^\top \mathbf{G}
    =
    \mathbf{A}^\top \mathbf{U}
    \operatorname{diag}(\boldsymbol{\sigma})
    \mathbf{V}^\top.
\end{equation}
Thus, taking the top-$k$ components, the matrix sign of the projected matrix is
\begin{equation}
     \operatorname{msign}_k(\mathbf{A}^\top \mathbf{G})
    =
    \mathbf{A}^\top \mathbf{U}_{[:,1:k]} \mathbf{V}_{[:,1:k]}^\top .
\end{equation}
Multiplying by $\mathbf{A}$ on the left gives
\begin{equation}
    \mathbf{A}\,\operatorname{msign}_k(\mathbf{A}^\top \mathbf{G})
    =
    \mathbf{U}_{[:,1:k]}
    \mathbf{V}_{[:,1:k]}^\top
    =
    \operatorname{msign}_k(\mathbf{G}),
\end{equation}
which proves \eqref{eq:appendix_prop1}. Consequently, by substituting $\mathbf{G} = \mathbf{AM}$ into this result and utilizing the property $\mathbf{A}^\top \mathbf{A} = \mathbf{I}$, we recover the practical implementation used in Equation~\eqref{eq:spi_practical}:
\begin{equation}
 \mathbf{A}\operatorname{msign}_k(\mathbf{A}^\top \mathbf{A}\mathbf{M})
 = \mathbf{A}\operatorname{msign}_k(\mathbf{M})
 = \operatorname{msign}_k(\mathbf{A}\mathbf{M}).
\end{equation}
This completes the proof.
\end{proof}

\section{Convergence Analysis}
\label{app:proof_convergence}

In this section, we present the convergence analysis of the ZO-MOPI algorithm and provide a detailed proof of Lemma~\ref{lem:spi_approximation_error} and Theorem~\ref{thm:main_convergence} in Section~\ref{sec:theory}.

\begin{lemma}
\label{lem:zo_smoothing_facts}
Suppose assumptions in Theorem~\ref{thm:main_convergence} hold, and define the smoothed objective $f_\mu(\mathbf{X})=\mathbb{E}_{\mathbf{u}\sim U_b}[f(\mathbf{X}+\mu\mathbf{u})]$, where $U_b$ is the uniform or Gaussian distribution over the unit Euclidean ball in dimension $d$, the ZO estimator yields:
\begin{equation}
\mathbb{E}_{\mathbf{u}}\!\left[\widehat{\nabla} f(\mathbf{X})\right]
=
\nabla f_\mu(\mathbf{X}).
\label{eq:zo_unbiased_smoothed_appendix}
\end{equation}
where $f_\mu$ is $L$-smooth. Moreover, for any $\mathbf{X}$,
\begin{equation}
|f_\mu(\mathbf{X})-f(\mathbf{X})|
\le
\frac{L\mu^2}{2},
\qquad
\|\nabla f_\mu(\mathbf{X})-\nabla f(\mathbf{X})\|_F^2
\le
\frac{\mu^2L^2d^2}{4},
\label{eq:zo_smoothing_bias_appendix}
\end{equation}
and
\begin{equation}
\frac{1}{2}\|\nabla f(\mathbf{X})\|_F^2
-\frac{\mu^2L^2d^2}{4}
\le
\|\nabla f_\mu(\mathbf{X})\|_F^2
\le
2\|\nabla f(\mathbf{X})\|_F^2
+\frac{\mu^2L^2d^2}{2}.
\label{eq:true_smoothed_gradient_relation}
\end{equation}
Furthermore, for the smoothed component objectives $f_{i,\mu}$,
\begin{equation}
\frac{1}{n}\sum_{i=1}^n
\|\nabla f_{i,\mu}(\mathbf{X})-\nabla f_\mu(\mathbf{X})\|_F^2
\le
\sigma^2 .
\label{eq:smoothed_variance_bound}
\end{equation}
\end{lemma}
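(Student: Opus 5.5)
We will prove Lemma~\ref{lem:zo_smoothing_facts} with the standard randomized-smoothing toolkit of \cite{nesterov2017random}, working with the Frobenius inner product on the $d$-dimensional parameter space.

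\textbf{Step 1: unbiasedness and smoothness of $f_\mu$.}
\begin{itemize}
\item By Stokes' theorem (uniform ball/sphere case), or Gaussian integration by parts (Gaussian case),
\[
\nabla f_\mu(\mathbf{X})=\tfrac{c_d}{\mu}\,\mathbb{E}_{\mathbf{u}}\bigl[f(\mathbf{X}+\mu\mathbf{u})\mathbf{u}\bigr],
\]
where $c_d$ is the normalization absorbed into the estimator.
\item Symmetry of the distribution $\mathbf{u}\mapsto-\mathbf{u}$ converts this one-sided form into the two-point central difference.
\item Hence $\mathbb{E}_{\mathbf{u}}[\widehat{\nabla} f(\mathbf{X})]=\nabla f_\mu(\mathbf{X})$, which is \eqref{eq:zo_unbiased_smoothed_appendix}.
\item $L$-smoothness of $f_\mu$ follows by differentiating under the expectation: $\nabla f_\mu(\mathbf{X})=\mathbb{E}[\nabla f(\mathbf{X}+\mu\mathbf{u})]$. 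Jensen's inequality then gives $\|\nabla f_\mu(\mathbf{X})-\nabla f_\mu(\mathbf{Y})\|_F\le L\|\mathbf{X}-\mathbf{Y}\|_F$.
\item The $L$-smoothness of $f$ itself comes from averaging the component bounds $\nabla f_i$ via Jensen.
\end{itemize}

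\textbf{Step 2: the two bias bounds in \eqref{eq:zo_smoothing_bias_appendix}.}
\begin{itemize}
\item \emph{Function bias.} Use the descent-lemma sandwich
\[
\bigl|f(\mathbf{X}+\mu\mathbf{u})-f(\mathbf{X})-\mu\langle\nabla f(\mathbf{X}),\mathbf{u}\rangle\bigr|\le\tfrac{L\mu^2}{2}\|\mathbf{u}\|_F^2 .
\]
Take expectations and use $\mathbb{E}\mathbf{u}=0$ to kill the linear term. Then $\mathbb{E}\|\mathbf{u}\|^2\le1$ for the unit ball gives $L\mu^2/2$. In the Gaussian case $\mathbb{E}\|\mathbf{u}\|^2=d$, so one would need to renormalize or absorb a factor $d$ into the constant.
\item \emph{Gradient bias.} Write the gradient through the smoothing identity:
\[
\nabla f_\mu(\mathbf{X})-\nabla f(\mathbf{X})=\tfrac{d}{\mu}\,\mathbb{E}\bigl[(f(\mathbf{X}+\mu\mathbf{u})-f(\mathbf{X})-\mu\langle\nabla f,\mathbf{u}\rangle)\mathbf{u}\bigr],
\]
using $\mathbb{E}[\mathbf{u}\mathbf{u}^\top]\propto I/d$ on the sphere. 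Bounding the bracket by $\tfrac{L\mu^2}{2}$ and $\|\mathbf{u}\|\le1$ gives norm at most $\mu L d/2$. Squaring yields $\mu^2L^2d^2/4$.
\end{itemize}
This step is the main obstacle. The constants depend delicately on which distribution (sphere vs ball vs Gaussian) and which normalization are used. I would fix the unit-sphere convention with the factor $d/\mu$ in the estimator, verify the constant there, and treat the other cases up to absolute constants, since they are absorbed in the $\mathcal{O}(\cdot)$ of Theorem~\ref{thm:main_convergence}.

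\textbf{Step 3: the sandwich \eqref{eq:true_smoothed_gradient_relation}.} This is immediate from Step 2 via $\|a+b\|^2\le2\|a\|^2+2\|b\|^2$.
\begin{itemize}
\item \emph{Upper bound.} Take $a=\nabla f$, $b=\nabla f_\mu-\nabla f$. This gives $2\|\nabla f\|^2+\mu^2L^2d^2/2$.
\item \emph{Lower bound.} Apply the same inequality with $\nabla f=\nabla f_\mu+(\nabla f-\nabla f_\mu)$. This gives $\|\nabla f\|^2\le2\|\nabla f_\mu\|^2+\mu^2L^2d^2/2$, and rearranging yields the claim.
\end{itemize}

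\textbf{Step 4: the variance bound \eqref{eq:smoothed_variance_bound}.}
\begin{itemize}
\item Since smoothing commutes with averaging, $\nabla f_{i,\mu}-\nabla f_\mu=\mathbb{E}_{\mathbf{u}}[\nabla f_i(\mathbf{X}+\mu\mathbf{u})-\nabla f(\mathbf{X}+\mu\mathbf{u})]$.
\item Jensen's inequality on the squared norm gives
\[
\tfrac1n\sum_i\|\cdot\|_F^2\le\mathbb{E}_{\mathbf{u}}\Bigl[\tfrac1n\sum_i\|\nabla f_i(\mathbf{Y})-\nabla f(\mathbf{Y})\|_F^2\Bigr]_{\mathbf{Y}=\mathbf{X}+\mu\mathbf{u}}\le\sigma^2,
\]
by the pointwise variance assumption of Theorem~\ref{thm:main_convergence}.
\end{itemize}
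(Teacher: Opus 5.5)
Your proposal is correct and follows the argument the paper relies on. The paper gives no proof of its own and defers to Liu et al.\ (2018), whose proof is the same randomized-smoothing route: the sphere identity $\nabla f_\mu(\mathbf{X})=\frac{d}{\mu}\mathbb{E}[f(\mathbf{X}+\mu\mathbf{u})\mathbf{u}]$ with $\mathbb{E}[\mathbf{u}\mathbf{u}^\top]=I/d$, the splitting $\|a+b\|^2\le 2\|a\|^2+2\|b\|^2$, and Jensen for the smoothed variance. Your caveat is also justified: the stated constants hold only for uniform-ball smoothing with the $d/\mu$-scaled sphere estimator, because under Gaussian smoothing the function bias picks up a factor $d$ (for example, $f_\mu-f=L\mu^2 d/2$ exactly when $f=\tfrac{L}{2}\|\mathbf{X}\|_F^2$).
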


The detailed proof of Lemma~\ref{lem:zo_smoothing_facts} can be found in \cite{liu2018zeroth}.

\subsection*{Proof of Lemma~\ref{lem:spi_approximation_error}}

\begin{proof}
Our analysis is inspired by~\cite{kexuefm-11710}. First, let $\mathbf{S}_t = \mathbf{M}_t^\top \mathbf{M}_t$ be the covariance matrix at iteration $t$, $\mathbf{V}_{\star,t}$ and $\mathbf{V}_{\bot,t}$ as the true top-$k$ right singular subspace and its orthogonal complement exactly at step $t$. The SVD of $\mathbf{S}_t$ can be expressed as:
\[
\mathbf{S}_t
=
\begin{bmatrix}
\mathbf{V}_{\star,t} & \mathbf{V}_{\bot,t}
\end{bmatrix}
\begin{bmatrix}
\boldsymbol{\Lambda}_{1,t} & \mathbf{0}\\
\mathbf{0} & \boldsymbol{\Lambda}_{2,t}
\end{bmatrix}
\begin{bmatrix}
\mathbf{V}_{\star,t} & \mathbf{V}_{\bot,t}
\end{bmatrix}^{\top},
\]
where $\boldsymbol{\Lambda}_{1,t}=\operatorname{diag}(\sigma_{1,t}^2,\ldots,\sigma_{k,t}^2)$ and $\boldsymbol{\Lambda}_{2,t}=\operatorname{diag}(\sigma_{k+1,t}^2,\ldots,\sigma_{r,t}^2)$.

Following the SPI procedure, the subspace $\mathbf{V}_t$ is updated through the projection and QR decomposition (lines 1-2 in Algorithm~\ref{alg:streaming_pi}):
\begin{equation}
\mathbf{V}_t,\mathbf{R}_t\leftarrow \mathrm{QR}(\mathbf{S}_t\mathbf{V}_{t-1}).
\end{equation}
Equivalently, this yields:
\begin{equation}
\mathbf{V}_t\mathbf{R}_t = \mathbf{S}_t\mathbf{V}_{t-1}.
\label{eq:spi_qr_equivalence}
\end{equation}

We define $\mathbf{C}_t=\mathbf{V}_{\star,t}^\top\mathbf{V}_t$ and $\mathbf{D}_t=\mathbf{V}_{\bot,t}^\top\mathbf{V}_t$, so that the tracking error at step $t$ is exactly $\mathcal{T}_t=\mathbf{D}_t\mathbf{C}_t^{-1}$. Projecting \eqref{eq:spi_qr_equivalence} onto the current true bases $\mathbf{V}_{\star,t}$ and $\mathbf{V}_{\bot,t}$ gives:
\begin{equation}
\mathbf{C}_t\mathbf{R}_t = \boldsymbol{\Lambda}_{1,t}\mathbf{V}_{\star,t}^\top\mathbf{V}_{t-1},
\qquad
\mathbf{D}_t\mathbf{R}_t = \boldsymbol{\Lambda}_{2,t}\mathbf{V}_{\bot,t}^\top\mathbf{V}_{t-1}.
\label{eq:spi_projected_recursion_dynamic}
\end{equation}

Combining $\mathbf{I} = \mathbf{V}_{\star,t-1}\mathbf{V}_{\star,t-1}^\top + \mathbf{V}_{\bot,t-1}\mathbf{V}_{\bot,t-1}^\top$
and \eqref{eq:spi_projected_recursion_dynamic}, we get:
\begin{equation}
\mathbf{V}_{t-1} = \mathbf{I}\mathbf{V}_{t-1} = \mathbf{V}_{\star,t-1}(\mathbf{V}_{\star,t-1}^\top\mathbf{V}_{t-1}) + \mathbf{V}_{\bot,t-1}(\mathbf{V}_{\bot,t-1}^\top\mathbf{V}_{t-1}) = \mathbf{V}_{\star,t-1}\mathbf{C}_{t-1} + \mathbf{V}_{\bot,t-1}\mathbf{D}_{t-1}.
\label{eq:v_decomposition}
\end{equation}


Substituting  \eqref{eq:v_decomposition} the right-hand side of \eqref{eq:spi_projected_recursion_dynamic}, we define the basis transition matrices $\boldsymbol{\Delta}_{ij}^{(t)}$:
\begin{align}
\mathbf{V}_{\star,t}^\top\mathbf{V}_{t-1} 
&= \underbrace{\mathbf{V}_{\star,t}^\top\mathbf{V}_{\star,t-1}}_{\boldsymbol{\Delta}_{11}^{(t)}}\mathbf{C}_{t-1} + \underbrace{\mathbf{V}_{\star,t}^\top\mathbf{V}_{\bot,t-1}}_{\boldsymbol{\Delta}_{12}^{(t)}}\mathbf{D}_{t-1} 
= \left(\boldsymbol{\Delta}_{11}^{(t)} + \boldsymbol{\Delta}_{12}^{(t)}\mathcal{T}_{t-1}\right)\mathbf{C}_{t-1}, \label{eq:delta_expansion_1} \\
\mathbf{V}_{\bot,t}^\top\mathbf{V}_{t-1} 
&= \underbrace{\mathbf{V}_{\bot,t}^\top\mathbf{V}_{\star,t-1}}_{\boldsymbol{\Delta}_{21}^{(t)}}\mathbf{C}_{t-1} + \underbrace{\mathbf{V}_{\bot,t}^\top\mathbf{V}_{\bot,t-1}}_{\boldsymbol{\Delta}_{22}^{(t)}}\mathbf{D}_{t-1} 
= \left(\boldsymbol{\Delta}_{21}^{(t)} + \boldsymbol{\Delta}_{22}^{(t)}\mathcal{T}_{t-1}\right)\mathbf{C}_{t-1}. \label{eq:delta_expansion_2}
\end{align}

Substituting \eqref{eq:delta_expansion_1} and \eqref{eq:delta_expansion_2} back into the projected equations \eqref{eq:spi_projected_recursion_dynamic}, we obtain:
\begin{equation}
\mathbf{C}_t\mathbf{R}_t = \boldsymbol{\Lambda}_{1,t}\left(\boldsymbol{\Delta}_{11}^{(t)} + \boldsymbol{\Delta}_{12}^{(t)}\mathcal{T}_{t-1}\right)\mathbf{C}_{t-1},
\label{eq:c_r_substituted}
\end{equation}
\begin{equation}
\mathbf{D}_t\mathbf{R}_t = \boldsymbol{\Lambda}_{2,t}\left(\boldsymbol{\Delta}_{21}^{(t)} + \boldsymbol{\Delta}_{22}^{(t)}\mathcal{T}_{t-1}\right)\mathbf{C}_{t-1}.
\label{eq:d_r_substituted}
\end{equation}

From equation \eqref{eq:c_r_substituted}, we isolate $\mathbf{R}_t$:
\begin{equation}
\mathbf{R}_t = \mathbf{C}_t^{-1}\boldsymbol{\Lambda}_{1,t}\left(\boldsymbol{\Delta}_{11}^{(t)} + \boldsymbol{\Delta}_{12}^{(t)}\mathcal{T}_{t-1}\right)\mathbf{C}_{t-1},
\end{equation}
which implies:
\begin{equation}
\mathbf{R}_t^{-1} = \mathbf{C}_{t-1}^{-1}\left(\boldsymbol{\Delta}_{11}^{(t)} + \boldsymbol{\Delta}_{12}^{(t)}\mathcal{T}_{t-1}\right)^{-1}\boldsymbol{\Lambda}_{1,t}^{-1}\mathbf{C}_t.
\end{equation}

Substituting $\mathbf{R}_t^{-1}$ into equation \eqref{eq:d_r_substituted} yields:
\[
\mathbf{D}_t = \boldsymbol{\Lambda}_{2,t}\left(\boldsymbol{\Delta}_{21}^{(t)} + \boldsymbol{\Delta}_{22}^{(t)}\mathcal{T}_{t-1}\right)\mathbf{C}_{t-1} \cdot \mathbf{C}_{t-1}^{-1}\left(\boldsymbol{\Delta}_{11}^{(t)} + \boldsymbol{\Delta}_{12}^{(t)}\mathcal{T}_{t-1}\right)^{-1}\boldsymbol{\Lambda}_{1,t}^{-1}\mathbf{C}_t.
\]
The terms $\mathbf{C}_{t-1}$ and $\mathbf{C}_{t-1}^{-1}$ cancel out. Multiplying by $\mathbf{C}_t^{-1}$ on the right gives the explicit recurrence relation for the tracking error:
\begin{equation}
\mathcal{T}_t = \boldsymbol{\Lambda}_{2,t}\left(\boldsymbol{\Delta}_{21}^{(t)} + \boldsymbol{\Delta}_{22}^{(t)}\mathcal{T}_{t-1}\right)\left(\boldsymbol{\Delta}_{11}^{(t)} + \boldsymbol{\Delta}_{12}^{(t)}\mathcal{T}_{t-1}\right)^{-1}\boldsymbol{\Lambda}_{1,t}^{-1}.
\label{eq:exact_tracking_error_recurrence}
\end{equation}

Now, we take the spectral norm ($\|\cdot\|_2$) on both sides of \eqref{eq:exact_tracking_error_recurrence}. Recall that the spectral gap implies $\|\boldsymbol{\Lambda}_{2,t}\|_2\|\boldsymbol{\Lambda}_{1,t}^{-1}\|_2 \le \gamma < 1$. Under the smooth drift assumption, the basis transitions are bounded by the drift constant $\delta$: $\|\boldsymbol{\Delta}_{21}^{(t)}\|_2 \le \delta$ and $\|\boldsymbol{\Delta}_{12}^{(t)}\|_2 \le \delta$. Due to the orthonormality of the bases, we also have $\sigma_{\min}(\boldsymbol{\Delta}_{11}^{(t)}) \ge \sqrt{1-\delta^2}$ and $\|\boldsymbol{\Delta}_{22}^{(t)}\|_2 \le 1$. Applying these bounds yields:
\begin{equation}
\|\mathcal{T}_t\|_2 
\le \gamma \frac{\|\boldsymbol{\Delta}_{21}^{(t)} + \boldsymbol{\Delta}_{22}^{(t)}\mathcal{T}_{t-1}\|_2}{\sigma_{\min}\left(\boldsymbol{\Delta}_{11}^{(t)} + \boldsymbol{\Delta}_{12}^{(t)}\mathcal{T}_{t-1}\right)} 
\le \gamma \frac{\|\mathcal{T}_{t-1}\|_2 + \delta}{\sqrt{1-\delta^2} - \delta\|\mathcal{T}_{t-1}\|_2}.
\end{equation}

For a sufficiently small drift $\delta$ and previous tracking error $\|\mathcal{T}_{t-1}\|_2$, the denominator is well-behaved and close to 1. Thus, we can approximate the recurrence mapping as:
\begin{equation}
\|\mathcal{T}_t\|_2 \lesssim \gamma \|\mathcal{T}_{t-1}\|_2 + \gamma\delta.
\end{equation}
This completes the proof. This result demonstrates that under the smooth drift assumption, the tracking error remains bounded by $\mathcal{O}(\frac{\gamma\delta}{1-\gamma})$, confirming that SPI can stably track the evolving dominant subspace throughout the optimization process.
\end{proof}

\subsection*{Proof of Theorem \ref{thm:main_convergence}}

\textit{Proof.} Since $f_\mu$ is $L$-smooth, $A_t^\top A_t = I$, and $O_t$ has rank at most $k$ with singular values bounded by one, taking the expectation yields:
\begin{align}
\mathbb{E}_t[f_\mu(X_{t+1})] &\le f_\mu(X_t) - \eta\mathbb{E}_t[\langle \nabla f_\mu(X_t), A_t O_t \rangle] + \frac{L\eta^2}{2}\mathbb{E}_t[||A_t O_t||_F^2] \notag \\
&\le f_\mu(X_t) - \eta\mathbb{E}_t[\langle \nabla f_\mu(X_t), A_t O_t \rangle] + \frac{L\eta^2 k}{2}. \label{eq:descent}
\end{align}

The inner product term can be decomposed as:
\begin{equation} \label{eq:inner_prod_decomp}
\mathbb{E}_t[\langle \nabla f_\mu(X_t), A_t O_t \rangle] = \mathbb{E}_t[\langle G_t^\mu, O_t \rangle] = \langle G_t^\mu, O_t^* \rangle + \mathbb{E}_t[\langle G_t^\mu, O_t - O_t^* \rangle]
\end{equation}
where $G_t^\mu = A_t^\top \nabla f_\mu(X_t)$ is the true smoothed gradient projected into the subspace, and $O_t^* = msign_k(G_t^\mu)$. Let $\sigma_{1,t} \ge \dots \ge \sigma_{r,t}$ be the singular values of $G_t^\mu$, and define
\begin{equation}
\alpha_t := \frac{\sum_{i=1}^k \sigma_{i,t}}{||\nabla f_\mu(X_t)||_F^2}, \quad \alpha := \min_{0 \le t \le T-1} \alpha_t \in [0,1].
\end{equation}

By definition, we bound $\langle G_t^\mu, O_t^* \rangle$:
\begin{equation} \label{eq:alpha_bound}
\langle G_t^\mu, O_t^* \rangle = \sum_{i=1}^k \sigma_{i,t} = \alpha_t ||\nabla f_\mu(X_t)||_F^2 \ge \alpha ||\nabla f_\mu(X_t)||_F^2.
\end{equation}

Substituting \eqref{eq:alpha_bound} into \eqref{eq:inner_prod_decomp} and applying Young's inequality gives:
\begin{align}
\mathbb{E}_t[\langle \nabla f_\mu(X_t), A_t O_t \rangle] &\ge \alpha ||\nabla f_\mu(X_t)||_F^2 - \frac{\alpha}{2}||G_t^\mu||_F^2 - \frac{1}{2\alpha}\mathbb{E}_t[||O_t - O_t^*||_F^2] \notag \\
&\ge \frac{\alpha}{2} ||\nabla f_\mu(X_t)||_F^2 - \frac{1}{2\alpha}\mathbb{E}_t[||O_t - O_t^*||_F^2]. \label{eq:youngs_ineq}
\end{align}
In the second inequality, we used $||G_t^\mu||_F^2 = ||A_t^\top \nabla f_\mu(X_t)||_F^2 \le ||\nabla f_\mu(X_t)||_F^2$.

Next, we bound the orthogonalization error $\mathbb{E}_t[||O_t - O_t^*||_F^2]$. Since SPI tracks the dominant subspace of the momentum $M_t$, we have $O_t \approx msign_k(M_t)$. By the assumption in Theorem~\ref{thm:main_convergence}, the partial orthogonalization is $L_m$-Lipschitz continuous, which yields:
\begin{equation}
\mathbb{E}_t[||O_t - O_t^*||_F^2] \le L_m^2 \mathbb{E}_t[||M_t - G_t^\mu||_F^2].
\end{equation}
Because the random perturbations $B_t^{(i)}$ in our ZO estimator are strictly constrained within the $r \times n$ subspace, the effective search dimension is $d_{sub} = rn$. Consequently, the variance of the tracked momentum $\sigma_M^2 := \mathbb{E}_t[||M_t - G_t^\mu||_F^2]$ is bounded by the standard ZO variance in the reduced space:
\begin{equation} \label{eq:error_bound}
\mathbb{E}_t[||O_t - O_t^*||_F^2] \le L_m^2 \sigma_M^2 \le \mathcal{O}\left( L_m^2 d_{sub}\sigma^2 + L_m^2 \mu^2 L^2 d_{sub}^2 \right).
\end{equation}

Substituting \eqref{eq:youngs_ineq} and \eqref{eq:error_bound} back into \eqref{eq:descent}:
\begin{equation}
\mathbb{E}_t[f_\mu(X_{t+1})] \le f_\mu(X_t) - \frac{\alpha\eta}{2} ||\nabla f_\mu(X_t)||_F^2 + \frac{\eta L_m^2}{2\alpha} \sigma_M^2 + \frac{L\eta^2 k}{2}.
\end{equation}

Taking the total expectation, summing over $t = 0, \dots, T-1$, and using $f_\mu(X_T) \ge f_\mu^*$ yields:
\begin{equation} \label{eq:smoothed_sum}
\frac{1}{T}\sum_{t=0}^{T-1}\mathbb{E}[||\nabla f_\mu(X_t)||_F^2] \le \frac{2\Delta_\mu}{\alpha\eta T} + \frac{L\eta k}{\alpha} + \frac{L_m^2}{\alpha^2} \sigma_M^2.
\end{equation}

It remains to translate the convergence result of smoothed gradients into a statement about the true gradient. By adapting Lemma B.1 to our subspace sampling, the smoothing bias is bounded by the subspace dimension $d_{sub}$ rather than the ambient dimension $d$:
\begin{equation} \label{eq:translation}
||\nabla f(X_t)||_F^2 \le 2||\nabla f_\mu(X_t)||_F^2 + \frac{\mu^2 L^2 d_{sub}^2}{2}.
\end{equation}

Combining \eqref{eq:smoothed_sum} and \eqref{eq:translation}, we obtain:
\begin{equation}
\frac{1}{T}\sum_{t=0}^{T-1}\mathbb{E}[||\nabla f(X_t)||_F^2] \le \frac{4\Delta_\mu}{\alpha\eta T} + \frac{2L\eta k}{\alpha} + \frac{2L_m^2}{\alpha^2} \sigma_M^2 + \frac{\mu^2 L^2 d_{sub}^2}{2}.
\end{equation}

Let $f^* = \inf_X f(X)$ and $\Delta_0 = f(X_0) - f^*$. Based on the subspace smoothing bias, $\Delta_\mu \le \Delta_0 + L\mu^2$, by setting the learning rate $\eta = \Theta\left(\sqrt{(\Delta_0 + L\mu^2)/(LkT)}\right)$, we arrive at the final rate:
\begin{equation}
\frac{1}{T}\sum_{t=0}^{T-1}\mathbb{E}[||\nabla f(X_t)||_F^2] \le \mathcal{O}\left( \frac{1}{\alpha}\sqrt{\frac{(\Delta_0+L\mu^2)Lk}{T}} + \frac{L_m^2}{\alpha^2} \sigma_M^2 + \mu^2 L^2 d_{sub}^2 \right).
\end{equation}
This completes the proof, demonstrating that our method efficiently converges to a stationary point with a dependence on the reduced subspace dimension $d_{sub} = rn$ rather than the full parameter dimension. \hfill $\square$
\section{Complexity and Memory Analysis}
\label{app:complexity_analysis}

In Table~\ref{tab:complexity_analysis}, we provide a simple comparison of orthogonalization complexity for ZO-Muon~\citep{lang2026powering} and our method. The key distinction is that ZO-Muon performs Newton-Schulz iterations directly on a subspace matrix $\mathbf{G} \in \mathbb{R}^{r \times n}$, whereas our method applies partial orthogonalization to the subspace momentum $\mathbf{M} \in \mathbb{R}^{r \times n}$ with $k \leq r \ll \min(m,n)$. We only extract top-$k$ dominant spectral directions and eliminate the rest with high noise, which leads to reduced computational cost.

Besides, though we require additional memory to store the power-iteration state $\mathbf{V} \in \mathbb{R}^{n \times k}$ and the subspace momentum $\mathbf{M} \in \mathbb{R}^{r \times n}$, the additional memory cost is negligible compared to the memory storing the full weight matrix (i.e., $mn$) because $r$ and $k$ are much smaller than the matrix dimension. The empirical memory comparison is shown in Table~\ref{tab:efficiency_comparison}. We compare our method with ZO baselines through fine-tuning OPT-13B on SST-2~\citep{wang2019superglue,zhang2022opt}. Our method reaches the highest accuracy of 91.7\%, with almost the same wall-clock time and slightly additional memory usage. These results show that our method achieves a better accuracy performance without the requirement of extra memory and computation overhead.

\begin{table}[t]
    \centering
    \small
    \setlength{\tabcolsep}{10pt}
    \caption{Per-step orthogonalization complexity and optimizer-state memory cost. Here $m$ and $n$ denote the dimensions of the full matrix, and $r$ is the rank of the subspace in ZO-Muon and our method. $k$ is the spectral rank in the SPI, where $k \leq r \ll \min(m,n)$.}
    \label{tab:complexity_analysis}
    \begin{tabular}{lll}
    \toprule
    \textbf{Method} & \textbf{Complexity} & \textbf{Memory Cost} \\
    \midrule
    ZO-Muon & $\mathcal{O}(r^2 n)$ & $0$ \\
    ZO-MOPI & $\mathcal{O}(rnk + nk^2)$ & $rn + nk$ \\
    \bottomrule
\end{tabular}
\end{table}


\begin{table*}[t]
    \centering
    \small
    \setlength{\tabcolsep}{8pt}
    \caption{Comparison of  the efficiency in terms of iteration steps, number of queries, runtime, memory overhead, and accuracy for fine-tuning OPT-13B on SST-2. }
    \label{tab:efficiency_comparison}
    \begin{tabular}{lccccc}
        \toprule
        \textbf{Method} & \textbf{Steps}$\downarrow$ & \textbf{Queries}$\downarrow$ & \textbf{Runtime}$\downarrow$ & \textbf{Memory}$\downarrow$ & \textbf{Accuracy}$\uparrow$ \\
        \midrule
        Adam & 625 & 625 & 10min & 4 $\times$ 63.7 GB & 95.3 \\
        LoRA & 625 & 625 & 6min & 2 $\times$ 52.7 GB & 94.8 \\
        \midrule
        MeZO & 10k & 20k & 2.74h & 25.86GB & 88.3 \\
        LOZO & 10k & 20k & 2.64h & 25.40GB & 87.7 \\
        ZO-Muon & 4k & 20k & 2.05h & 26.22GB & 90.4 \\
        \midrule
        \textbf{ZO-MOPI} & 4k & 20k & 2.03h & 26.56GB & 91.7 \\
        \bottomrule
    \end{tabular}
\end{table*}
\section{Hyperparameter Study}
\label{app:hyper_study}
\begin{figure}[t]
    \centering
    \setlength{\tabcolsep}{1pt}
    \begin{tabular}{cc}
        \includegraphics[width=0.4\linewidth]{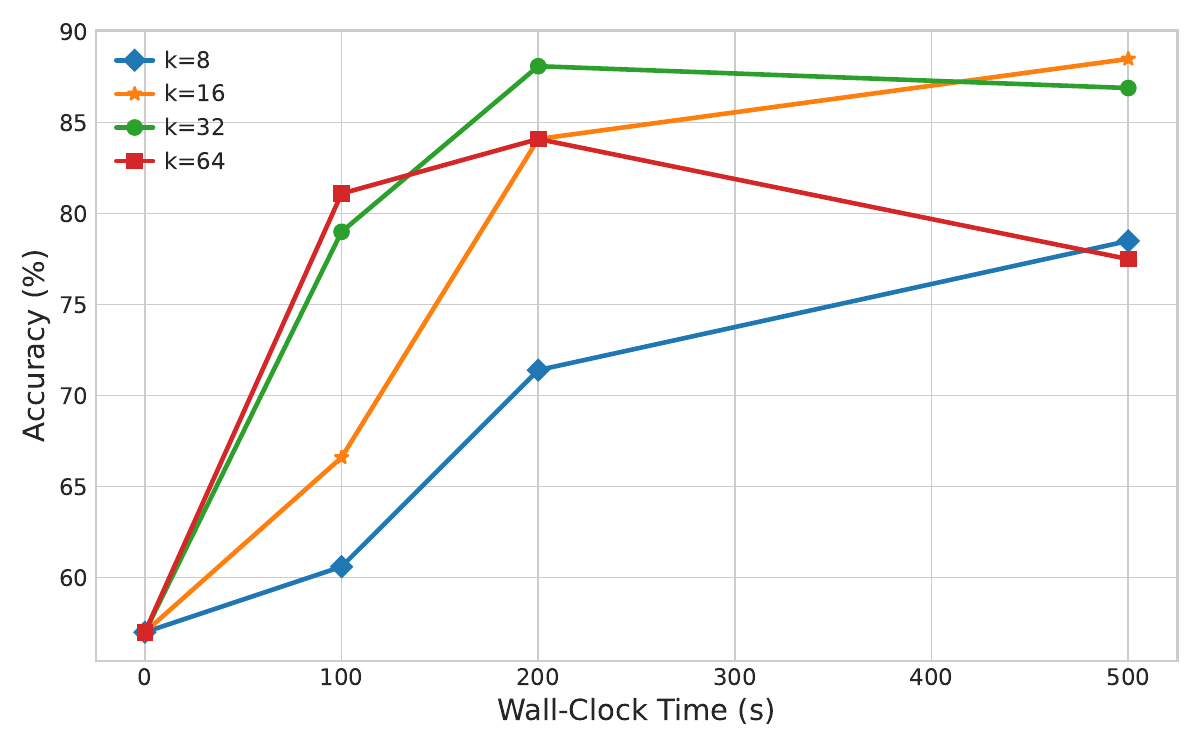} &
        \includegraphics[width=0.4\linewidth]{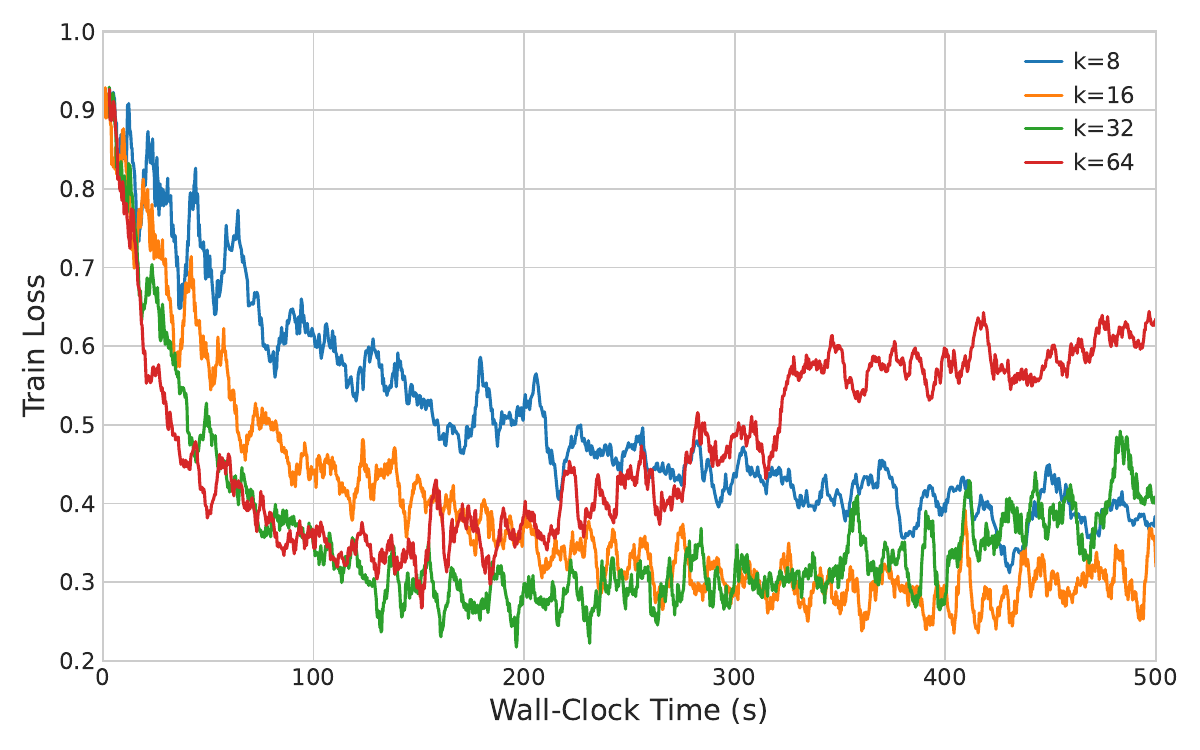} \\
        \small (a) Accuracy vs. wall-clock time &
        \small (b) Training loss vs. wall-clock time
    \end{tabular}
    \caption{Effect of the spectral rank $k$ for fine-tuning OPT-1.3B on SST-2. The two panels show accuracy and training loss under different choices of $k$.}
    \label{fig:hyper_k}
\end{figure}

\begin{figure}[t]
    \centering
    \setlength{\tabcolsep}{1pt}
    \begin{tabular}{cc}
        \includegraphics[width=0.4\linewidth]{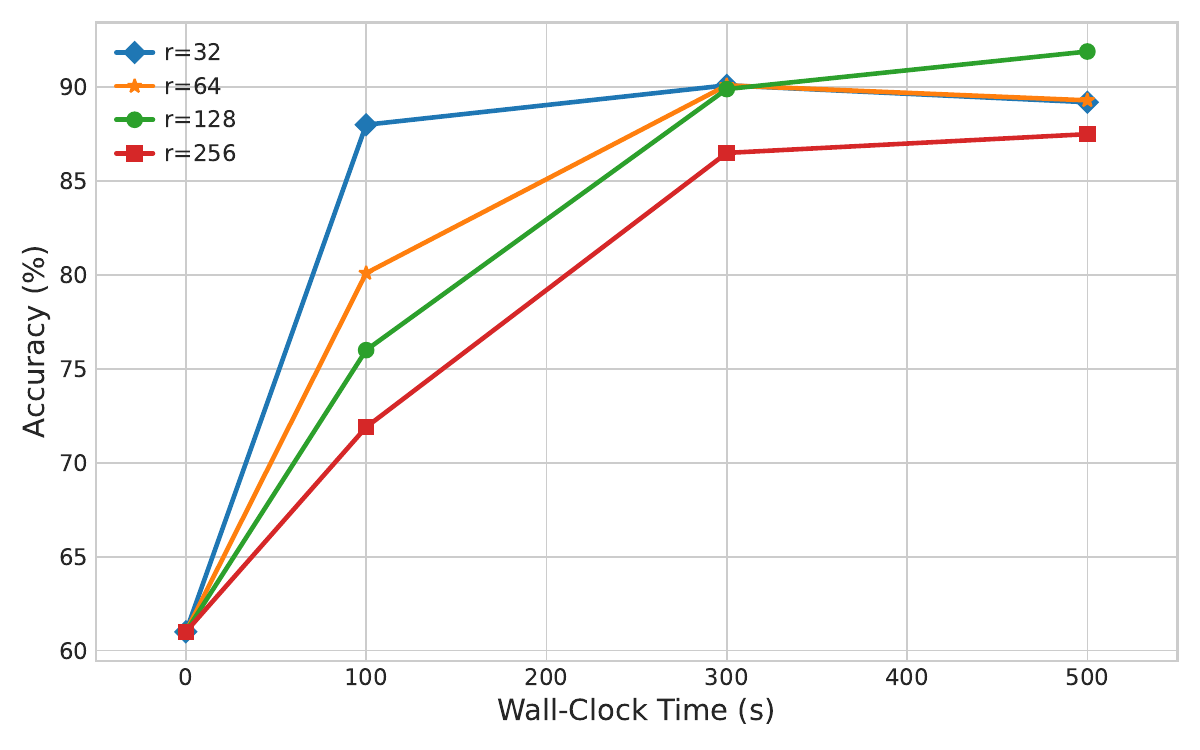} &
        \includegraphics[width=0.4\linewidth]{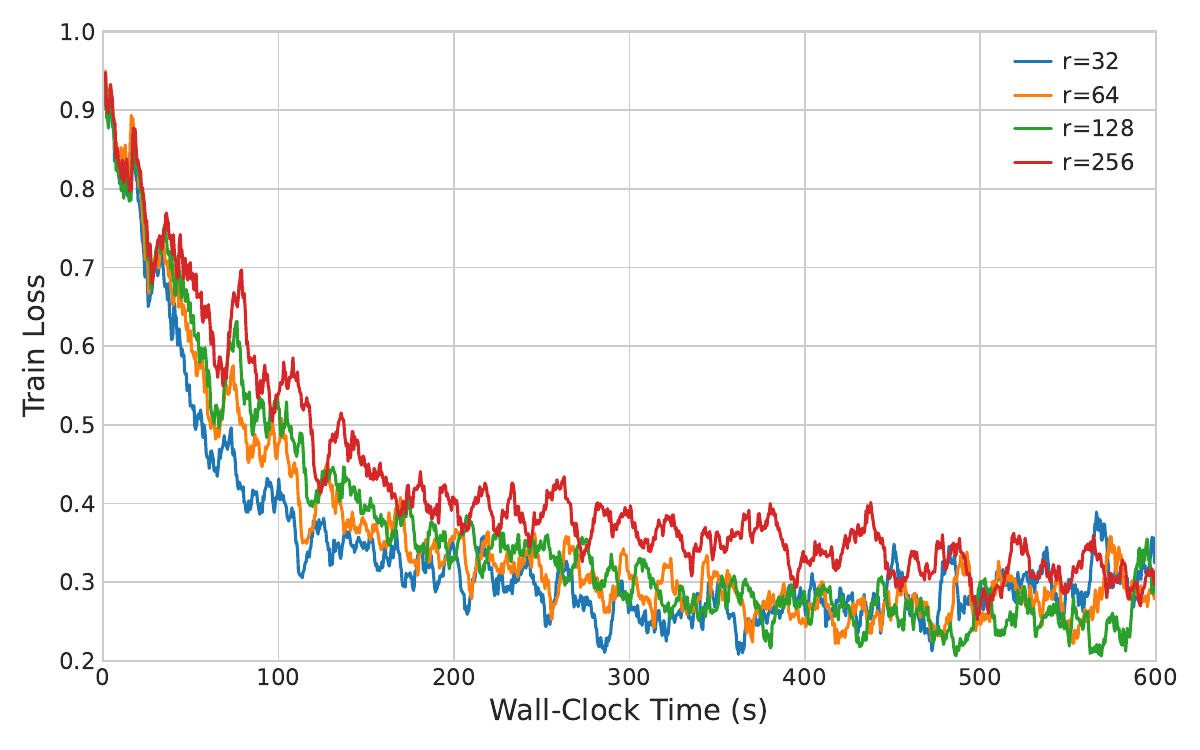} \\
        \small (a) Accuracy vs. wall-clock time &
        \small (b) Training loss vs. wall-clock time
    \end{tabular}
    \caption{Effect of the subspace rank $r$ for fine-tuning OPT-2.7B on SST-2. The two panels show accuracy and training loss under different choices of $r$.}
    \label{fig:hyper_r}
\end{figure}

\paragraph{Different spectral rank $k$}
The spectral rank $k$ controls how many dominant directions we track during the SPI process. We study the effect of different $k$ on fine-tuning OPT-1.3B on SST-2. As shown in Figure~\ref{fig:hyper_k}, when $k$ is too small, the method retains too few dominant directions and cannot sufficiently extract useful spectral information, leading to slower convergence and slower accuracy improvement. When $k$ is too large, SPI includes weaker and noisier spectral components; although this may speed up early training, it reduces the noise-suppression effect and leads to a lower peak accuracy. Thus, an intermediate $k$ gives the best trade-off between signal extraction and noise filtering.


\paragraph{Different subspace rank $r$}
The subspace rank $r$ determines the effective dimension of gradient estimation. We study its effect on fine-tuning OPT-2.7B on SST-2. As shown in Figure~\ref{fig:hyper_r}, both too small and too large $r$ can hurt performance. When $r$ is too small, the estimator has lower variance and can converge quickly at the early stage, but the subspace may miss useful spectral information, leading to a lower final accuracy. When $r$ is too large, the estimator includes more noisy directions, which weakens the variance-reduction effect and makes it harder for SPI to suppress noise. Therefore, a moderate subspace rank provides a better balance between spectral coverage and noise control.

\section{Detailed Experiment Setup}
\label{app:details}
\begin{figure}[t]
    \centering
    \setlength{\tabcolsep}{1pt}
    \begin{tabular}{@{}cc@{}}
        \includegraphics[width=0.4\linewidth]{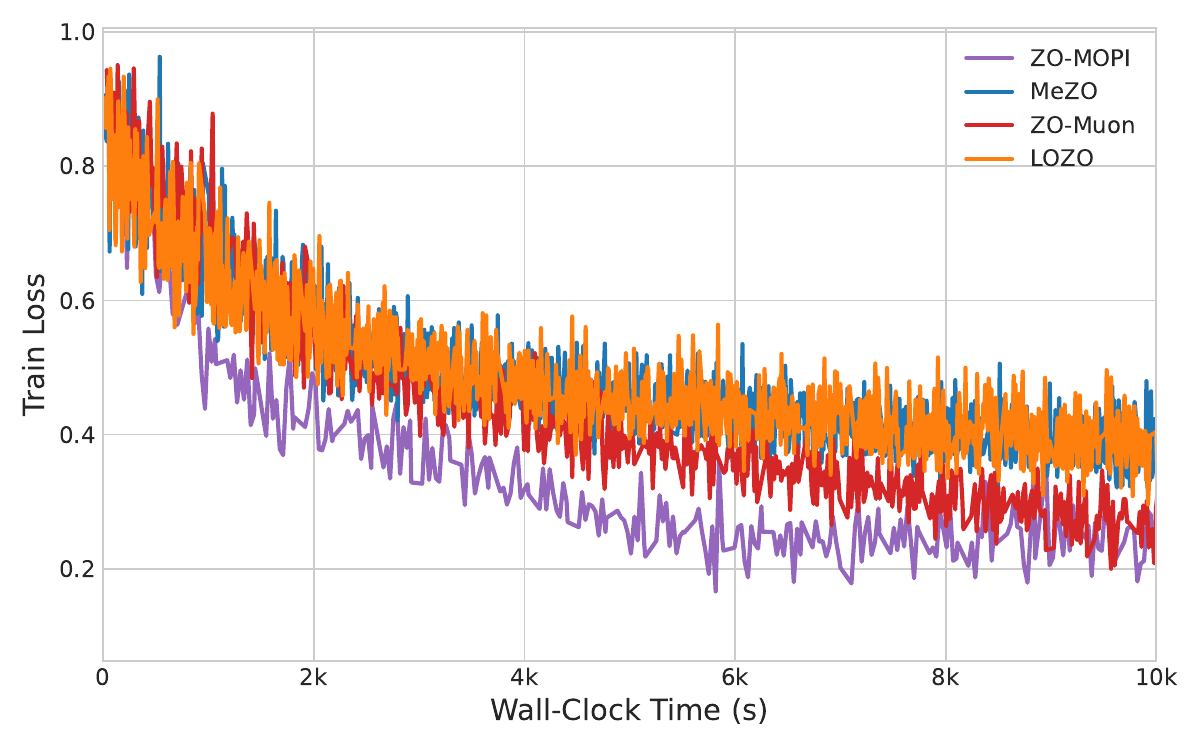} &
        \includegraphics[width=0.4\linewidth]{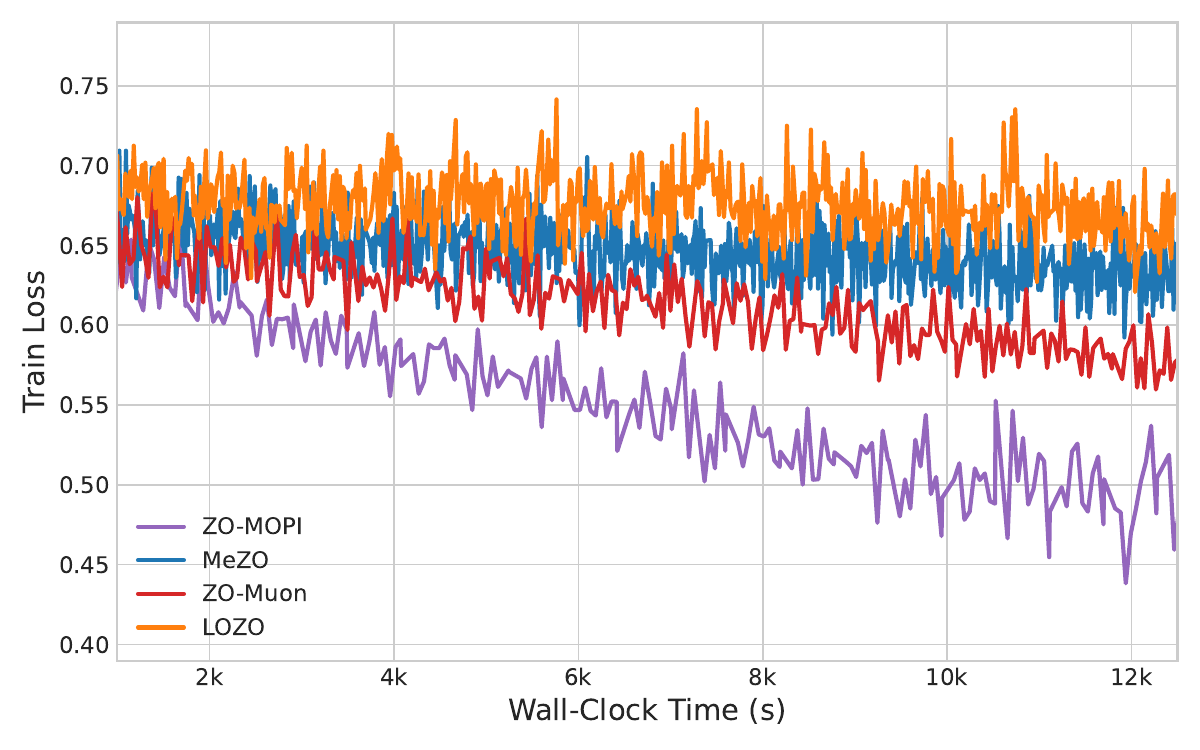} \\
        \small (a) SST-2 & \small (b) RTE \\[6pt]
        \includegraphics[width=0.4\linewidth]{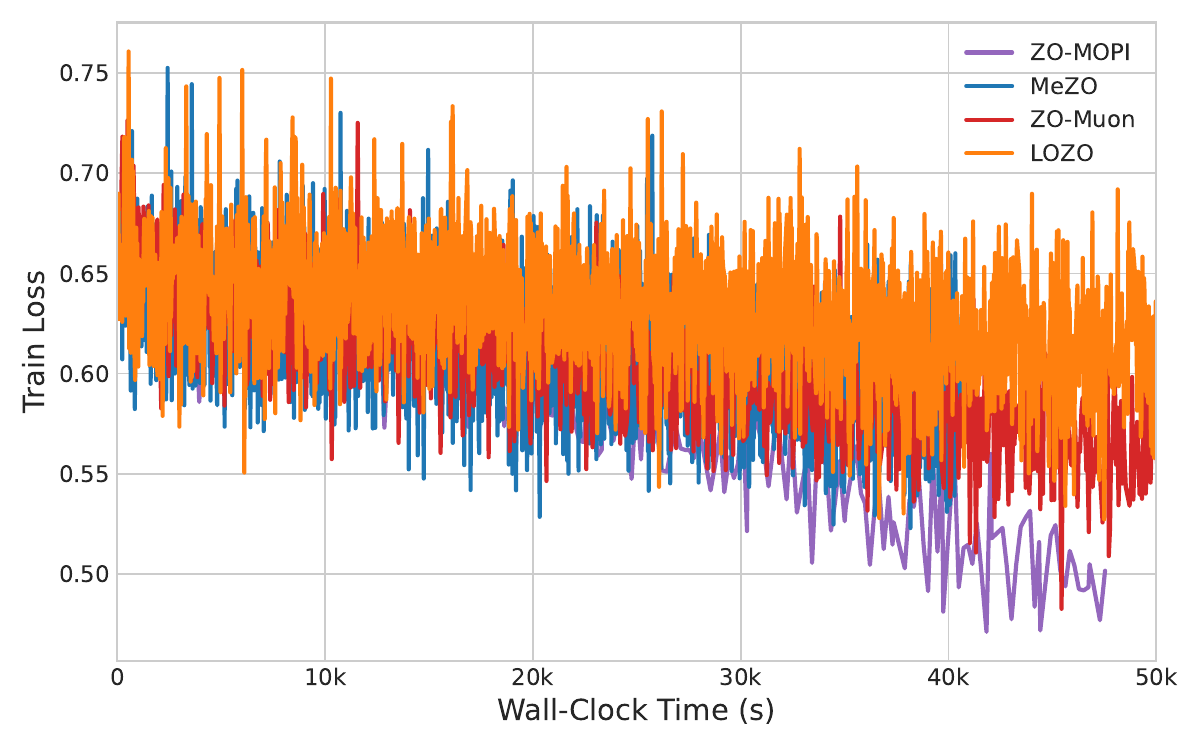} &
        \includegraphics[width=0.4\linewidth]{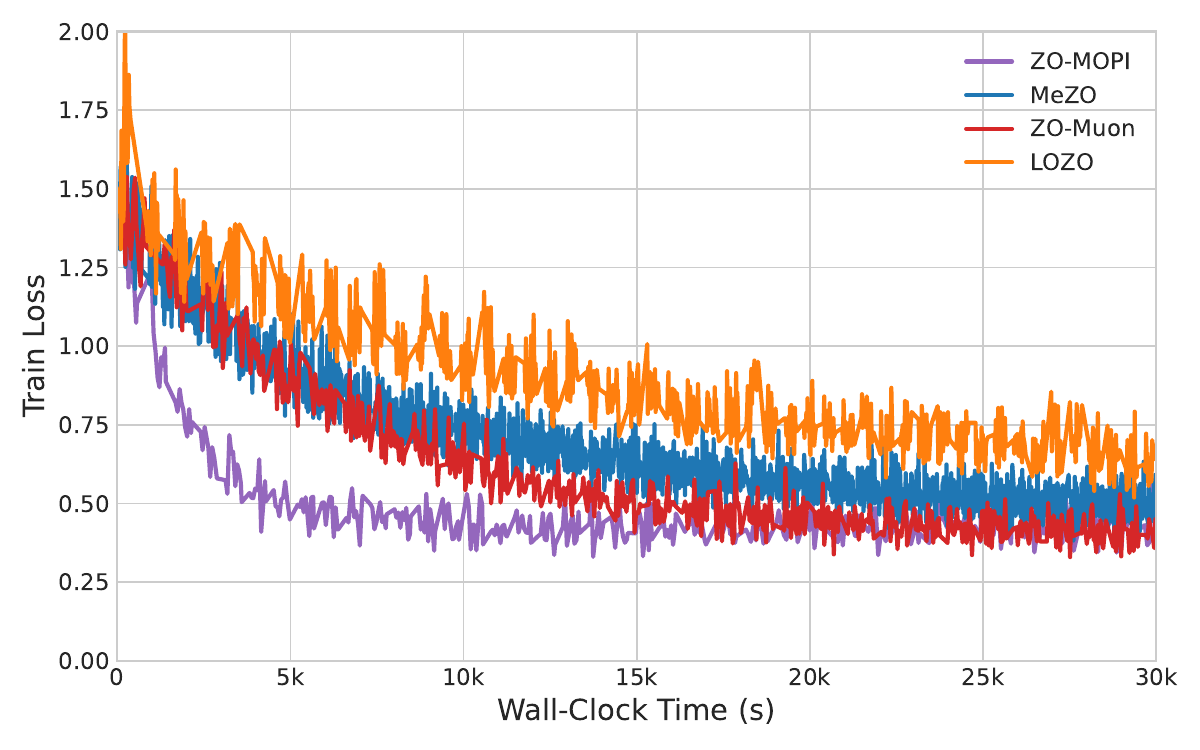} \\
        \small (c) BoolQ & \small (d) SQuAD
    \end{tabular}
    \caption{OPT-13B training loss curves across four SuperGLUE tasks. Each panel reports training loss versus wall-clock time.}
    \label{fig:opt13b_train_loss}
\end{figure}

\begin{figure}[t]
    \centering
    \setlength{\tabcolsep}{1pt}
    \begin{tabular}{@{}cc@{}}
        \includegraphics[width=0.4\linewidth]{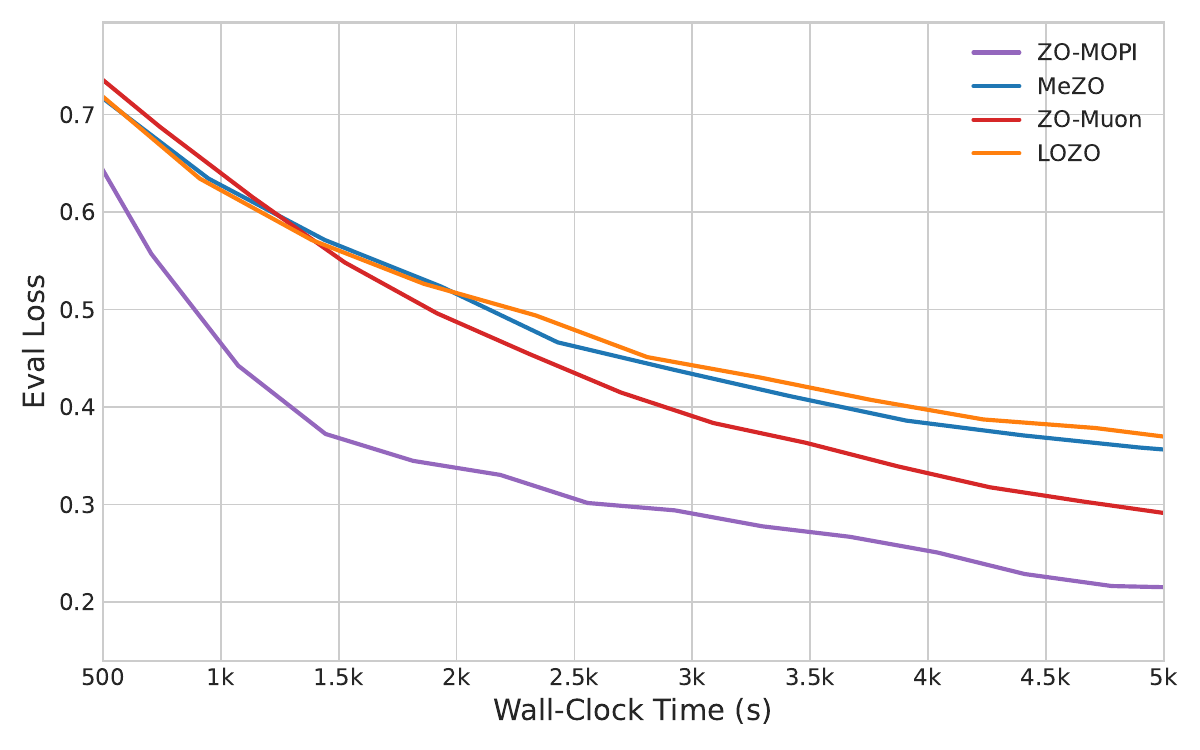} &
        \includegraphics[width=0.4\linewidth]{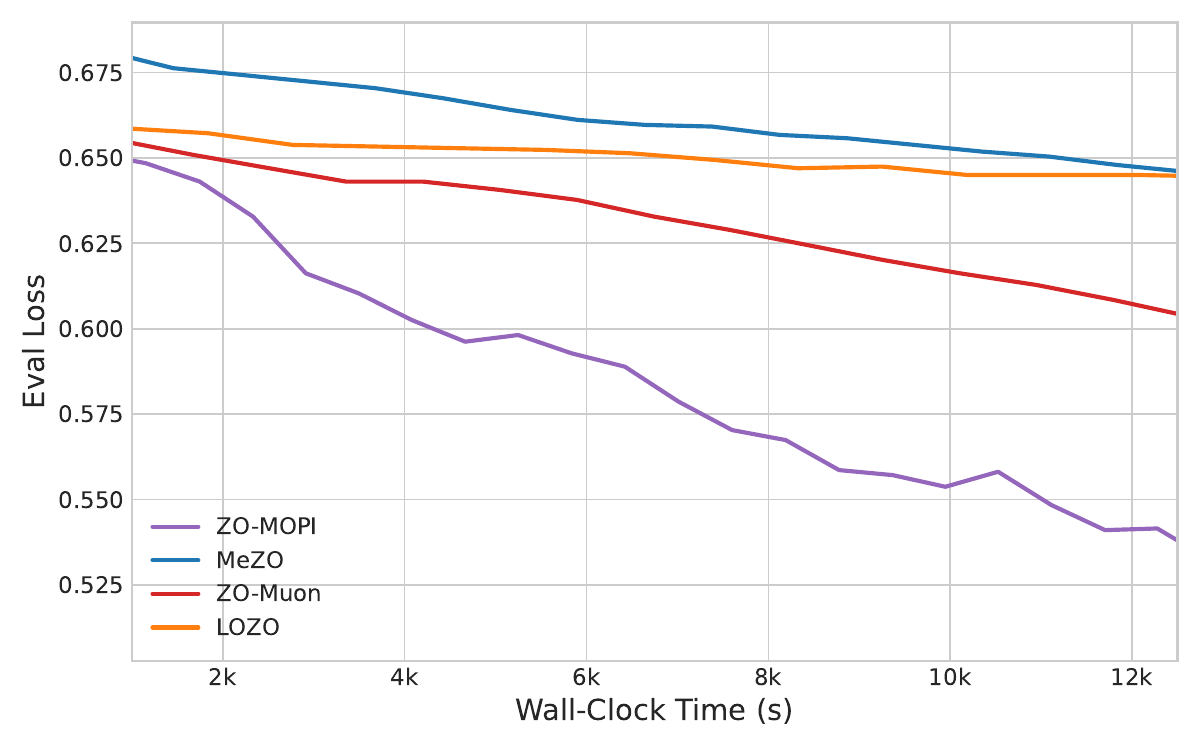} \\
        \small (a) SST-2 & \small (b) RTE \\[6pt]
        \includegraphics[width=0.4\linewidth]{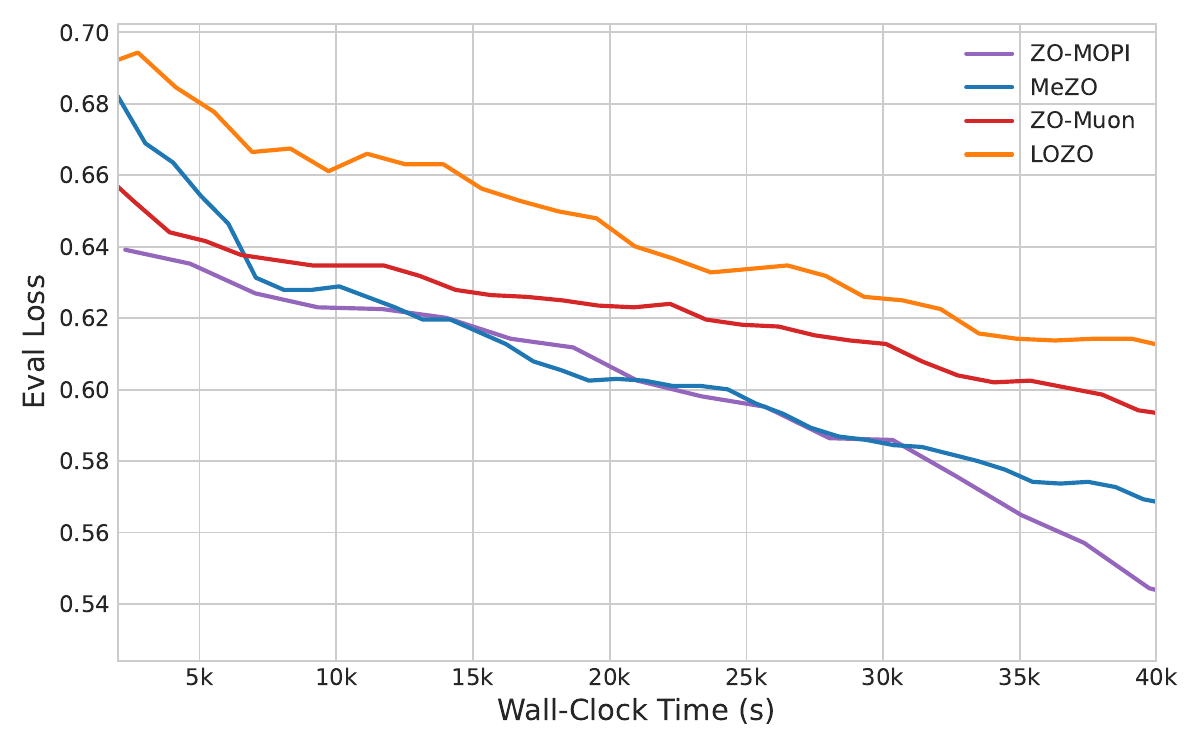} &
        \includegraphics[width=0.4\linewidth]{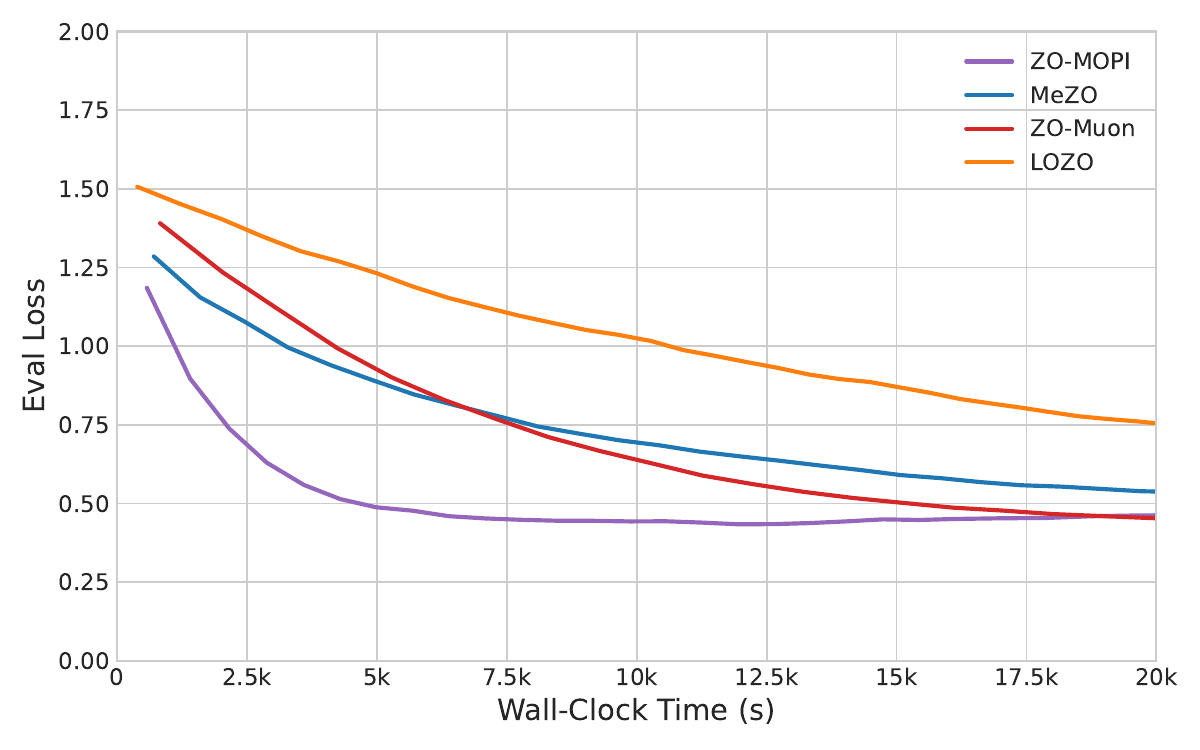} \\
        \small (c) BoolQ & \small (d) SQuAD
    \end{tabular}
    \caption{OPT-13B evaluation loss curves across four SuperGLUE tasks. Each panel reports evaluation loss versus wall-clock time.}
    \label{fig:opt13b_eval_loss}
\end{figure}

\begin{figure}[t]
    \centering
    \setlength{\tabcolsep}{1pt}
    \begin{tabular}{@{}cc@{}}
        \includegraphics[width=0.4\linewidth]{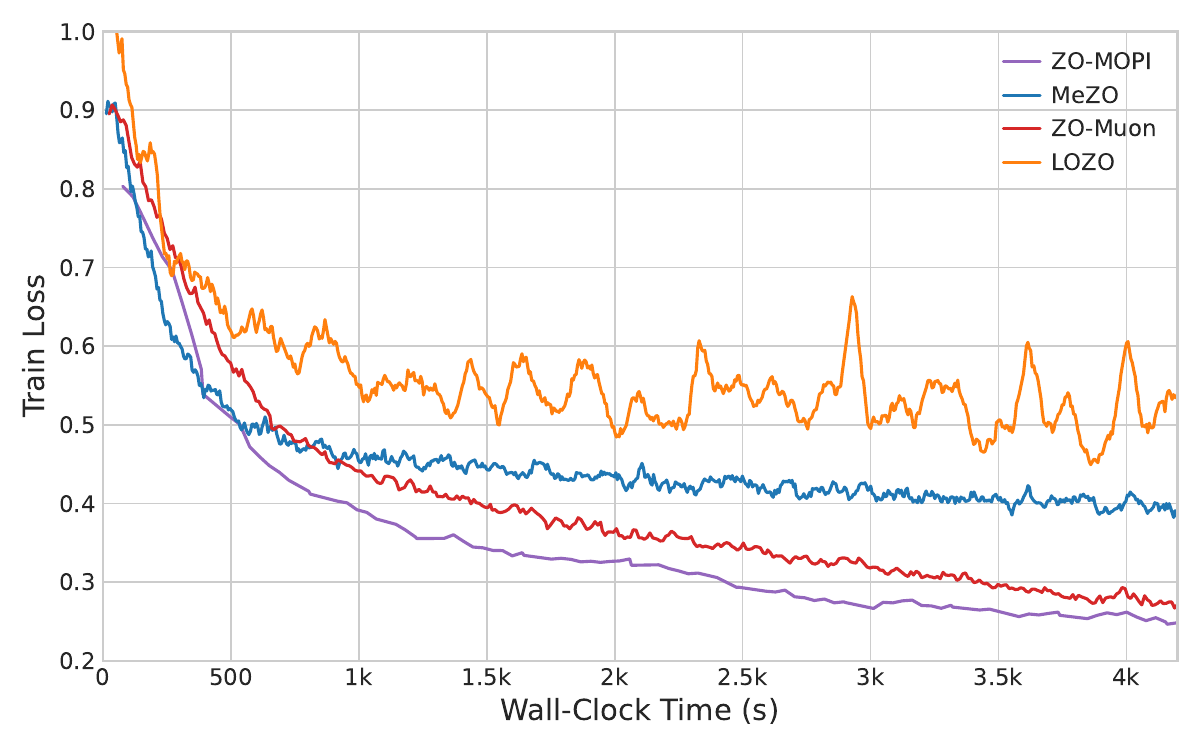} &
        \includegraphics[width=0.4\linewidth]{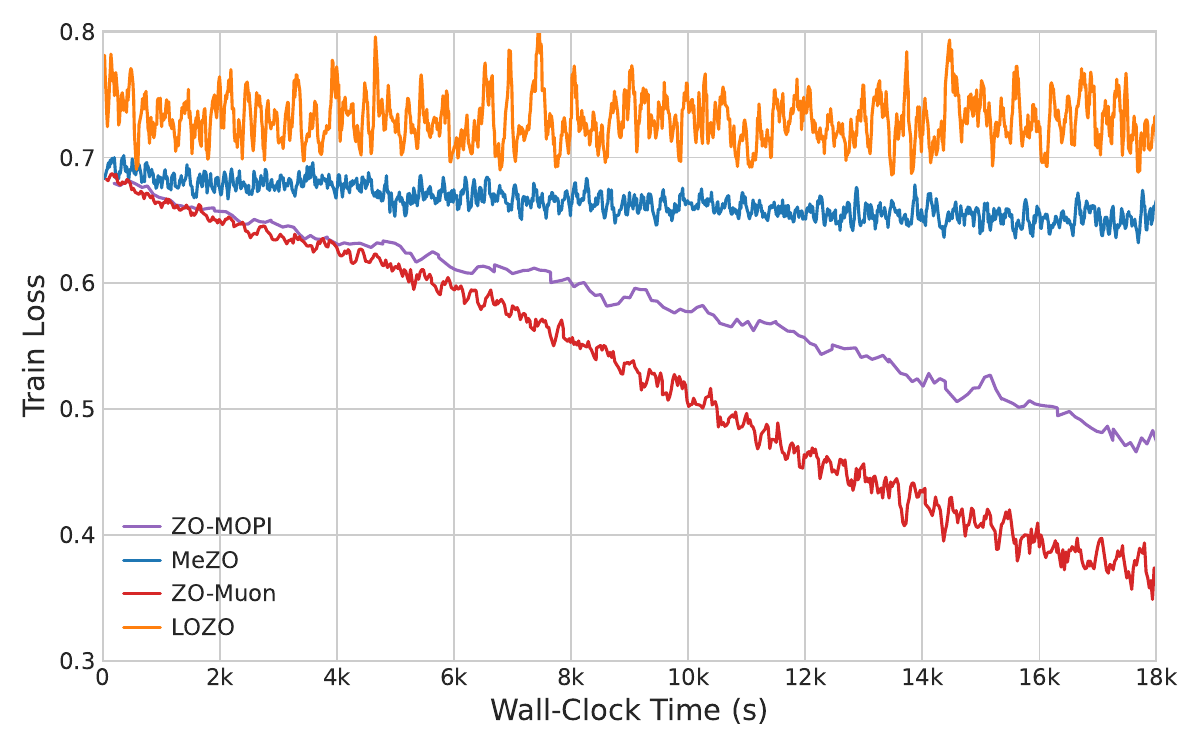} \\
        \small (a) SST-2 & \small (b) RTE \\[6pt]
        \includegraphics[width=0.4\linewidth]{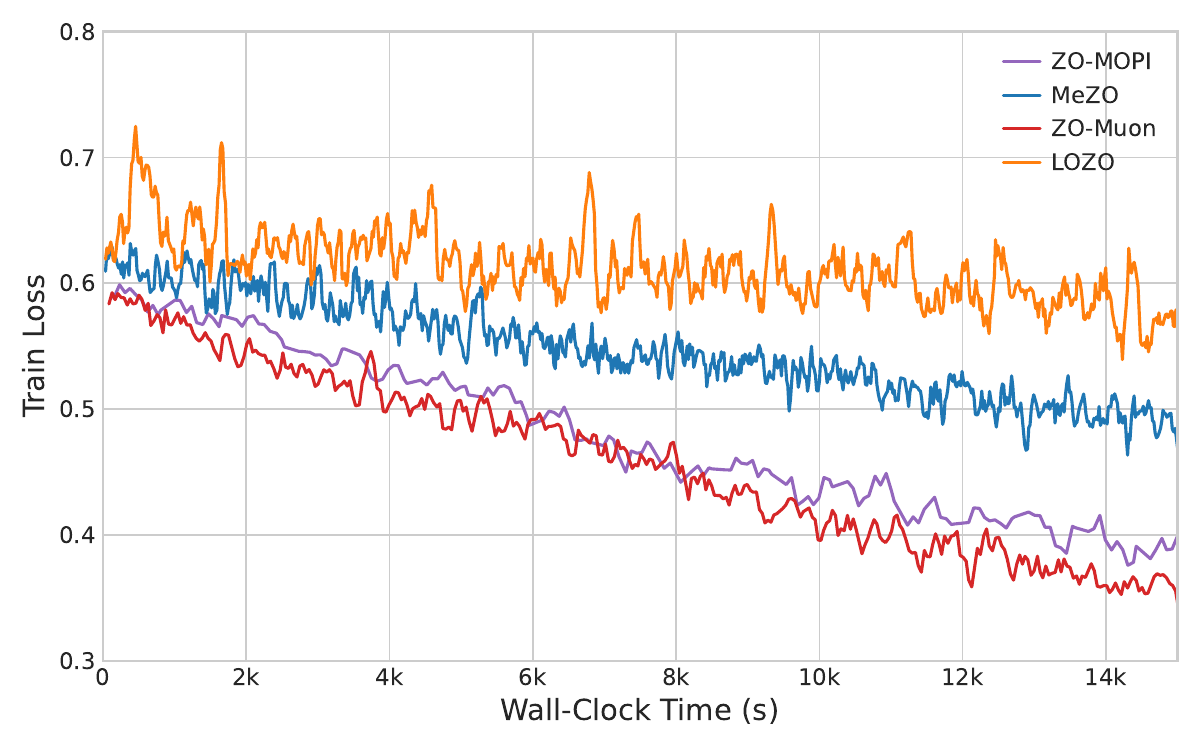} &
        \includegraphics[width=0.4\linewidth]{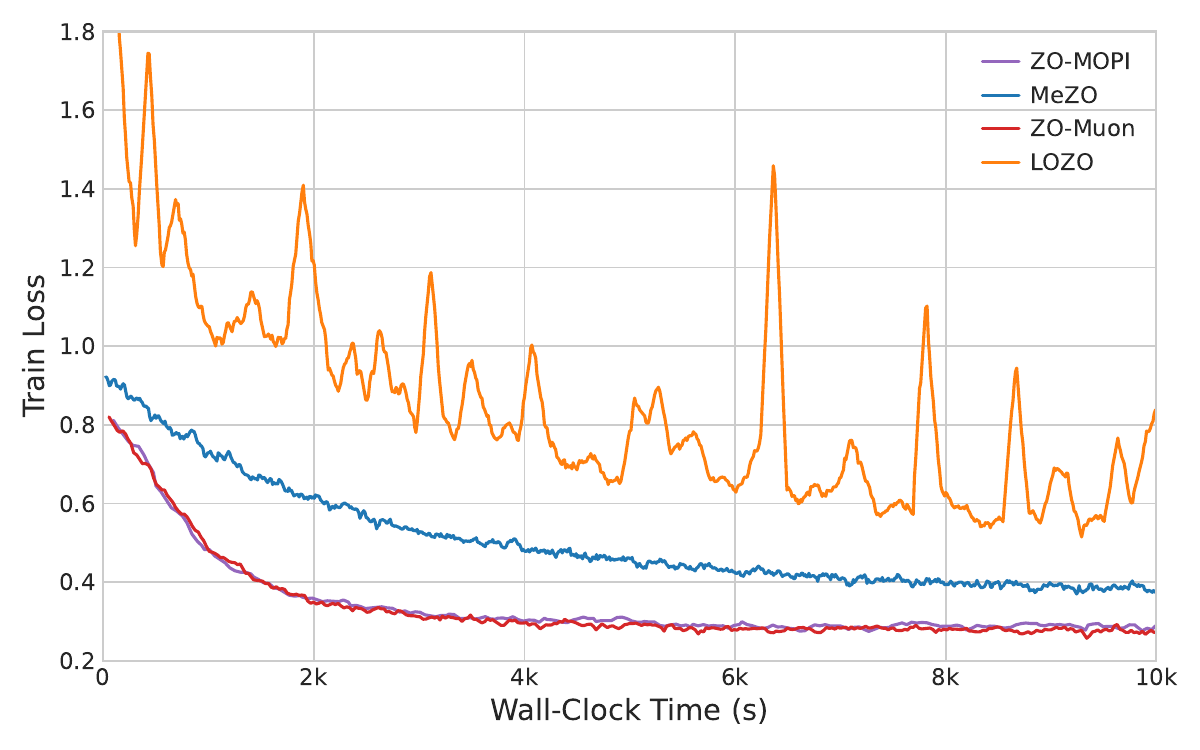} \\
        \small (c) BoolQ & \small (d) SQuAD
    \end{tabular}
    \caption{LLaMA3-8B training loss curves across four SuperGLUE tasks. Each panel reports training loss versus wall-clock time.}
    \label{fig:llama_train_loss}
\end{figure}

\begin{figure}[t]
    \centering
    \setlength{\tabcolsep}{1pt}
    \begin{tabular}{@{}cc@{}}
        \includegraphics[width=0.4\linewidth]{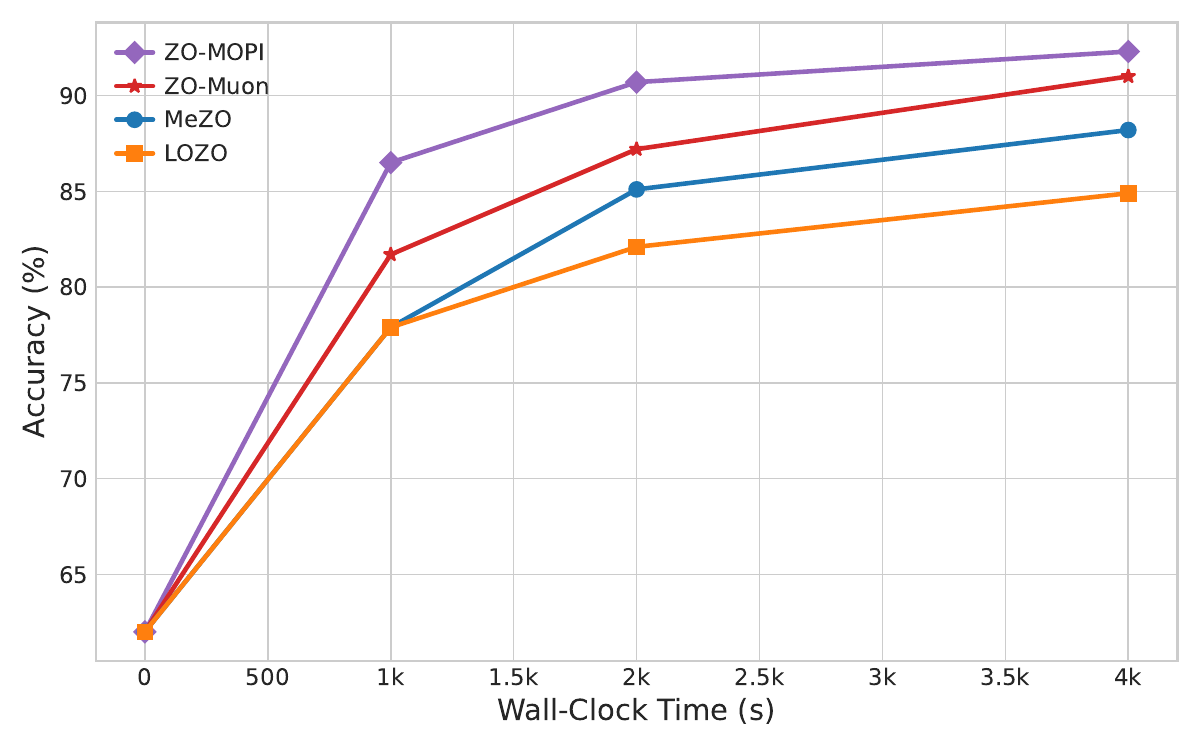} &
        \includegraphics[width=0.4\linewidth]{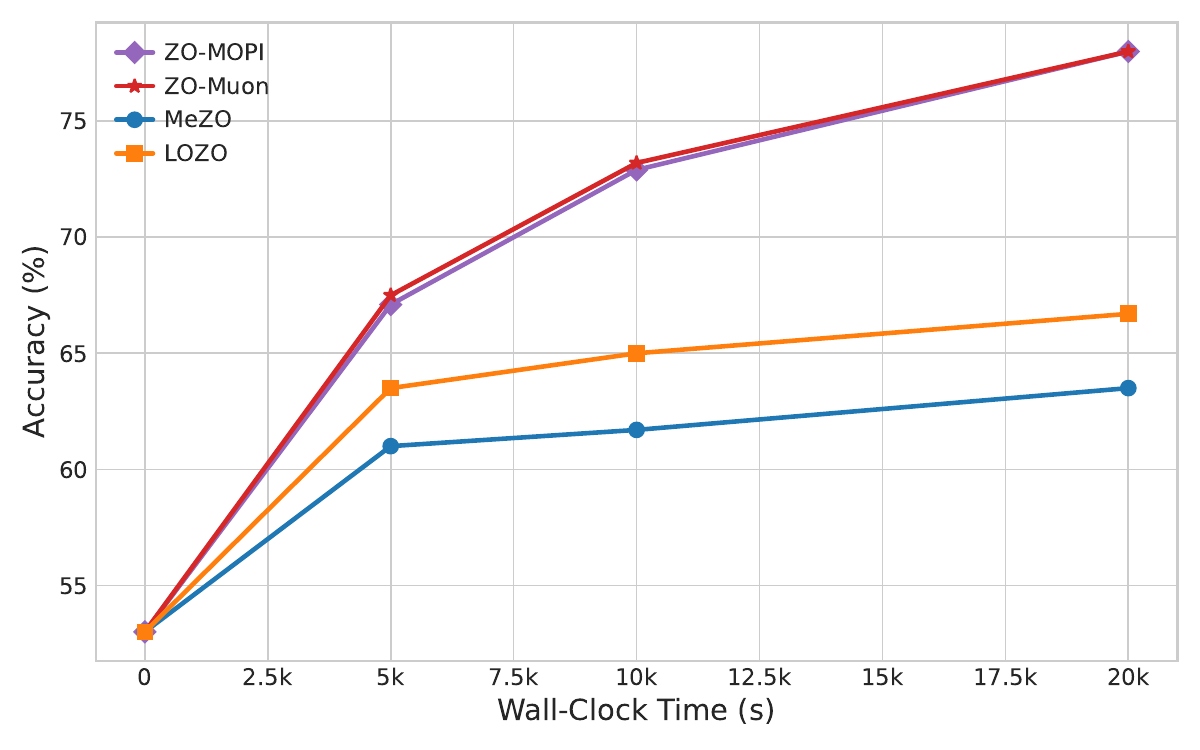} \\
        \small (a) SST-2 & \small (b) RTE \\[6pt]
        \includegraphics[width=0.4\linewidth]{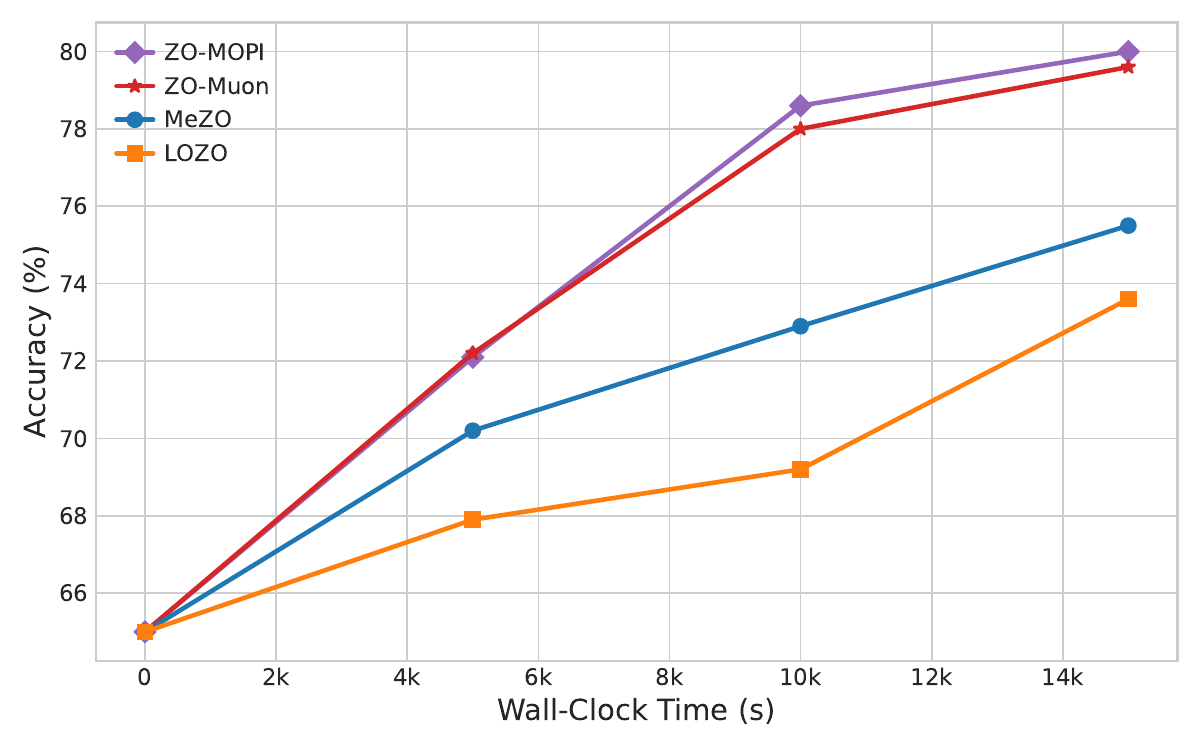} &
        \includegraphics[width=0.4\linewidth]{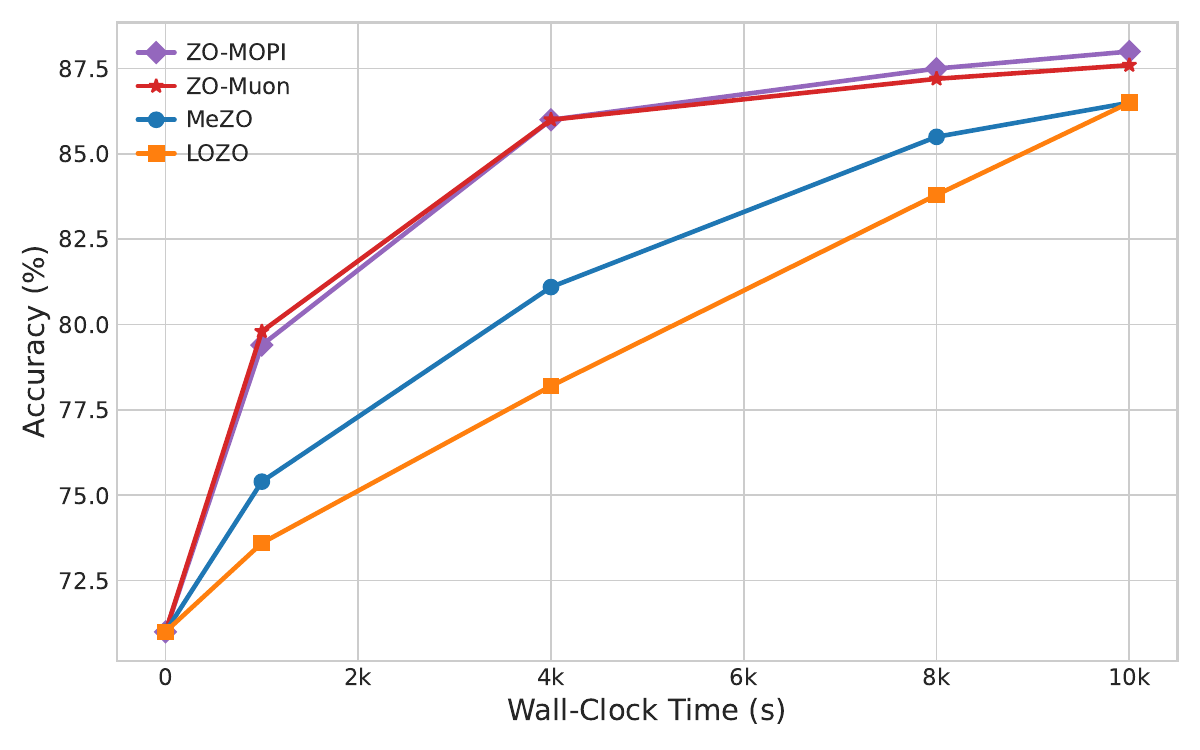} \\
        \small (c) BoolQ & \small (d) SQuAD
    \end{tabular}
    \caption{LLaMA3-8B accuracy curves across four SuperGLUE tasks. Each panel reports accuracy versus wall-clock time.}
    \label{fig:llama3_8b_acc}
\end{figure}

\begin{figure}[t]
    \centering
    \setlength{\tabcolsep}{1pt}
    \begin{tabular}{@{}cc@{}}
        \includegraphics[width=0.4\linewidth]{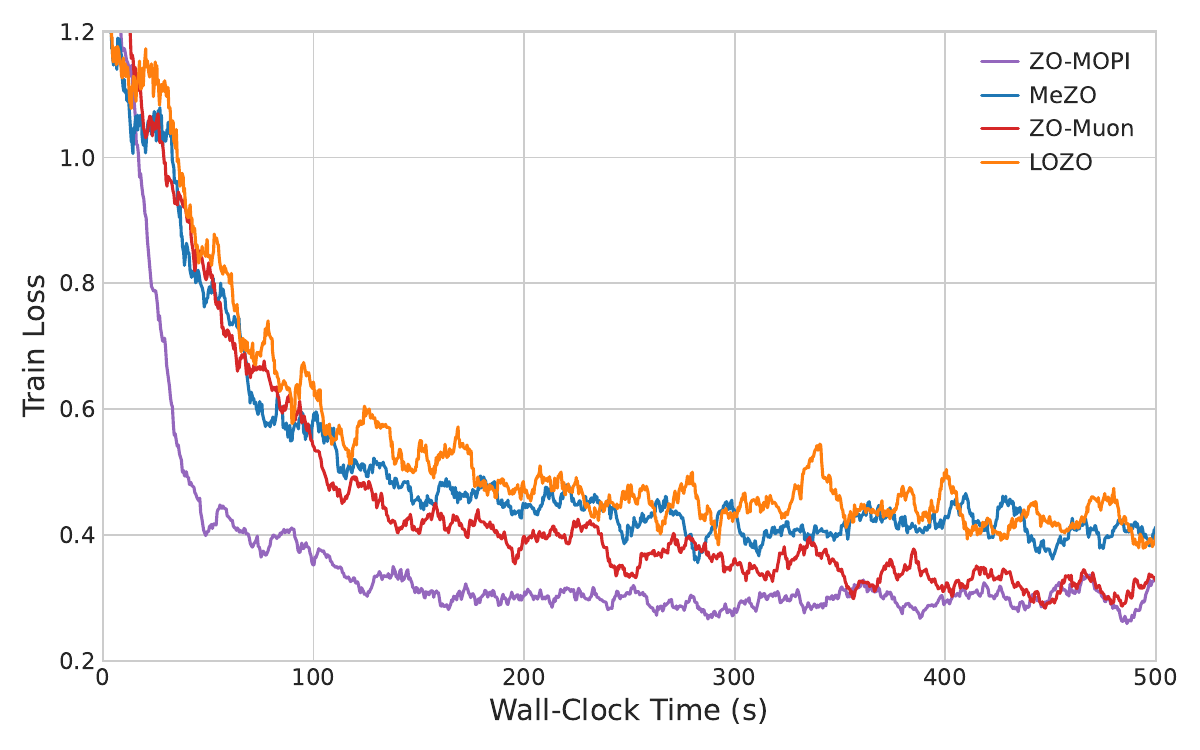} &
        \includegraphics[width=0.4\linewidth]{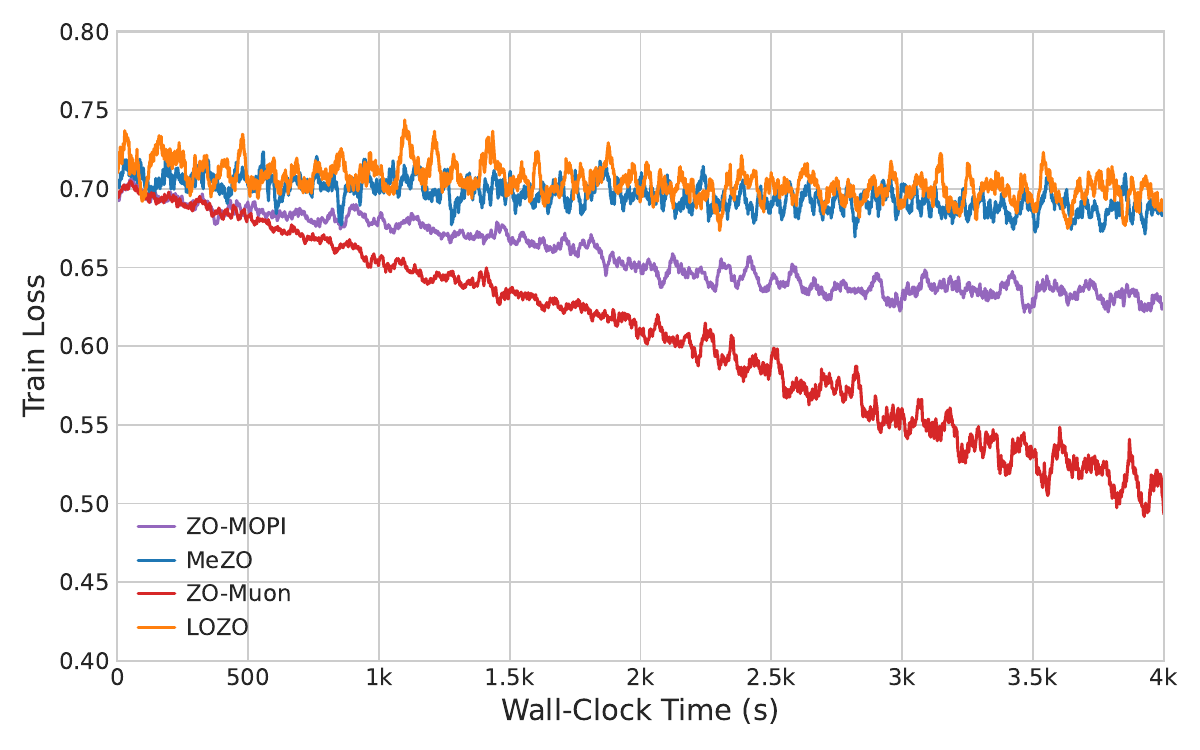} \\
        \small (a) SST-2 & \small (b) RTE \\[6pt]
        \includegraphics[width=0.4\linewidth]{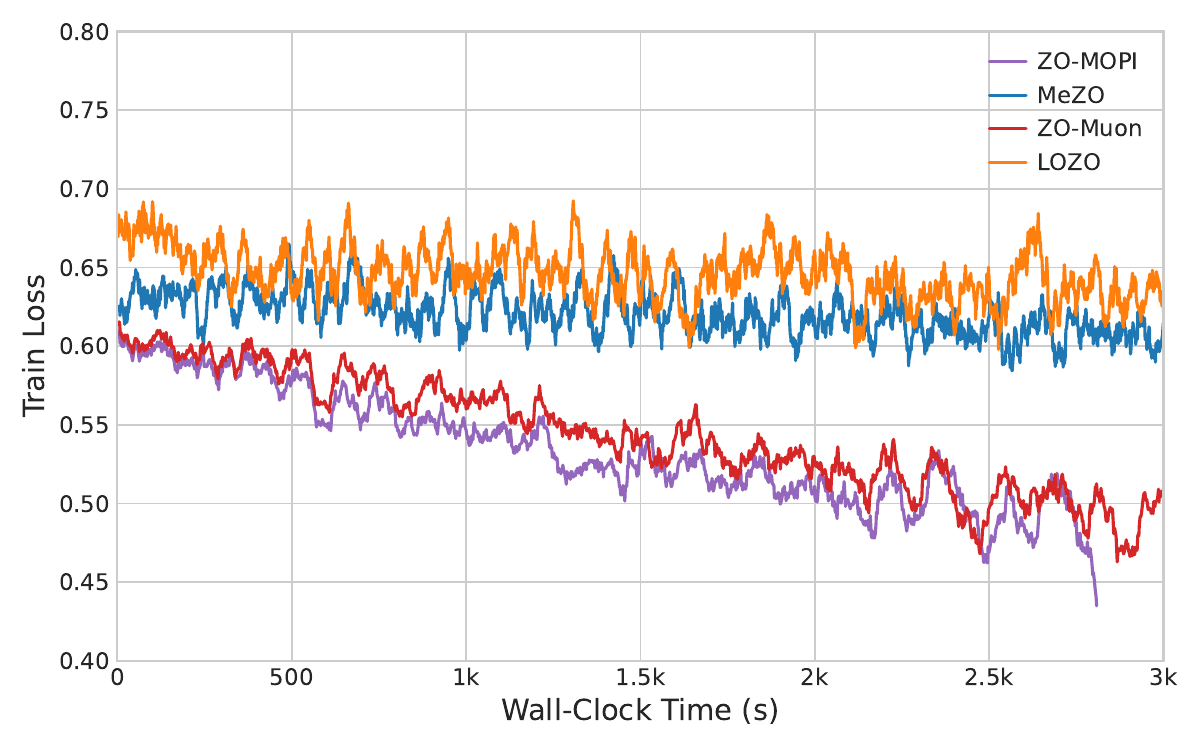} &
        \includegraphics[width=0.4\linewidth]{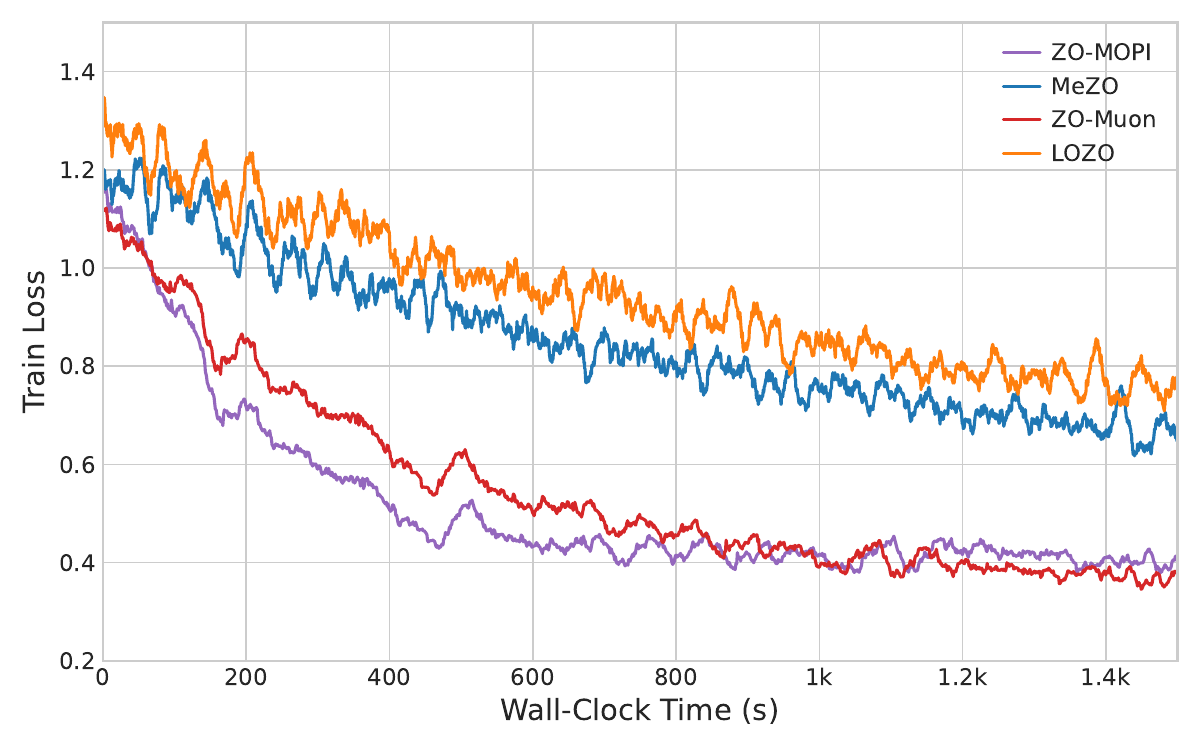} \\
        \small (c) BoolQ & \small (d) SQuAD
    \end{tabular}
    \caption{Gemma2-2B training loss curves across four SuperGLUE tasks. Each panel reports training loss versus wall-clock time.}
    \label{fig:gemma_train_loss}
\end{figure}
\begin{figure}[t]
    \centering
    \setlength{\tabcolsep}{1pt}
    \begin{tabular}{@{}cc@{}}
        \includegraphics[width=0.4\linewidth]{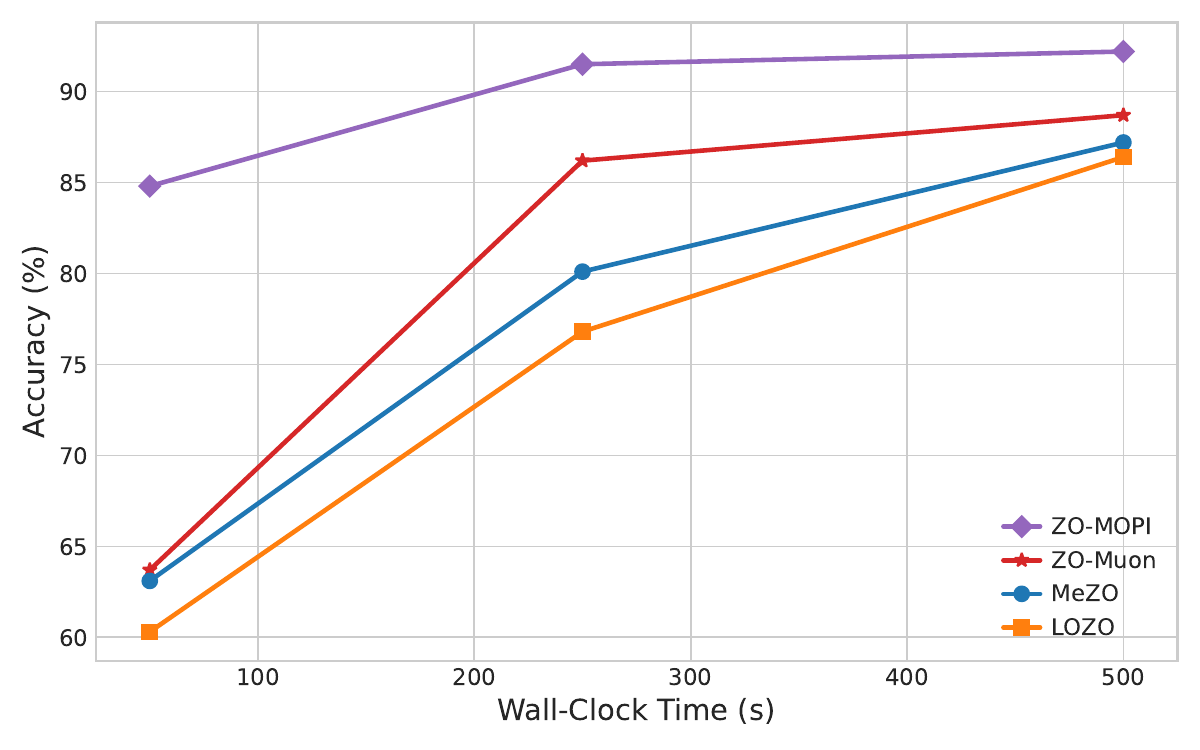} &
        \includegraphics[width=0.4\linewidth]{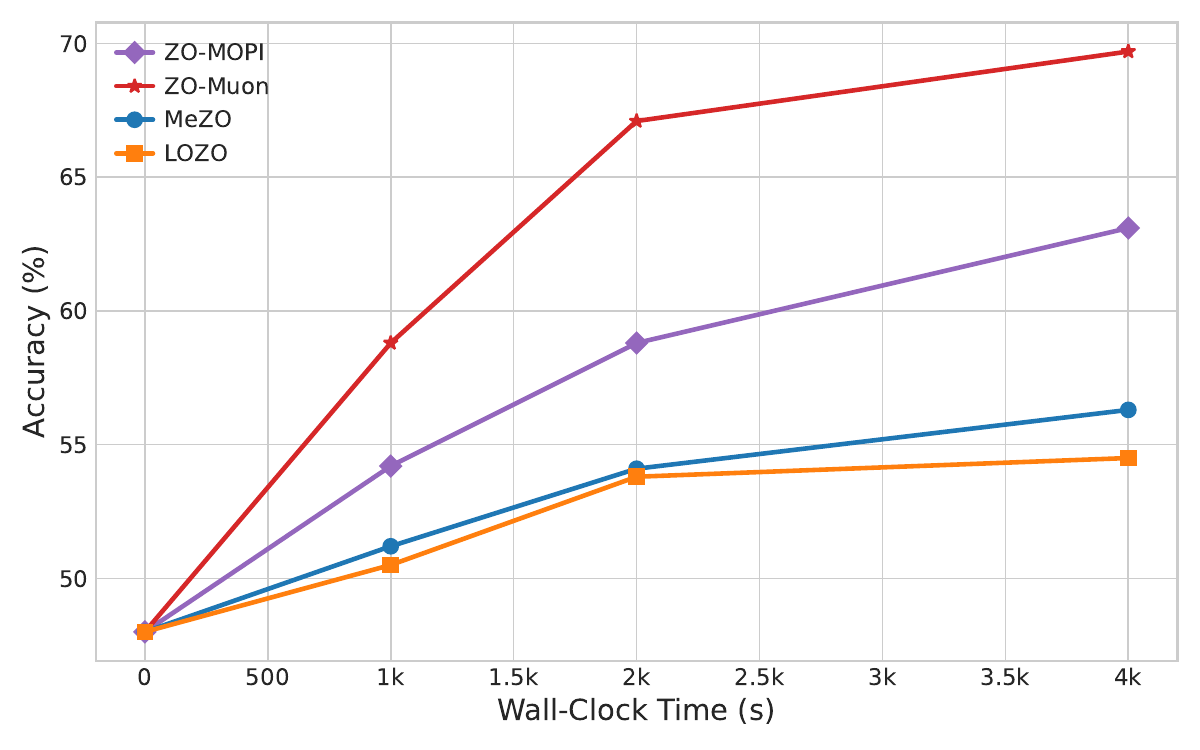} \\
        \small (a) SST-2 & \small (b) RTE \\[6pt]
        \includegraphics[width=0.4\linewidth]{figs/gemma_boolq_acc_vs_time.pdf} &
        \includegraphics[width=0.4\linewidth]{figs/gemma_squad_acc_vs_time.pdf} \\
        \small (c) BoolQ & \small (d) SQuAD
    \end{tabular}
    \caption{Gemma2-2B accuracy curves across four SuperGLUE tasks. Each panel reports accuracy versus wall-clock time.}
    \label{fig:gemma_acc}
\end{figure}

\begin{table*}[t]
    \centering
    \small
    \setlength{\tabcolsep}{8pt}
    \caption{Hyperparameter settings of ZO and FO methods used for fine-tuning LLMs.}
    \label{tab:appendix_hyperparameters}
    \begin{tabular}{lll}
        \toprule
        \textbf{Methods} & \textbf{Hyperparameters} & \textbf{Values} \\
        \midrule
        \multirow{2}{*}{Adam} & Learning rate & $\{5\mathrm{e}{-6}, 1\mathrm{e}{-5}\}$ \\
        & Epochs & 10 \\
        \midrule
        \multirow{4}{*}{LoRA} & Learning rate & $\{5\mathrm{e}{-6}, 1\mathrm{e}{-5}\}$ \\
        & Epochs & 10 \\
        & Rank $(r)$ & 8 \\
        & Alpha $(\alpha)$ & 16 \\
        \midrule
        \multirow{2}{*}{MeZO} & Learning rate & $\{1\mathrm{e}{-7}, 5\mathrm{e}{-7}, 1\mathrm{e}{-6}\}$ \\
        & Train steps & 20k \\
        \midrule
        \multirow{4}{*}{LOZO} & Learning rate & $\{1\mathrm{e}{-7}, 5\mathrm{e}{-7}, 1\mathrm{e}{-6}\}$ \\
        & Rank $(r)$ & $\{2, 4, 8\}$ \\
        & Interval $(\nu)$ & 100 \\
        & Train steps & 20k \\
        \midrule
        \multirow{6}{*}{ZO-Muon} & Learning rate & $1\mathrm{e}{-2}$ \\
        & Rank $(r)$ & $\{64, 128\}$ \\
        & Interval $(\nu)$ & 100 \\
        & Train steps $(N_q = 4)$ & 8k \\
        & Train steps $(N_q = 8)$ & 4k \\
        \midrule
        \multirow{7}{*}{\textbf{ZO-MOPI}} & Learning rate & $\{5\mathrm{e}{-3}, 1\mathrm{e}{-2}\}$ \\
        & Rank $(r)$ & 64 \\
        & Spectral rank $(k)$ & 32 \\
        & Interval $(\nu)$ & 500 \\
        & Train steps $(N_q = 4)$ & 8k \\
        & Train steps $(N_q = 8)$ & 4k \\
        & Train steps $(N_q = 16)$ & 2k \\

        \bottomrule
    \end{tabular}
\end{table*}

\paragraph{Models and Datasets}
For the language model fine-tuning experiments, we consider three representative backbone models: Gemma2-2B~\citep{team2024gemma}, LLaMA3-8B~\citep{grattafiori2024llama}, and OPT-1.3B/13B~\citep{zhang2022opt}. We evaluate them on the SuperGLUE benchmark~\citep{wang2019superglue}, using SST-2~\citep{socher2013recursive}, RTE~\citep{dagan2005pascal},  BoolQ~\citep{clark2019boolq}, and SQuAD~\citep{rajpurkar2016squad}. Following common practice in prior studies on ZO LLM fine-tuning~\citep{malladi2023fine, chen2024enhancing, yu2025zeroth, zhao2024second,lang2026powering}, we construct each run by randomly selecting 1,000 training examples and 1,000 test examples. To keep the comparison aligned with earlier baselines, we use the same prompt templates as MeZO~\citep{malladi2023fine}. All experiments are conducted on a server equipped with four NVIDIA RTX PRO 6000 Blackwell GPUs, each with 96GB memory, under CUDA 13.0. The runtime and memory measurements reported in the efficiency comparison are obtained on the same hardware platform.

\paragraph{Implementation Details}
 Our method, as well as baselines ZO-Muon and LOZO treat model parameters as matrices instead of vectors. Following Muon~\citep{jordan2024muon}, we flatten the last three dimensions of 4D convolutional parameters and treat them as 2D matrices for optimization. Vector parameters, as well as the input and output layers, are optimized with MeZO.

\paragraph{Hyperparameters}
We use hyperparameter settings that are largely aligned with existing literature. In particular, all LLM experiments use a batch size of 16 and a constant learning rate without weight decay. For MeZO and LOZO, we train for 20k steps and fix the perturbation stepsize $\mu=10^{-3}$~\citep{malladi2023fine,chen2024enhancing,lang2026powering}. For our method, we also fix the perturbation stepsize $\mu=10^{-3}$. For algorithmic hyperparameters like $r$ in subspace methods~\citep{chen2024enhancing,lang2026powering}, we strictly follow the original baseline implementations. Because we use multi-point ZO estimation in ZO-Muon and our method, we scale the number of training steps inversely with the number of queries per iteration (e.g. 8k steps for $N_q=4$ and 4k steps for $N_q=8$) to ensure the same total query budget with other baselines. For LoRA~\citep{malladi2023fine}, we follow common settings in prior work and use rank $r=8$ with scaling factor $\alpha=16$. Detailed settings for both ZO and FO methods are summarized in Table~\ref{tab:appendix_hyperparameters}.

\paragraph{Additional Experiment Results}
Figures~\ref{fig:opt13b_train_loss} through \ref{fig:gemma_train_loss} illustrate the training dynamics of OPT-13B, LLaMA3-8B, and Gemma2-2B on SuperGLUE~\citep{wang2019superglue,zhang2022opt,team2024gemma,grattafiori2024llama}. Notably, the accuracy-vs-time curves demonstrate a much steeper convergence trend than existing ZO baselines across most tasks, which closely aligns with the wall-clock speedups in Figure~\ref{fig:opt13b_acc_vs_time} and the quantitative results in Section~\ref{sec:experiments}. These results underscore our method's potential to significantly improve fine-tuning efficiency in resource-constrained environments.

\end{document}